\newcommand{\std}[1]{\textcolor{black}{\scriptsize{$\pm #1$}}}
\title{Energy-based generator matching: \\A neural sampler for general state space}
\author{
Dongyeop Woo$^{1}$\thanks{Correspondence to: \texttt{dongyeop.woo@kaist.ac.kr}}\hspace{.2in}
Minsu Kim$^{1, 2}$\hspace{.2in}
Minkyu Kim$^{1}$\hspace{.2in}
Kiyoung Seong$^{1}$\hspace{.2in}
Sungsoo Ahn$^{1}$\\
\\
$^1$Korea Advanced Institute of Science and Technology (KAIST) \quad $^2$Mila - Quebec AI Institute \\
}
\newtheorem{theorem}{Theorem}
\newtheorem{proposition}{Proposition}
\begin{document}

\maketitle

\begin{abstract}
We propose \textit{Energy-based generator matching (EGM)}, a modality-agnostic approach to train generative models from energy functions in the absence of data. Extending the recently proposed generator matching, EGM enables training of arbitrary continuous-time Markov processes, e.g., diffusion, flow, and jump, and can generate data from continuous, discrete, and a mixture of two modalities. To this end, we propose estimating the generator matching loss using self-normalized importance sampling with an additional bootstrapping trick to reduce variance in the importance weight. We validate EGM on both discrete and multimodal tasks up to 100 and 20 dimensions, respectively.

\end{abstract}

\section{Introduction} \label{sec:intro}
We tackle the problem of drawing samples from a Boltzmann distribution $p_{\text{target}}(x) \propto \exp(-\mathcal{E}(x))$ given only oracle access to the energy function $\mathcal{E}(x)$ and no pre-computed equilibrium samples. Such ``energy-only'' inference arises throughout machine learning, e.g., Bayesian inference~\citep{gelman1995bayesian} and statistical physics, e.g., computing thermodynamic averages \citep{newman1999monte}, yet it remains intractable in high-dimensional or combinatorial state spaces. Classical methods based on Markov processes, e.g., Metropolis–Hastings \citep{metropolis1953equation, hastings1970monte}, Gibbs sampling \citep{geman1984stochastic}, and Metropolis-adjusted Langevin algorithm \citep{grenander1994representations, roberts1996exponential}, provide asymptotically exact guarantees, but at the cost of long mixing times. In practice, one must run chains for a vast number of steps to traverse metastable regions, and each transition incurs an expensive energy evaluation, limiting applicability to large-scale systems \citep{robert1999monte}.

Deep generative models, particularly diffusion and flow models tied to continuous-time processes~\citep{song2020score, lipmanflow2022}, offer a compelling alternative: after training, a cheap constant-cost network evaluation can produce independent samples. Once trained, they do not require energy queries for inference. These models show great success in vision \citep{rombach2022high}, language \citep{li2022diffusion}, and audio \citep{kong2020diffwave}, and can be transferred to unseen conditions. However, their success critically depends on data-driven training on a large number of true equilibrium samples, which are unavailable when only an energy function is known. 

In response, researchers have designed new energy-driven algorithms, coined diffusion samplers, to train diffusion-based generative models to sample from the Boltzmann distribution. For example, path integral samplers (PIS) \citep{zhang2021path} and denoising diffusion samplers (DDS) \citep{vargas2023denoising} minimize the divergence between a forward SDE and the backward SDE defined by a Brownian bridge. \citet{akhound2024iterated} proposed iterated denoising energy matching (iDEM), a simulation-free approach to train diffusion models, highlighting the expensive simulation procedure for acquiring new samples from the diffusion-based models. Generative flow networks (GFNs) \citep{bengio2021flow, lahlou2023theory}, originating from reinforcement learning, can also be interpreted as a continuous-time Markov process (CTMP) in the limit \citep{berner2025discrete}.

However, there still exists a gap between the data- and energy-driven training schemes, particularly on the choice of state spaces and CTMP. Especially, \citet{holderrieth2024generator} recently proposed generator matching (GM), which allows unified data-driven training of continuous-time processes ranging from flow, diffusion, and jump processes for both continuous and discrete state spaces. This allows training a generative model for hybrid data, e.g. amino acid sequences and 3D coordinates of a protein \cite{campbell2024generative}, and greatly expands the available space of parameterization.

\definecolor{myred}{RGB}{215,48,39}
\definecolor{mygreen}{RGB}{26,152,80}
\newcommand{\yes}{\textcolor{green}{\faCheck}}
\newcommand{\no}{\textcolor{red}{\faTimes}}

\begin{wraptable}{r}{0.44\textwidth}
\centering
\vspace{-1.2em}
\caption{Comparison of sampling methods by state space type and underlying Markov process. ``D'', ``F'', ``J'', ``J (D)'' denotes the diffusion, flow, continuous jump, and  discrete jump process respectively.}
\scalebox{0.72}{%
\centering
\begin{tabular}{lcccc}
\toprule
\textbf{Method} & \textbf{D} & \textbf{F} & \textbf{J} & \textbf{J (D)} \\

\midrule
PIS~\citep{zhang2021path}, iDEM~\citep{akhound2024iterated},
CMCD~\citep{vargas2023transport}
& \yes & \no & \no & \no \\

iEFM~\citep{woo2024iterated}, LFIS~\citep{tian2024liouville},
NFS\textsuperscript{2}~\citep{chen2025neural}
& \no & \yes & \no & \no \\
LEAPS~\citep{holderrieth2025leaps}
& \no & \no & \no & \yes \\

\textbf{EGM (ours)}
& \yes & \yes & \yes & \yes \\

\bottomrule
\end{tabular}%
}
\label{tab:comparison}
\vspace{-1.5em}
\end{wraptable}

\textbf{Contribution.} We propose energy-based generator matching (EGM), an energy-driven training framework for continuous-time Markov processes parameterized by a neural network. EGM accommodates continuous, discrete, and mixed state spaces and applies to various process types, including diffusion-, flow-, and jump-based models. Our work greatly expands the scope of energy-driven training for continuous-time neural processes (See \cref{tab:comparison}). 

To this end, our EGM estimates the generator matching loss with self-normalized importance sampling, which requires approximating samples from the target distribution. However, this is challenging due to the high variance in importance weights. To alleviate this issue, we introduce bootstrapping, which allows the generator to be estimated using easy-to-approximate samples from the nearby future time step. It reduces the variance of importance weights, significantly boosting the sampler's performance.

We validate our work through experiments on various target distributions: discrete Ising model and three joint discrete-continuous tasks, i.e., the Gaussian-Bernoulli restricted Boltzmann machine~(GB-RBM)~\citep{cho2011improved}, joint double-well potential (JointDW4), and joint mixture of Gaussians (JointMoG). Our findings demonstrate that the training algorithm enables parameterized CTMP to learn the desired distribution and is scalable to reasonably-sized problems.

\section{Preliminary: Generator matching on general state space}
\label{sec:prelim}
In this section, we provide preliminaries on generator matching (GM) \citep{holderrieth2024generator}, which allows generative modeling using arbitrary continuous-time Markov processes (CTMP). 

\textbf{Notation.} Let $S$ denote the state space with a reference measure $\nu$. We let $p_{\text{target}}$ denote the target distribution with samples $x_1\sim p_{\text{target}}$. We denote a probability measure $p$ on $S$ by $p(dx)$, where ``$dx$'' indicates integration with respect to $p$ in the variable $x$. If $p$ admits a density, and when no confusion arises, we use $p$ both for the measure $p(dx)$ and its density $p(x):= \frac{dp}{d\nu}(x)$ with respect to $\nu$.

\textbf{Overview of generator matching.} 
GM relies on a set of time-varying probability distributions $(p_{t|1}(dx| x_1))_{0\leq t \leq 1}$ depending on a data point $x_1\in S$, coined a conditional probability path. This induces a corresponding marginal probability path $(p_t)_{0\leq t \leq 1}$ via the hierarchical sampling procedure:
\begin{equation}
    x_1 \sim p_{\text{target}},\; x \sim p_{t|1}(dx | x_1) \quad \Rightarrow \quad x \sim p_{t}(dx).
\end{equation}
GM trains a Markov process $(X_t^\theta)_{0 \leq t \leq 1}$, parameterized by a neural network, to match the marginal probability path $p_t$. After training, the induced probability path of the Markov process aligns with $p_t$, enabling sampling from $p_{\text{target}}=p_1$ by simulating the trained process $X_t^\theta$.

\textbf{Generators.}
A generator characterizes a CTMP through the expected change of a test function \( f: S \rightarrow \mathbb{R} \) in an infinitesimal time frame. It is a linear operator defined as:
\begin{equation}
    \mathcal{L}_{t}f(x) 
    = \left. \frac{d}{dh} \right\rvert_{h=0} \mathbb{E}_{X_{t+h} \sim p_{t+h|t}(\cdot|x)}[f(X_{t+h})]
    = \lim_{h \rightarrow 0} \frac{\mathbb{E}_{X_{t+h} \sim p_{t+h|t}(\cdot|x)}[f(X_{t+h})] - f(x)}{h},
\end{equation}
where $p_{t+h|t}$ denotes the transition kernel of Markov process $X_t$. If two Markov processes \( X^1 \) and \( X^2 \) have identical generators \( \mathcal{L}_t^1 \) and \( \mathcal{L}_t^2 \), the processes are equivalent.

\textbf{Linear parametrization of generators.} For commonly used Markov processes, e.g., diffusion, flow, and jump processes, the generator can be linearly parameterized as $\mathcal{L}_tf(x)=\langle \mathcal{K}f(x), F_t(x) \rangle_V$ 
where $\mathcal{K}$ is an operator fixed for each type of Markov process, $V$ is a vector space, and $F_t(x) \in V$ is the parameterization. For example, $F_t = u_t$ for flow model with ODE $dx = u_{t}(x)dt$, $F_t = \sigma_t^2$ for diffusion with SDE $dx = \sigma_{t}(x)dw$, or $F_t=Q_t$ for discrete jump with transition rate matrix $Q_t(y,x)$ described by the Kolmogorov equation $\partial_tp_t(y)=\sum_{y\in S}Q_t(y,x)p_t(x)$. 

\textbf{Marginalization trick.} The key idea of GM is to express a marginal generator $\mathcal{L}_{t}$ that generates probability path $p_t$ by conditional generator $\mathcal{L}_{t|1}^{x_1}$ for the conditional probability path $p_{t|1}(\cdot|x_1)$. This leads to the expression of the parameterization $F_{t}$ for the generator $\mathcal{L}_{t}$ using parameterization $F_{t|1}^{x_1}$ of conditional generator $\mathcal{L}_{t|1}^{x_1}$. To be specific, the marginalization trick is expressed as:
\begin{equation}
    \label{eq:marginal_generator}
    \mathcal{L}_{t}f(x) = \mathbb{E}_{x_1\sim p_{1| t}(\cdot | x)} \big[\mathcal{L}_{t|1}^{x_1}f(x)\big], 
    \qquad 
    F_{t}(x) = \mathbb{E}_{x_1\sim p_{1|t}(\cdot|x)}\big[F_{t|1}^{x_1}(x)\big],
\end{equation}
where $p_{1|t}(dx_1| x)$ is the posterior distribution, i.e., the conditional distribution over data $x_1$ given an observation $x$ at time $t$. Intuitively, at any point $(x, t)$, the marginal generator $\mathcal{L}_t$ steers $x$ in the average direction of endpoints $x_1$ sampled from the target distribution.

\textbf{Conditional generator matching.} 
GM trains a neural network $F_t^\theta$ to approximate the parametrization $F_t$ of marginal generator $\mathcal{L}_t$ using a Bregman divergence $D:V \times V \to \mathbb{R}_{\ge 0}$:
\vspace{-0.3em}
\begin{equation}
    \label{eq:generator_matching}
    L_{\text{GM}}(\theta) = \mathbb{E}_{t \sim \text{Unif},\, x_t\sim p_{t}}\big[D\bigl(F_t(x_t), F_t^{\theta}(x_t)\bigr)\big],
\end{equation}
which is hard to minimize since $F_t$ is intractable to estimate. Instead, one can minimize the conditional generator matching (CGM) loss expressed as follows:
\vspace{-0.3em}
\begin{equation}
    L_{\text{CGM}}(\theta) = \mathbb{E}_{t \sim \text{Unif},\, x_1 \sim p_{\text{target}},\, x_t\sim p_{t|1}(\cdot | x_1)}\big[D\bigl(F_{t|1}^{x_1}(x_t), F_{t}^{\theta}(x_t)\bigr)\big].
\end{equation}
One can derive that $\nabla_{\theta} L_{\text{GM}}(\theta) = \nabla_{\theta} L_{\text{CGM}}(\theta)$, and gradient-based minimization of the CGM loss is equivalent to that of the GM loss.

\section{Energy-based generator matching}
\label{sec:methods}

\begin{figure}
  \begin{subfigure}[b]{0.25\textwidth}
    \centering
    \includegraphics[width=\textwidth]{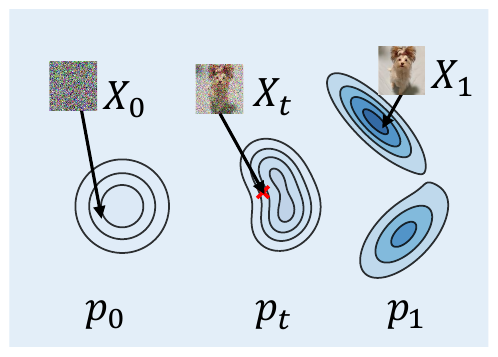}
    \caption{Probability path}
  \end{subfigure}%
  \begin{subfigure}[b]{0.25\textwidth}
    \centering
    \includegraphics[width=\textwidth]{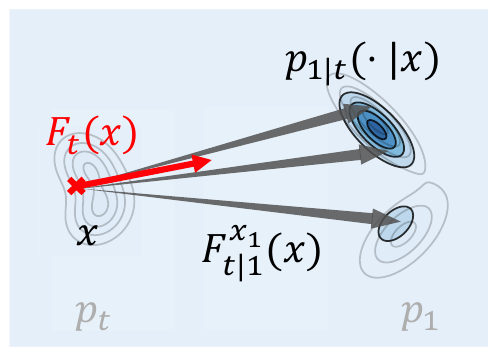}
    \caption{Generator matching}
  \end{subfigure}%
  \begin{subfigure}[b]{0.25\textwidth}
    \centering
    \includegraphics[width=\textwidth]{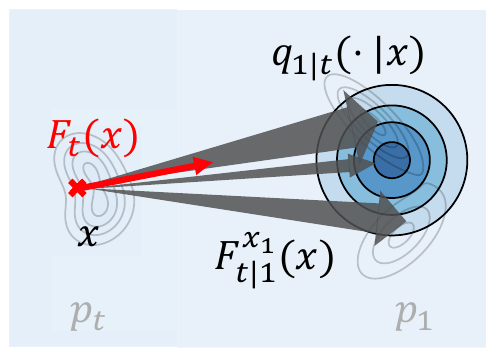}
    \caption{EGM}
  \end{subfigure}%
  \begin{subfigure}[b]{0.25\textwidth}
    \centering
    \includegraphics[width=\textwidth]{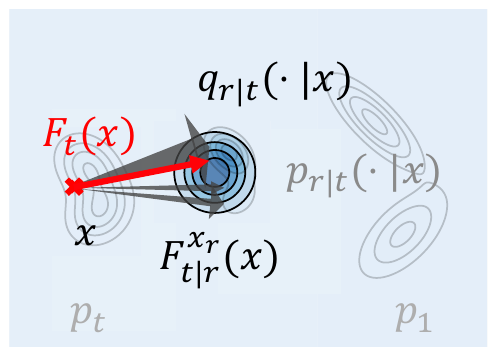}
    \caption{EGM w/ bootstrapping}
  \end{subfigure}
  \caption{Overview of energy-based generator matching (EGM). (a) The target probability path that interpolates between the prior and the target distribution; we aim to estimate the $F_{t}(x)$ as a (weighted) average of conditional generators. (b) GM draws $x_1 \sim p_{1|t}(\cdot|x)$  with uniformly weighted $F_{t|1}^{x_{1}}(x)$. (c) EGM draws $x_1 \sim q_{1|t}(\cdot|x)$ with importance weighted $F_{t|1}^{x_{1}}(x)$. (d)~EGM w/ bootstrapping draws~$x_r \sim q_{r|t}(\cdot|x)$ with importance weighted $F_{t|r}^{x_r}(x)$.}
  \label{fig:method_overview}
  \vspace{-1.5em}
\end{figure}

In this section, we introduce energy-based generator matching (EGM), the first method for learning neural samplers on general state spaces via continuous-time Markov processes. EGM extends the GM framework to energy-driven training by estimating the marginal generator with self-normalized importance sampling (SNIS). To this end, we define a variant of GM loss $L_{\text{GM}}$ in \cref{eq:generator_matching}:
\begin{equation}
    L_{\text{EGM}}(\theta)=\mathbb{E}_{t \sim \text{Unif},\, x_t\sim p^{\text{ref}}_{t} }\big[D\big(\hat{F}_t(x_t), F_t^\theta(x_t)\bigl)\big],
\end{equation}
where $\hat{F}_t$ is the energy-driven estimator of the parametrization $F_t$ and $p^{\text{ref}}_t$ is a reference distribution whose support covers the probability path $p_t$. Note that, when $\hat{F}_{t}=F_{t}$, minimizing $L_{\text{EGM}}$ guarantees that the learned sampler $F_t^\theta$ recovers samples from the target distribution $p_{\text{target}}$.

We first present our scheme to compute the estimator $\hat{F}_{t}$, then describe a bootstrapping technique for variance reduction of the importance weight, and finally provide the training algorithm. \cref{fig:method_overview} provides a visual overview of these estimators. We also include examples demonstrating how EGM accommodates diffusion, flow, and jump processes.\footnote{We include the derivation and example in \cref{appx:multimodal_derivation} for mixed state space.}

\subsection{Marginal generator estimation via self-normalized importance sampling}
\label{sec:SNIS_estimator}

\textbf{Problem setup.} We consider the sampling problem from a density\footnote{Though density may not be defined in general, it is given a priori in the sampling problem.} $p_{1}(x)=\tilde{p}_{1}(x)/Z$ given only the unnormalized density $\tilde{p}_{1}(x)=\exp(-\mathcal{E}_1(x))$ with intractable partition function $Z=\int \tilde{p}_{1}(dx)$. Following GM, our goal is to train a neural network $F_t^\theta$ to approximate the parametrization $F_t$ of the marginal generator $\mathcal{L}_t$ in \cref{eq:marginal_generator}.

\textbf{SNIS estimation of the generator.} First, consider the following importance sampling estimator:
\begin{equation}
    \label{eq:IS_generator}
    \mathcal{L}_{t}f(x) = \mathbb{E}_{x_1\sim q_{1|t}(\cdot|x)} \left[
    w(x, x_1)\mathcal{L}_{t|1}^{x_1}f(x)\right], 
    \quad 
    w(x, x_1) := \frac{p_{1|t}(x_1 | x)}{q_{1|t}(x_1| x)}=\frac{\tilde{p}_1(x_1)p_{t|1}(x|x_1)}{Zp_t(x)q_{1|t}(x_1| x)},
\end{equation}
where $p_{1|t}$ is the target posterior and $q_{1|t}$ is a proposal kernel. We avoid computing intractable $Zp_t(x)$ for the importance weights by using SNIS scheme:
\begin{equation}
    \label{eq:self_normalized_weight}
    \tilde w(x, x_1):= \frac{\tilde p_{1}(x_1) p_{t|1}(x|x_1)}{q_{1|t}(x_1|x)},
    \quad
    w(x, x_1)=\frac{\tilde{w}(x, x_1)}{Zp_t(x)} = \frac{\tilde w(x, x_1)}{\mathbb{E}_{x_1 \sim {q_{1|t}(\cdot|x)}}[\tilde w(x, x_1)]},
\end{equation}
where $\tilde w$ is the unnormalized importance weight that is tractable to compute. In this way, one can compute the self-normalized weight without knowing the normalization constant $Zp_t(x)$. Furthermore, when we use the same samples for estimating the generator and normalization constant, we obtain a low-variance estimator at the cost of inducing a bias.

\textbf{Energy-based generator matching (EGM).} The linear parametrization of the generator admits the importance sampling expression $F_t(x)= \mathbb{E}_{x_1\sim q_{1|t}(\cdot|x)}\bigl[w(x, x_1)\,F_{t|1}^{x_1}(x)\bigr]$. Then the loss is:
\vspace{-0.5em}
\begin{equation}
    \label{eq:EGM}
    L_{\text{EGM}}(\theta)=\mathbb{E}_{t \sim \text{Unif},\, x_t\sim p^{\text{ref}}_{t} }\big[D\bigr(\hat{F}_t(x_t), F_t^\theta(x_t)\bigl)\big],
    \quad
    \hat F_t(x)
    = \frac{\sum_i \tilde w(x, x_1^{(i)})\,F_{t|1}^{x_1^{(i)}}(x)}
           {\sum_i \tilde w(x, x_1^{(i)})},
\end{equation}
where $x_1^{(i)}$ are sampled from the proposal $q_{1|t}(\cdot|x)$. We implement $\hat F_t$ using the \texttt{LogSumExp} trick, which provides a numerically stable estimator even in the low-density region.

\textbf{Example 1: Conditional optimal-transport (OT) path.} EGM can be applied to train a neural flow sampler driven by the conditional OT path \citep{lipmanflow2022}. In the flow model\footnote{For a detailed description of flow model and masked diffusion path, see~\cref{appx:detail_of_EGM}.}, the conditional probability path and its conditional velocity field are expressed by $p_{t|1}(x|x_1) = \mathcal{N}(x;tx_1,(1-t)^2\text{Id}) $ and $u_{t|1}(x|x_1) = \frac{x_1-x}{1-t}$ for $x, x_1\in \mathbb{R}^d$. Choosing the proposal $q_{1|t}(x_1|x) \propto p_{t|1}(x|x_1)$, one obtains,
\vspace{-0.5em}
\begin{equation}
    q_{1|t}(x_1|x)=\mathcal{N}\left(x_1; \frac{x}{t},\, \frac{(1-t)^2}{t^2} I \right), \quad \hat{u}_t(x)=\frac{\sum_i \tilde p_1(x_1^{(i)}) u_{t|1}(x|x_1^{(i)})}{\sum_i \tilde p_1(x_1^{(i)})}.
\end{equation}
\textbf{Example 2: Discrete masked diffusion.} EGM also supports masked diffusion path\footnote{Discrete masked diffusion is a discrete jump process which has transition rate matrix $u_t(y,x)$.}, which has recently gained popularity in language modeling. In the masked diffusion path\footnotemark[2], the conditional probability path and the conditional transition rate matrix are expressed as $p_{t|1}(x|x_1)=\kappa_t \delta_{x_1}(x)+(1-\kappa_t)\delta_{M}(x)$ and $u_{t|1}(y,x|x_1)=\frac{\dot\kappa_t}{1-\kappa_t}(\delta_{x_1}(y)-\delta_x(y))$ where $M \in S$ is the mask token, $\kappa_t:[0,1] \rightarrow \mathbb{R}$ is a schedule, and $\delta_x$ is Dirac delta distribution at $x$. In this setting, the proposal and the estimator are,
\vspace{-0.5em}
\begin{equation}
    q_{1|t}(x_1|x)=
    \begin{cases}
        \text{Unif}(x_1; S - M) & (x = M) \\
        \delta_{x}(x_1) & (x \neq M)
    \end{cases}\;,
    \quad 
    \hat{u}_t(y,x)=\frac{\sum_i \tilde p_1(x_1^{(i)}) u_{t|1}(y,x|x_1^{(i)})}{\sum_i \tilde p_1(x_1^{(i)})}.
\end{equation}

\subsection{Bootstrapping tricks for low-variance estimation}
The quality of the estimator in \cref{eq:EGM} depends critically on the proposal distribution $q_{1|t}$. In the ideal case, the proposal $q_{1|t}$ that matches the posterior $p_{1|t}$ ensures that EGM is unbiased and aligns with GM. However, the simple choice of $q_{1|t}(x_1|x_t) \propto p_{t|1}(x_t|x_1)$ leads to a mismatch with the true posterior $p_{1|t}$, resulting in high variance in the importance weight.

For a nearby future time step $r > t$, the posterior $p_{r|t}(x_r|x_t)$ is similar to the backward transition kernel $p_{t|r}(x_t|x_r)$ because $\smash{\frac{p_{r|t}(x_r|x_t)}{p_{t|r}(x_t|x_r)}=\frac{p_r(x_r)}{p_t(x_t)} \approx 1}$ for continuous densities. Thus, a simple proposal $q_{r|t}(x_r|x_t) \propto p_{t|r}(x_t|x_r)$ can effectively match the posterior $p_{r|t}(x_r|x_t)$, reducing the variance of importance weights. To exploit this, we derive a generalized marginalization trick using the intermediate state $x_r$.

\textbf{Backward transition kernel with marginal consistency.} 
To derive bootstrapping, we need to choose a backward transition kernel $p_{t|r}$ which satisfies the marginal consistency described by the Chapman-Kolmogorov equation:
\begin{equation}
    \label{eq:chapman_kolmogorov}
    p_{t|1}(\cdot|x_1)=\int p_{t|r}(\cdot|x_r)p_{r|1}(dx_r|x_1).
\end{equation}
Not every $p_{t|r}$ with marginal consistency is tractable. For instance, in the flow model, $p_{t|r}$ is deterministic, making density evaluation infeasible. We give examples of tractable backward transition kernels satisfying our condition for diffusion, flow, and masked-diffusion paths in \cref{appx:detail_of_EGM}. 



\textbf{Bootstrapped marginalization trick.} Now, we generalize \cref{eq:marginal_generator} and show that the marginal generator $\mathcal{L}_{t}$ can be expressed as marginalization over generators $\mathcal{L}_{t|r}^{x_r}$ for probability paths $(p_{t|r}(dx|x_r))_{0\leq t \leq r}$ conditioned on time $r$, instead of the conditional generators $\mathcal{L}_{t|1}^{x_1}$. To this end, we define marginal consistency of conditional generators as follows:
\begin{equation}
    \label{eq:consistency_of_generator}
    \mathbb{E}_{x_r \sim p_{r|1,t}(\cdot|x_1,x)}[\mathcal{L}_{t|r}^{x_r}f(x)] = \mathcal{L}_{t|1}^{x_1}f(x).
\end{equation}
With generators $\mathcal{L}_{t|r}^{x_r}$ conditioned on time $r$ satisfying the consistency, we express the marginal generator $\mathcal{L}_t$. We provide the corresponding proof in \cref{appx:proof}.
\begin{theorem}\label{thm:bootstrap}
Let $\mathcal{L}_{t|r}^{x_r}$ denote the conditional generator for conditional probability path $p_{t|r}(\cdot | x_r)$ for~$0\leq t < r\leq1$. If the backward transition kernels $p_{t|r}$ satisfy the \cref{eq:chapman_kolmogorov} and the conditional generators satisfy \cref{eq:consistency_of_generator}, then the marginal generator can be expressed as follows, regardless of $r$:
\begin{equation}
    \mathcal{L}_{t}f(x) = \mathbb{E}_{x_r\sim p_{r|t}(\cdot | x)}[\mathcal{L}_{t|r}^{x_r}f(x)],
\end{equation}
where $p_{r|t}(dx_r|x)$ is the posterior distribution (i.e., the conditional distribution over intermediate state $x_r$ given an observation $x$ at time $t$).
\end{theorem}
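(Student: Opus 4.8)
The plan is to start from the original marginalization trick \cref{eq:marginal_generator} (which is already available) and refine its single conditioning on the data point $x_1$ into a two-stage conditioning that also exposes the intermediate state $x_r$. Fix the test function $f$ and the point $x$; since $\mathcal{L}_{t|r}^{x_r}f(x)$ is then a scalar depending only on $x_r$, all expectations below are ordinary real-valued expectations and no operator-level care is needed. Substituting the generator-consistency hypothesis \cref{eq:consistency_of_generator}, namely $\mathcal{L}_{t|1}^{x_1}f(x) = \mathbb{E}_{x_r \sim p_{r|1,t}(\cdot|x_1,x)}[\mathcal{L}_{t|r}^{x_r}f(x)]$, into \cref{eq:marginal_generator} gives the iterated expectation
\[
\mathcal{L}_t f(x) \;=\; \mathbb{E}_{x_1 \sim p_{1|t}(\cdot|x)}\;\mathbb{E}_{x_r \sim p_{r|1,t}(\cdot|x_1,x)}\bigl[\mathcal{L}_{t|r}^{x_r}f(x)\bigr].
\]

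The next step is to collapse this iterated expectation by the tower property, which reduces the theorem to the single measure identity $\int p_{r|1,t}(dx_r\mid x_1,x)\,p_{1|t}(dx_1\mid x) = p_{r|t}(dx_r\mid x)$. This is where the Chapman--Kolmogorov hypothesis \cref{eq:chapman_kolmogorov} is used. Working with densities as in the paper, Bayes' rule on the generative hierarchy $x_1 \to x_r \to x_t$ gives $p_{1|t}(x_1|x) = p_{\text{target}}(x_1)\,p_{t|1}(x|x_1)/p_t(x)$ and, using the Markov structure $x_1 \to x_r \to x_t$ (so $x_t$ depends on $x_1$ only through $x_r$), $p_{r|1,t}(x_r|x_1,x) = p_{t|r}(x|x_r)\,p_{r|1}(x_r|x_1)/p_{t|1}(x|x_1)$, whose denominator is exactly $\int p_{t|r}(x|x_r')\,p_{r|1}(dx_r'|x_1)$ by \cref{eq:chapman_kolmogorov}. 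Multiplying the two expressions, the factor $p_{t|1}(x|x_1)$ cancels, and integrating out $x_1$ leaves $\tfrac{p_{t|r}(x|x_r)}{p_t(x)}\int p_{r|1}(x_r|x_1)\,p_{\text{target}}(dx_1) = \tfrac{p_{t|r}(x|x_r)\,p_r(x_r)}{p_t(x)} = p_{r|t}(x_r|x)$, again by Bayes' rule.

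Combining the two displays gives $\mathcal{L}_t f(x) = \mathbb{E}_{x_r \sim p_{r|t}(\cdot|x)}[\mathcal{L}_{t|r}^{x_r}f(x)]$; since $r \in (t,1]$ was arbitrary and the right-hand side of \cref{eq:marginal_generator} from which we started does not involve $r$, the expression is automatically independent of $r$, as claimed. I expect the main obstacle to be the measure identity in the middle paragraph: one must check that the cancellation of $p_{t|1}(x|x_1)$ is legitimate (it is positive on the relevant support) and, more importantly, that the kernel $p_{t|r}$ used inside $\mathcal{L}_{t|r}^{x_r}$ is precisely the one rendered marginally consistent by \cref{eq:chapman_kolmogorov}, so that the augmented joint law of $(x_1,x_r,x_t)$ genuinely extends the GM joint of $(x_1,x_t)$ and its disintegrations recover $p_{1|t}$, $p_{r|t}$, and $p_{r|1,t}$. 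If one wishes to avoid density assumptions, the same argument goes through by phrasing this step as a disintegration of that augmented joint and taking its $x_r$-marginal, with \cref{eq:chapman_kolmogorov} ensuring the consistency; the rest is routine bookkeeping.
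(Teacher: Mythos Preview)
Your proposal is correct and is essentially the paper's own argument: both apply the tower property over the augmented joint of $(x_1,x_r,x_t)$, using \cref{eq:consistency_of_generator} to pass between $\mathcal{L}_{t|1}^{x_1}$ and $\mathcal{L}_{t|r}^{x_r}$, with you running the chain of equalities in the opposite direction and making the role of \cref{eq:chapman_kolmogorov} in the measure identity $\int p_{r|1,t}\,p_{1|t}=p_{r|t}$ more explicit than the paper does. The paper additionally includes a preliminary ``Step~1'' verifying that the right-hand side satisfies the Kolmogorov forward equation for $(p_t)_{0\le t\le r}$, but this is a sanity check rather than a logically necessary ingredient (it follows once the identity with $\mathcal{L}_t$ is established), so your more direct route is complete.
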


\textbf{Bootstrapped SNIS estimation of the generator.}
Using a proposal distribution $q_{r|t}(\cdot | x)$, the marginal generator can be estimated as follows:
\begin{equation}
    \mathcal{L}_{t}f(x) = \mathbb{E}_{x_r\sim q_{r|t}(\cdot | x)}[w(x, x_r) \mathcal{L}_{t|r}^{x_r}f(x)], 
    \quad 
    w(x, x_r) = \frac{p_{r|t}(x_r|x)}{q_{r|t}(x_r|x)} = \frac{p_r(x_r)p_{t|r}(x|x_r)}{p_t(x)q_{r|t}(x_r|x)}.
\end{equation}
Similar to \cref{sec:SNIS_estimator}, we also consider the estimator based on the self-normalized importance sampling scheme to reduce the variance of the IS estimator. With a simple choice of~$q_{r|t}(x_r|x_t)\propto p_{t|r}(x_t|x_r)$, unnormalized importance weight becomes:
\begin{equation}
    \tilde w(x,x_r)= \frac{\tilde{p}_r(x_r)}{\mathbb{E}_{x_r \sim q_{r|t}(\cdot|x)}[\tilde p_r(x_r)]}, \quad
    \tilde p_r(x_r) := \int p_{r|1}(x_r|x_1)\tilde{p}_1(dx_1).
\end{equation}
Then we obtain the following marginal estimators:
\begin{equation}
    \hat{\mathcal{L}}_{t}f(x) = \frac{\sum_i \tilde{w}(x, x_r^{(i)})\mathcal{L}_{t|r}^{x_r^{(i)}}f(x)}{\sum_i \tilde{w}(x, x_r^{(i)})},
    \qquad 
    \hat{F}_{t}(x) = \frac{\sum_i \tilde{w}(x, x_r^{(i)})F_{t|r}^{x_r^{(i)}}(x)}{\sum_i \tilde{w}(x, x_r^{(i)})},
\end{equation}
where $x_r^{(i)}$ are sampled from the proposal $q_{r|t}(\cdot|x)$ and $F_{t|r}^{x_r}$ is the parametrization of $\mathcal{L}_{t|r}^{x_r}$.

\textbf{Intermediate energy estimator.}
Note that the unnormalized density $\tilde{p}_r$ is intractable, unlike $\tilde{p}_{1}$ in \cref{eq:EGM}. Therfore, we train a surrogate $\tilde{p}_r^\phi(x)$ to approximate the unnormalized density of the intermediate state $x_r$. This surrogate can be learned using the following estimator:
\begin{equation}
    \tilde{p}_r(x_r) := \int p_{r|1}(x_r|x_1)\tilde{p}_1(x_1)dx_1 = Z_{1|r}(x_r)\mathbb{E}_{x_1 \sim q_{1|r}(\cdot|x_r)}[\tilde{p}_1(x_1)],
\end{equation}
where $Z_{1|r}(x_r)$ is the normalization constant of the proposal $q_{1|r}(x_1|x_r) \propto p_{r|1}(x_r|x_1)$. The estimator is unbiased, but exhibits high variance. Therefore, we learn the energy of intermediate state, $\mathcal{E}_r(x_r):=-\log \tilde p_r(x_r)$ that can be expressed using the target energy function $\mathcal{E}_1(x)$:
\begin{equation}
    \mathcal{E}_r(x_r) = -\log \mathbb{E}_{x_1 \sim q_{1|r}(\cdot|x_r)}[\exp (-\mathcal{E}_1(x_1))] - \log Z_{1|r}(x_r),
\end{equation}
To learn the surrogate, we train on a general version of noised energy matching (NEM)~\citep{ouyang2024bnem} objective:
\begin{equation}
    \label{eq:general_NEM}
    L_{\text{NEM}}(\phi) = \mathbb{E}_{x_r \sim p_r^{\text{ref}}}[\lVert \mathcal{E}^\phi_r(x_r) - \hat{\mathcal{E}}_r(x_r)\rVert_2^2], \;
    \hat{\mathcal{E}}_r(x_r) = - \log \frac{1}{K}\sum_i \exp(-\mathcal{E}_1(x_1^{(i)})) - \log Z_{1|r}(x_r),
\end{equation}
where $x_1^{(1)},\dots, x_1^{(K)}$ is sampled from $q_{1|t}$. Then our final loss from \cref{eq:EGM} becomes:
\vspace{-0.5em}
\begin{equation}
    \label{eq:EGM_bootstrapping}
    L_{\text{EGM-BS}}(\theta; \phi)= \mathbb{E}_{x_t \sim p^{\text{ref}}_t}\big[D\big(\hat F_t(x_t;\phi), F_t^\theta(x_t)\big)\big],
    \;
    \hat F_t(x_t;\phi,r) = \frac{\sum_i \exp(-\mathcal{E}^\phi_r(x_r^{(i)}))F_{t|r}^{x_r^{(i)}}(x_t)}{\sum_i \exp(-\mathcal{E}^\phi_r(x_r^{(i)}))},
\end{equation}
where $x_r^{(i)}$ is sampled from the proposal $q_{r|t}(\cdot|x)$.

\textbf{Example 1: Conditional OT path.} To apply bootstrapping with conditional OT path, we show that the backward transition kernel $p_{t|r}(x_t|x_r) = \mathcal{N}\left(x_t;\frac{t}{r}x_r, \sigma_tI\right)$ and conditional velocity $u_{t|r}(x_t|x_r)=\frac{1}{r}x_r+\frac{\dot \sigma_t}{2\sigma_t}\left(x_t - \frac{t}{r}x_r\right)$ satisfy backward Kolmogorov equation with above proposed conditional probability path $p_{t|1}$ where $\sigma_t = (1-t)^2-\frac{t^2}{r^2}(1-r)^2$. Choosing the proposal $q_{r|t}(x_r|x_t) \propto p_{t|r}(x_t|x_r)$, one obtains,
\begin{equation}
    q_{r|t}(x_r|x_t)=\mathcal{N}\left(x_t; \frac{r}{t}x_t, \frac{r^2}{t^2}\sigma_tI\right), \quad
    \hat{u}_{t}(x_t)=\frac{\sum_{i} \tilde p_r(x_r^{(i)})u_{t|r}(x_t|x_r)}{\sum_{i} \tilde p_r(x_r^{(i)})}.
\end{equation}
We check that the transition kernel $p_{t|r}$ satisfies the assumption we proposed in \cref{appx:detail_of_EGM}.

\textbf{Example 2: Discrete masked diffusion.} For masked-diffusion path, the backward transition kernel and conditional transition rate matrix are given as $p_{t|r}(x_t|x_r)=\frac{\kappa_t}{\kappa_r}\delta_{x_r}(x_t)+\frac{\kappa_r -\kappa_t}{\kappa_r}\delta_{M}(x_t)$ and  $u_{t|r}(y,x_t|x_r)=\frac{\dot{\kappa}_t}{\kappa_r - \kappa_t}(\delta_{x_r}(y)-\delta_{x_t}(y))$. Then, the proposal and the estimator are\footnote{Note that $q_{r|t}(x_r|x_t\neq M)=\delta_{x_t}(x_r)$, the token flipped to the data token does not change thereafter.},
\begin{equation}
    q_{r|t}(x_r|x_t=M)=\frac{\kappa_t}{\kappa_r}\delta_M(x_r)+\frac{\kappa_r-\kappa_t}{\kappa_r},
    \quad
    \hat{u}_t(y,x)=\frac{\sum_i \tilde p_r(x_r^{(i)}) u_{t|r}(y,x|x_r^{(i)})}{\sum_i \tilde p_r(x_r^{(i)})}.
\end{equation}

\subsection{Training details}
\begin{figure}[t!]
\vspace*{-1.5em}
\centering
\begin{minipage}[t!]{0.98\textwidth}
\centering
\begin{algorithm}[H]
\small
\begin{spacing}{1.05}
\caption{Iterated training with EGM loss with bootstrapping}
\label{alg:EGM}
\small
\begin{algorithmic}[1]
\Require Network $F_t^\theta$, $\mathcal{E}_t^\phi$, replay buffer $\mathcal{B} \leftarrow \emptyset$, bootstrapping step size $\epsilon$ and batch size $b$.
\While{Outer-loop}
    \State Sample $\{x_1\}_{i=1}^b$ from the simulation of the current sampler $F_t^\theta$ and set $\mathcal{B} \leftarrow \mathcal{B} \cup\{x_1\}_{i=1}^b$.
    \While{Inner-loop} 
        \If{bootstrapping}
        \State Sample $t \sim \mathrm{Unif}[0,1]$, $r\leftarrow \min(t+\epsilon, 1)$, $x_1 \sim \mathcal{B}$ and $x_t \sim p_{t|1}(\cdot | x_1)$.
            \State Update $\phi$ to minimize $L_{\text{NEM}}$ as defined in \cref{eq:general_NEM}.
            \State Compute the bootstrapped estimator $\hat{F}_t(x;\phi,r)$ with the proposed sample $x_r^{(i)} \sim q_{r|t}(\cdot|x_t)$.
            \State Update $\theta$ to minimize $L_{\text{EGM-BS}}$ as defined in \cref{eq:EGM_bootstrapping}.
        \Else
            \State Compute the SNIS estimator $\hat{F}_t(x)$ with the proposed sample $x_1^{(i)} \sim q_{1|t}(\cdot|x_t)$.
            \State Update $\theta$ to minimize $L_{\text{EGM}}$ as defined in \cref{eq:EGM}.
        \EndIf
    \EndWhile
\EndWhile
\end{algorithmic}
\end{spacing}
\end{algorithm}
\end{minipage}
\vspace*{-1.5em}
\end{figure}

We now describe the training algorithm for EGM, with the full procedure provided in \cref{alg:EGM}.

\textbf{Bi‑level training scheme.} To choose a reference distribution $p_t^{\text{ref}}$ that is close to $p_t$, we use hierarchical sampling procedure $x_1 \sim \mathcal{B}, \, x_t \sim p_{t|1}(\cdot|x_1)$ with replay-buffer $\mathcal{B}$ that approximates $p_1$. This leads to the following form of EGM objective:
\begin{equation}
    L_{\text{EGM}}(\theta)=\mathbb{E}_{t \sim \text{Unif}, x_1 \sim \mathcal{B}, x_t \sim p_{t|1}(\cdot|x_1)}\big[D\big(\hat{F}_{t}(x_t), F_{t}^{\theta}(x_t)\big)\big].
\end{equation}
When the buffer remains close to $p_1$, our loss approximates the generator matching loss in \cref{eq:generator_matching}. We achieve this by continuously updating the buffer, using the bi-level scheme introduced in \citet{akhound2024iterated}. The bi-level scheme alternates between an outer loop and an inner loop, where in the outer loop the buffer $\mathcal{B}$ is improved by drawing samples from the current sampler $F_t^\theta$. For the inner loop, the sampler $F_t^\theta$ is trained to minimize the EGM loss with $\mathcal{B}$ held fixed. Because the sampler is updated in the inner loop, the samples collected in the subsequent outer loop reflect its improved performance, thus progressively improving the buffer.

\textbf{Forward‑looking parametrization for masked diffusion.}  
We also introduce a trick to further boost the training of masked diffusion samplers for graphical models via inductive bias on the estimator $\mathcal E_{t}^{\phi}$. 
To this end, consider a graphical model defined on graph $G=(V, E)$ with vertices $V$ and edges $E$. It defines a distribution of vertex-wise variables $x=\{x_{i}\}_{i \in V}$ using an energy function $\mathcal{E}$ expressed as a product of edge-wise potentials $\{\psi_{i,j}\}_{\{i,j\}\in E}$, i.e., $p_1(x)\propto \exp\bigl(-\mathcal{E}(x)\bigr)
= \prod_{\{i, j\} \in E}\psi_{i,j}(x^i,x^j)$ where we let $x^{i}$ denote the variable associated with vertex $i\in V$.

For these models, we can express the intermediate distribution $p_{t}(x_{t})$ using fixed configurations, since once a token is converted from a mask to a data token, it remains fixed thereafter. To this end, let $I_{t}$ denote the set of edges with deterministic value at time $t$, and the intermediate distribution can be expressed as follows:
\begin{equation}
p_t(x_t)
= \prod_{\{i, j\} \in I_{t}} \psi_{i,j}\bigl(x_{t}^{i}, x_{t}^{j}\bigr)
  \int p_t(x_t|x_1)\,\prod_{\{i, j\}\in E \setminus I_{t}}\psi_{i,j}\bigl(x_{1}^{i}, x_{1}^{j}\bigr)\,dx_1.
\end{equation}
Hence, it suffices to estimate the contributions of the undetermined region, which leads to our parameterization ${\mathcal{E}}_{t}^{\phi}$ as:
$\mathcal{E}_{t}^{\phi}(x_t)= \text{NN}_\phi(x_t,t) - \sum_{(i,j) \in I_t} \log \psi_{i,j}(x_t^i, x_t^j)$.

\section{Experiments}

\label{sec:experiments}
In prior work, research has concentrated mainly on diffusion-based samplers, leaving purely discrete and multimodal settings underexplored. To fill this gap, we evaluate EGM on purely discrete and joint discrete–continuous tasks. Specifically, we employ jump-based neural samplers for purely discrete tasks and combine discrete jumps with continuous flow to tackle multimodal tasks. These experiments demonstrate EGM’s performance and versatility.

Our primary evaluation metric is the energy-$\mathcal{W}_1$ ($\mathcal{E}$-$\mathcal{W}_1$) distance, consistently applied across tasks and complemented by qualitative visualizations. Since no jump-based or hybrid discrete–continuous neural sampler baselines exist, we include a traditional Gibbs sampler~\citep{geman1984stochastic} as our competitive baseline.  Although Gibbs sampling is guaranteed to converge given sufficiently many iterations, we use four parallel chains with 6000 steps each to reflect a realistic computational budget for the per-sample cost and to yield reasonably competitive results. Detailed descriptions of the energy functions, evaluation metrics, and experimental protocols are provided in \cref{appx:experiment_details}.

\subsection{Discrete EGM on the 2D Ising Model}
\begin{table}[t]
\caption{
Performance of \textit{EGM} on Ising model in terms of energy $\mathcal{W}_1$ ($\mathcal{E}$-$\mathcal{W}_1$) and magnetization $\mathcal{W}_1$ ($M$-$\mathcal{W}_1$). For reference, we report the metric of the Gibbs sampler. Each sampler is evaluated with three random seeds, and we report the mean $\pm$ standard deviation for each metric. Results shown in \textbf{bold} denote the best result in each column.}
\centering
\resizebox{\textwidth}{!}{%
\begin{tabular}{@{}lcccccccc}
\toprule
 Energy $\rightarrow$
 & \multicolumn{4}{c}{5 x 5 Ising ($d=25$)}
 & \multicolumn{4}{c}{10 x 10 Ising ($d=100$)} 
 \\ 
 Parameters $\rightarrow$
 & \multicolumn{2}{c}{$\beta=0.2, J=1.0$}
 & \multicolumn{2}{c}{$\beta=0.4, J=1.0$} 
 & \multicolumn{2}{c}{$\beta=0.2, J=1.0$}
 & \multicolumn{2}{c}{$\beta=0.4, J=1.0$} 
 \\
 \cmidrule(lr){2-3} \cmidrule(lr){4-5} \cmidrule(lr){6-7} \cmidrule(lr){8-9}
 Algorithm $\downarrow$ 
 & $\mathcal{E}$-$\mathcal{W}_1$ $\downarrow$ 
 & $M$-$\mathcal{W}_1$ $\downarrow$
 & $\mathcal{E}$-$\mathcal{W}_1$ $\downarrow$ 
 & $M$-$\mathcal{W}_1$ $\downarrow$
 & $\mathcal{E}$-$\mathcal{W}_1$ $\downarrow$ 
 & $M$-$\mathcal{W}_1$ $\downarrow$
 & $\mathcal{E}$-$\mathcal{W}_1$ $\downarrow$ 
 & $M$-$\mathcal{W}_1$ $\downarrow$
\\
\midrule
\textbf{Gibbs}
    & \textbf{0.10}\std{0.02}
    & 0.06\std{0.02}
    & 0.71\std{0.43}
    & 0.23\std{0.18}
    & 0.53\std{0.04}
    & 0.04\std{0.01}
    & 5.29\std{1.95}
    & 0.29\std{0.11}
    \\
\textbf{EGM (ours)}
    & 0.20\std{0.06}
    & \textbf{0.02}\std{0.01}
    & 3.73\std{0.39}
    & 0.24\std{0.02}
    & 0.84\std{0.07}
    & \textbf{0.02}\std{0.00}
    & 19.94\std{0.69}
    & 0.36\std{0.01}
    \\
\textbf{+Bootstrapping (ours)}
    & \textbf{0.10}\std{0.06}
    & \textbf{0.02}\std{0.01}
    & \textbf{0.60}\std{0.12}
    & \textbf{0.04}\std{0.01}
    & \textbf{0.39}\std{0.34}
    & 0.06\std{0.07}
    & \textbf{2.51}\std{0.16}
    & \textbf{0.24}\std{0.01}
    \\
    \bottomrule
\end{tabular}%
}
\label{tab:main}
\vspace{-0.2em}
\end{table}

\begin{figure}[t!]
  \centering
    \vspace{-0.5em}
    \begin{subfigure}[t]{\textwidth}
      \centering
      \includegraphics[width=\linewidth]{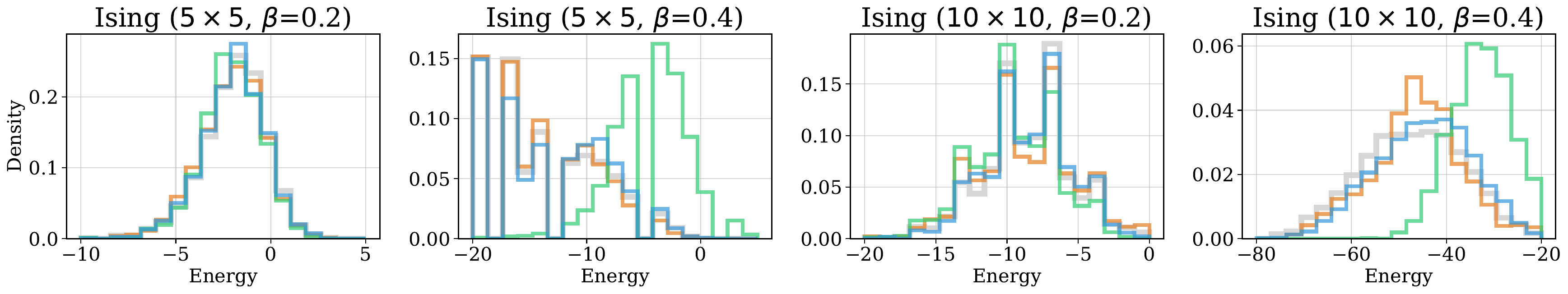}
    \end{subfigure}
    \vspace{-0.5em}
    \begin{subfigure}[t]{\textwidth}
      \centering
      \includegraphics[width=\linewidth]{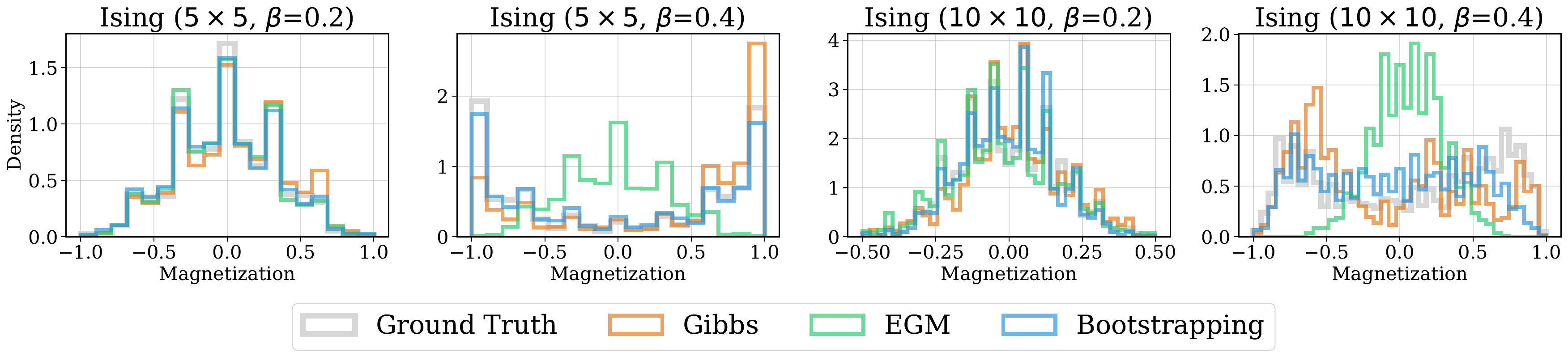}
    \end{subfigure}
    \caption{
    Comparison of energy (top) and magnetization (bottom) histograms for ground-truth samples and various sampling methods.
    }
    \label{fig:Ising_qualitative_results}
    \vspace{-1.5em}
\end{figure}

We assess the performance and scalability of EGM using the two-dimensional Ising model, varying both grid dimension and temperature. The Ising model is a canonical benchmark in probabilistic inference and sampling, with well-established ground-truth samples facilitating robust evaluation. The Ising model, whose complexity can be tuned via temperature and grid size, serves as an ideal testbed for our jump-based sampler.

We report the quantitative results in \cref{tab:main}, alongside comparisons to the traditional parallel-Gibbs sampler as a baseline. Given the absence of metric structure in the discrete Ising model, we adopt the $\mathcal{W}_1$ distance over energy and average magnetization $M = \sum_{i} x_i$ as metrics. EGM consistently matches or exceeds baseline performance across metrics. Especially, bootstrapping improves the performance even in high-dimensional settings near critical temperatures $\beta = 0.4$.

Qualitative evaluations presented in \cref{fig:Ising_qualitative_results} depict energy and average magnetization histograms for the Ising model with low temperature $\beta = 0.4$, showing EGM's ability to accurately capture the true energy distribution.

\subsection{Multimodal EGM on GB-RBM, JointDW4 and JointMoG}
\begin{table}[t]
\caption{
Performance of \textit{EGM} on multimodal task. For reference, we report the metric of the Gibbs sampler. Each sampler is evaluated with three random seeds, and we report the mean $\pm$ standard deviation for each metric. Results shown in \textbf{bold} denote the best result in each column.}
\centering
\begin{tabular}{@{}lcccc}
\toprule
 Energy $\rightarrow$
 & \multicolumn{2}{c}{GB-RBM ($d=5$)} 
 & \multicolumn{1}{c}{JointDW4 ($d=12$)} 
 & \multicolumn{1}{c}{JointMoG ($d=20$)} 
 \\
 Algorithm $\downarrow$ 
 & $\mathcal{E}$-$\mathcal{W}_1$ $\downarrow$ & $x$-$\mathcal{W}_2$ $\downarrow$ 
 & $\mathcal{E}$-$\mathcal{W}_1$ $\downarrow$
 & $\mathcal{E}$-$\mathcal{W}_1$ $\downarrow$
 \\
\midrule
\textbf{Gibbs}
    & 0.77\std{0.03}
    & 5.12\std{0.21}
    & 4.80\std{0.02}
    & 7.76\std{0.02}
    \\
\textbf{EGM (ours)}
    & 0.33\std{0.02}
    & 0.63\std{0.10}
    & 2.45\std{0.07}
    & 8.83\std{0.12}
    \\
\textbf{+Bootstrapping (ours)}
    & \textbf{0.20}\std{0.15}
    & \textbf{0.61}\std{0.09}
    & \textbf{1.65}\std{0.96}
    & \textbf{1.20}\std{0.32}
    \\
\bottomrule
\end{tabular}
\label{tab:multimodal}
\vspace{-0.25em}
\end{table}

\begin{figure}[t]
    \vspace{-0.7em}
    \centering
    \begin{subfigure}[t]{0.16\textwidth}
      \centering
      \includegraphics[width=\linewidth, height=\linewidth]{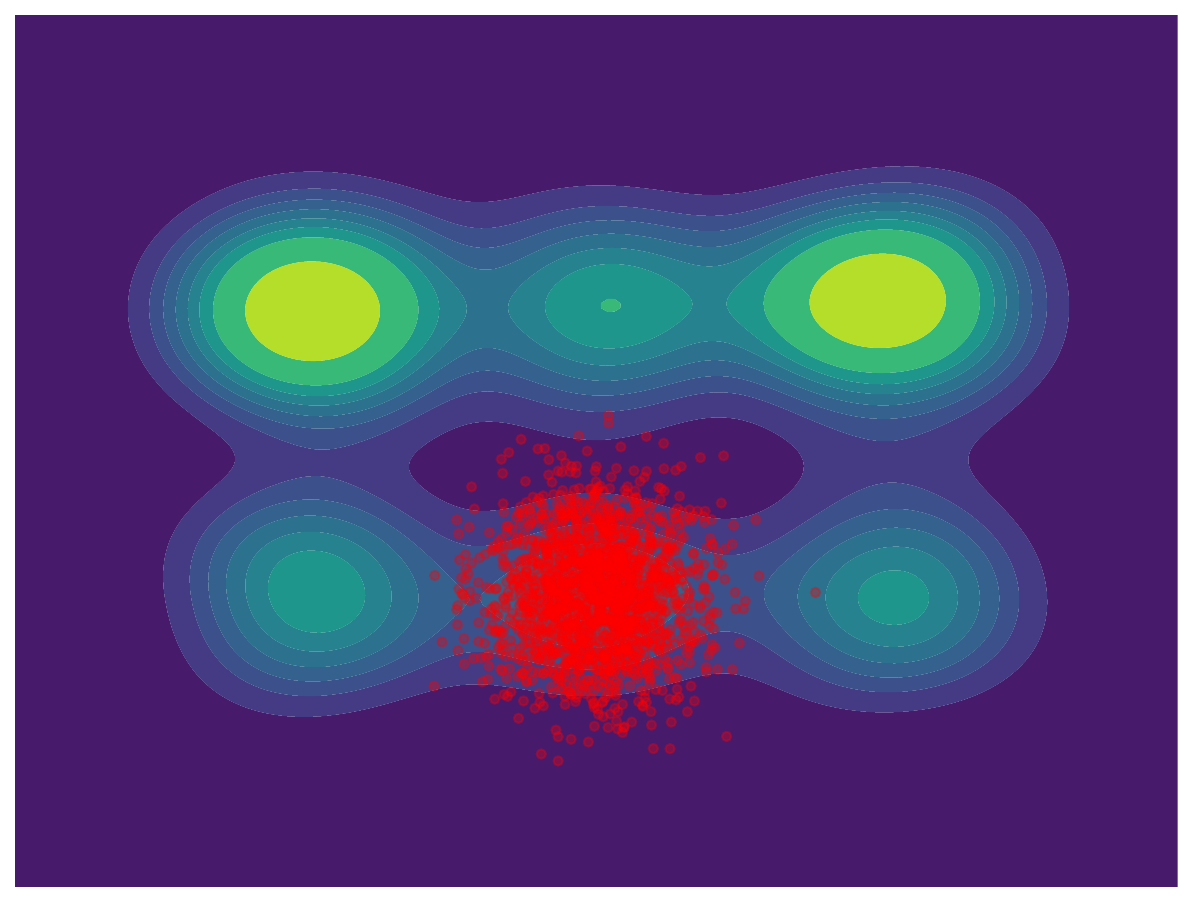}
      \caption{Gibbs}
    \end{subfigure}
    \begin{subfigure}[t]{0.16\textwidth}
      \centering
      \includegraphics[width=\linewidth, height=\linewidth]{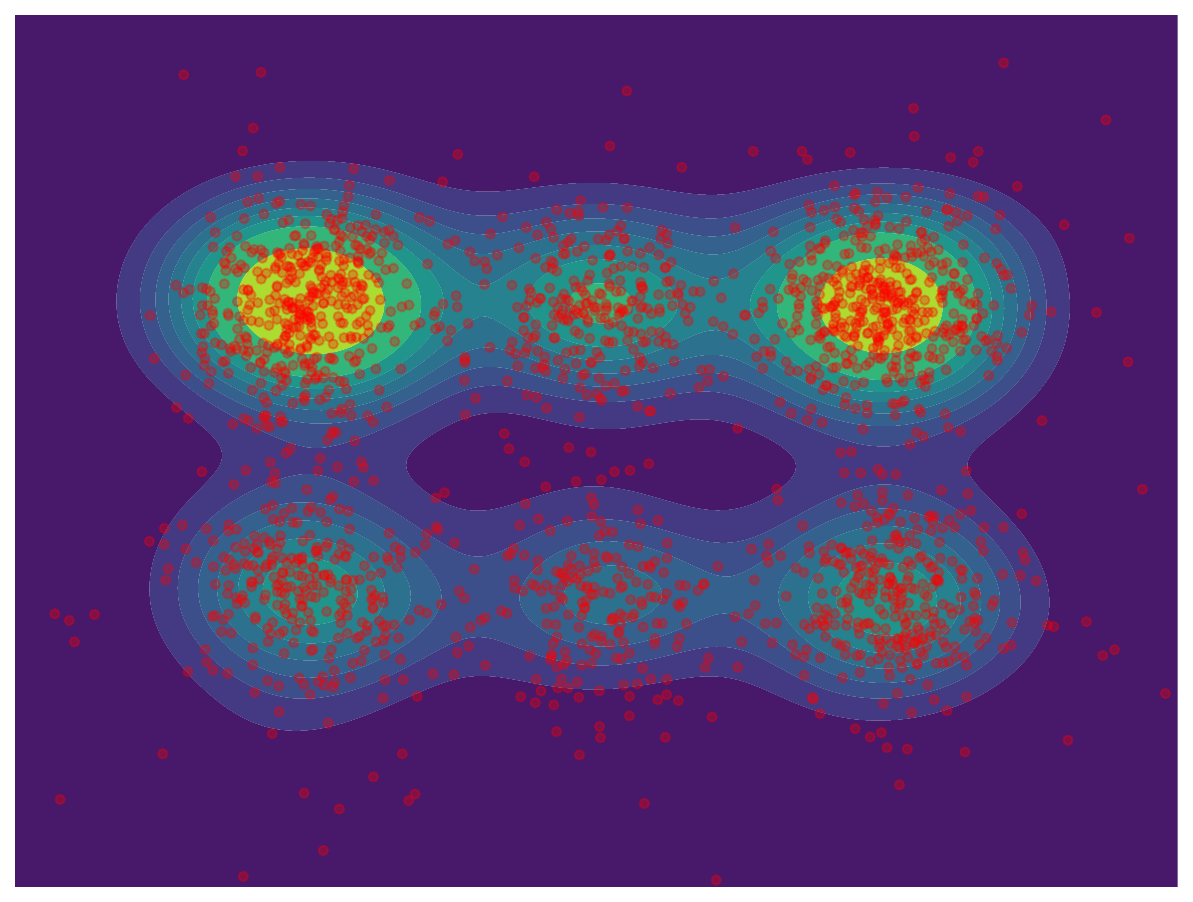}
      \caption{EGM}
    \end{subfigure}
    \begin{subfigure}[t]{0.16\textwidth}
      \centering
      \resizebox{\linewidth}{\linewidth}{
      \includegraphics[width=\linewidth]{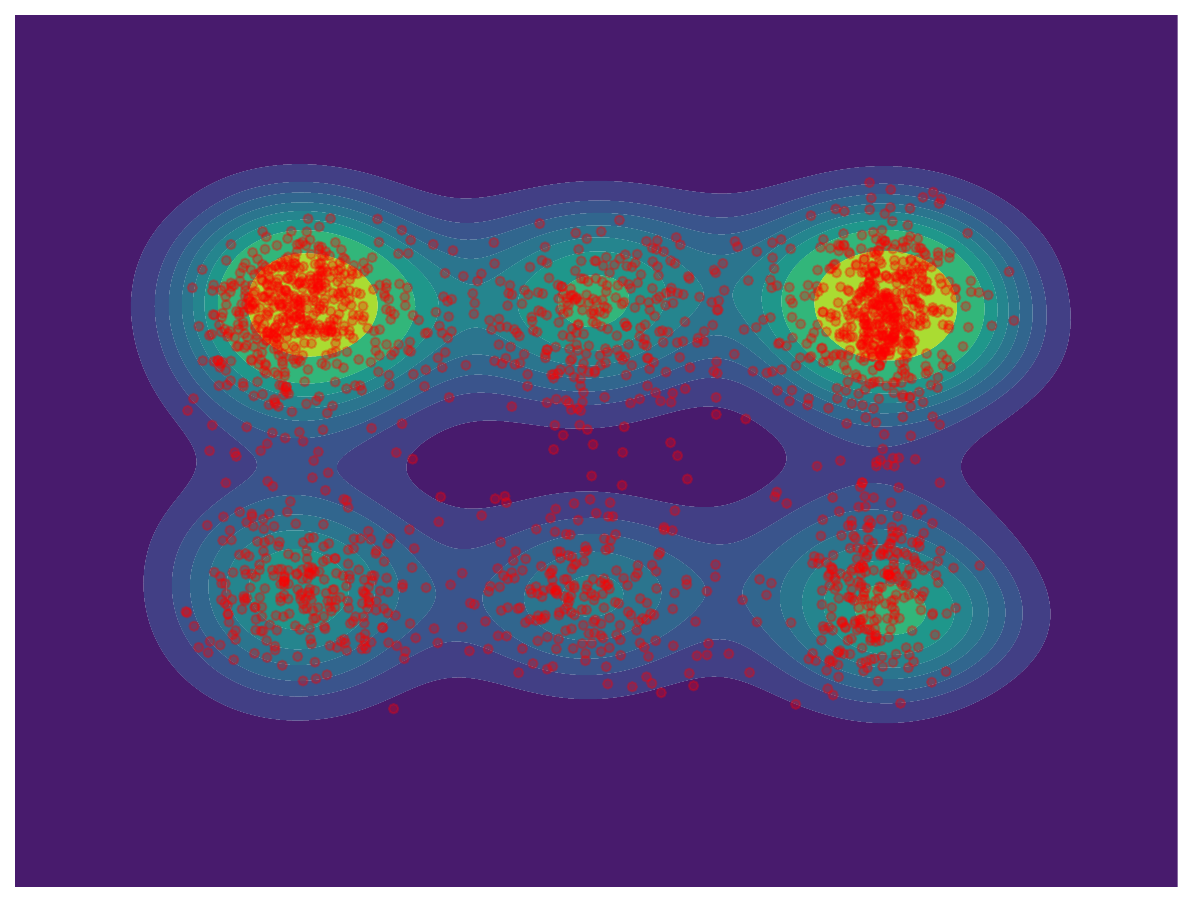}
      }
      \caption{EGM + BS}
    \end{subfigure}
    \begin{subfigure}[t]{0.16\textwidth}
      \centering
      \includegraphics[width=\linewidth, height=\linewidth]{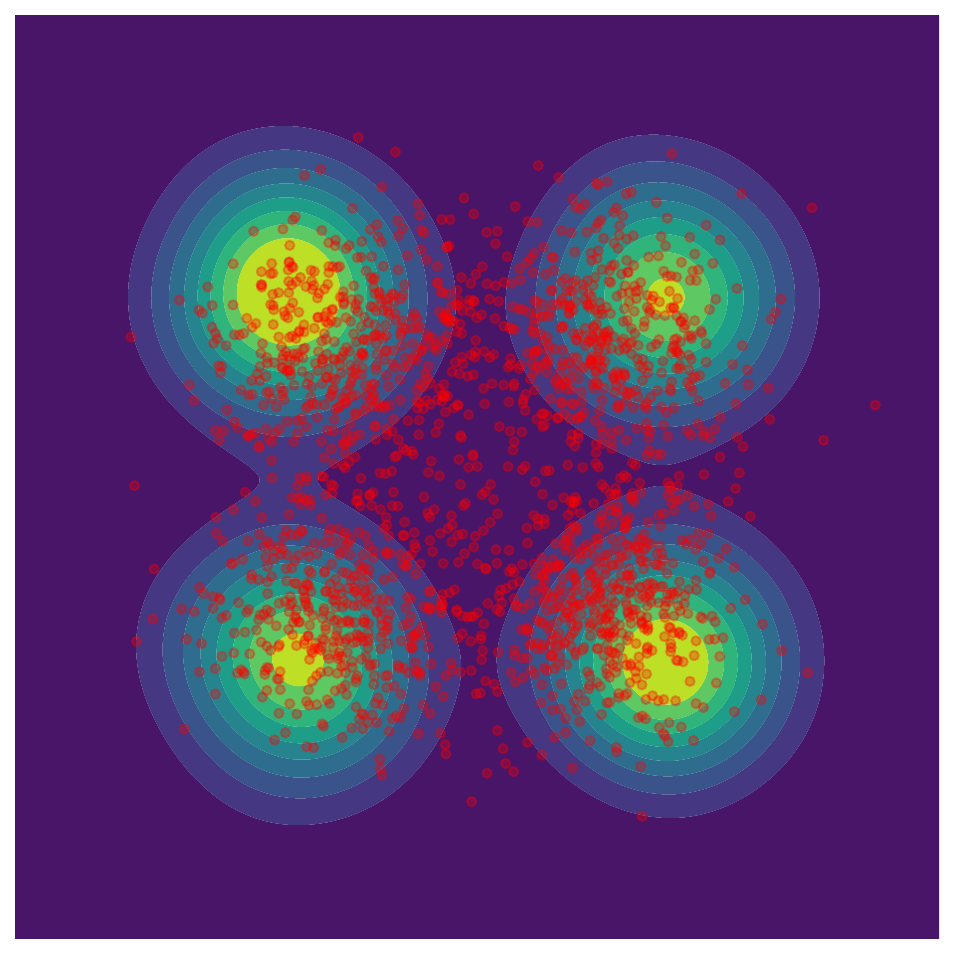}
      \caption{Gibbs}
    \end{subfigure}
    \begin{subfigure}[t]{0.16\textwidth}
      \centering
      \includegraphics[width=\linewidth, height=\linewidth]{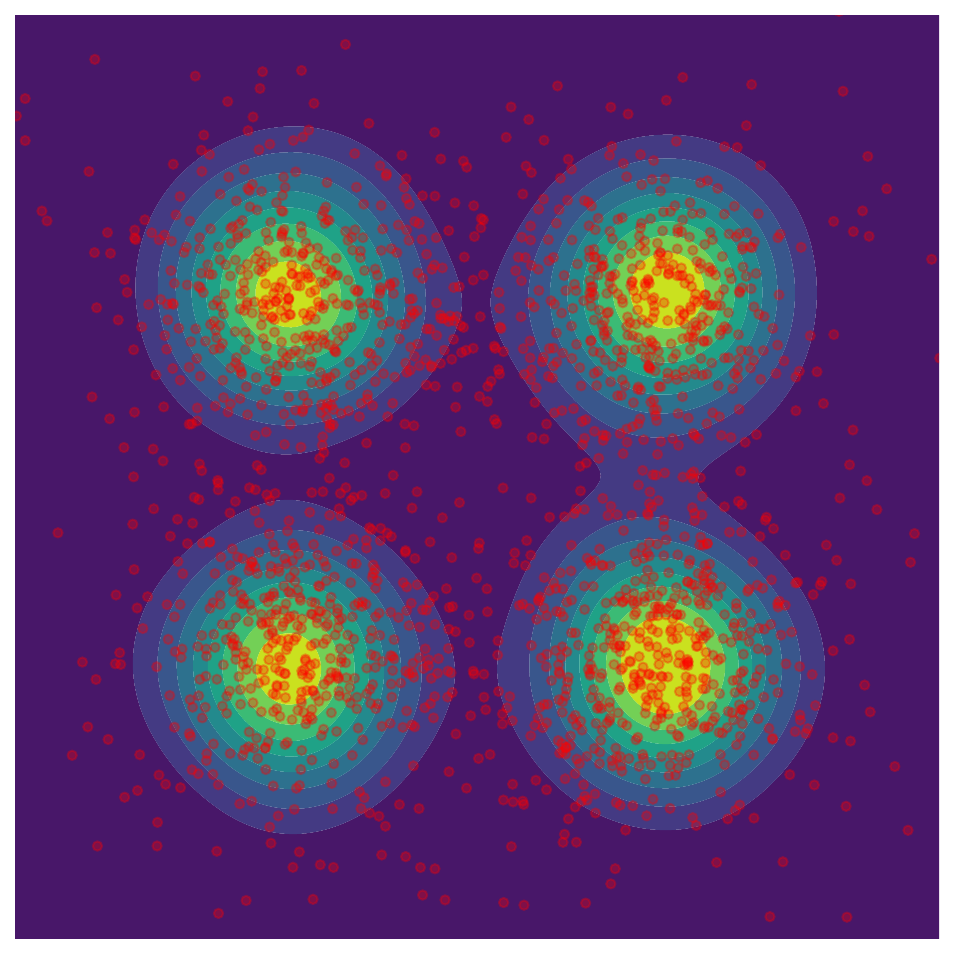}
      \caption{EGM}
    \end{subfigure}
    \begin{subfigure}[t]{0.16\textwidth}
      \centering
      \includegraphics[width=\linewidth, height=\linewidth]{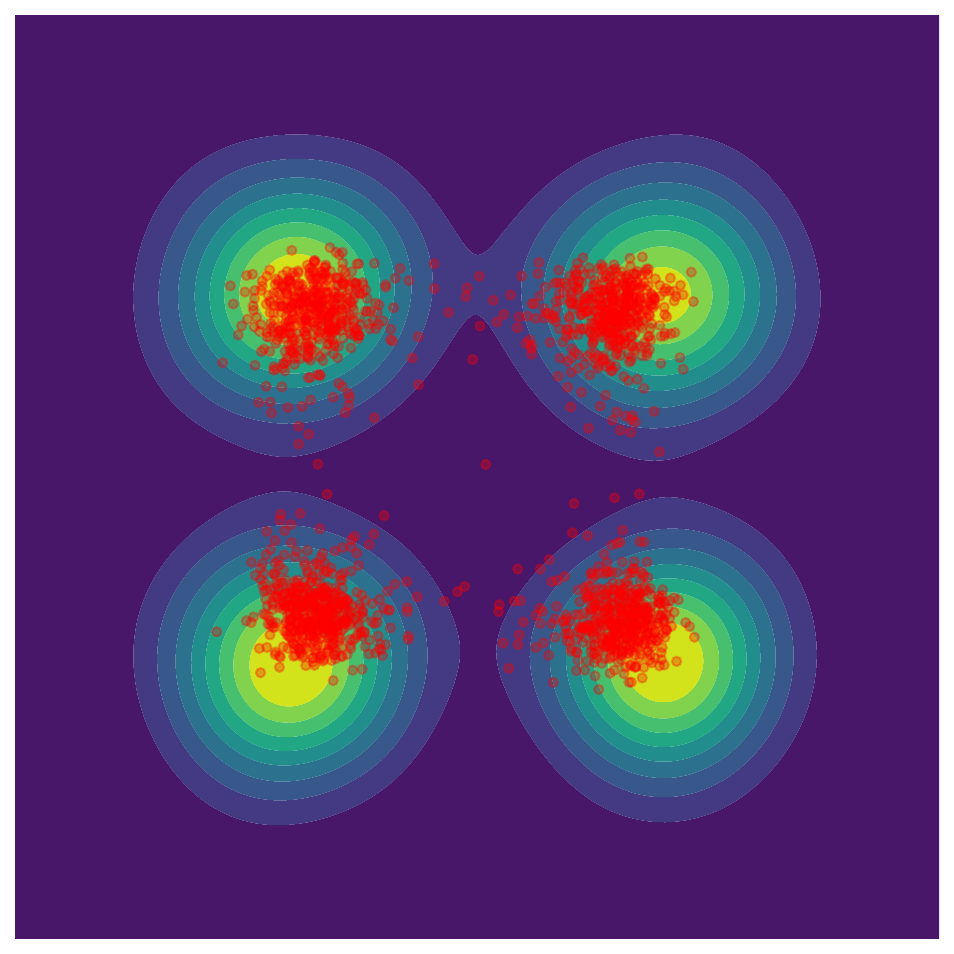}
      \caption{EGM + BS}
    \end{subfigure}
    \caption{
    {Sample plots of GB-RBM (a-c) and JointMoG (d-f)}. Samples are projected onto the first two continuous dimensions. BS stands for bootstrapping. Contour lines represent the target distribution, and colored points indicate samples from each method.}
    \label{fig:multimodal_sample_plot}
    \vspace{-1.6em}
\end{figure}

We validate EGM on joint discrete–continuous sampling tasks using three synthetic benchmarks: Gaussian-Bernoulli restricted Boltzmann machine~(GB-RBM)~\citep{cho2011improved}, double-well potential with type-dependent interactions~(JointDW4), and discrete-continuous joint mixture of Gaussians (JointMoG). We choose GB-RBM to show an application of sampling from the energy-based model. JointDW4 mimics a simplified molecular sequence-structure co-generation problem. JointMoG is a mixed-state extension of MoG that is common in diffusion-sampler benchmarks. For all the experiments, we adopt conditional-OT or variance exploding paths as a conditional probability path for the continuous flow sampler, and a masked diffusion path for the discrete jump sampler. 

\begin{wrapfigure}{h}{0.35\textwidth}
    \vspace{-1.6em}
    \centering
    \includegraphics[width=\linewidth]{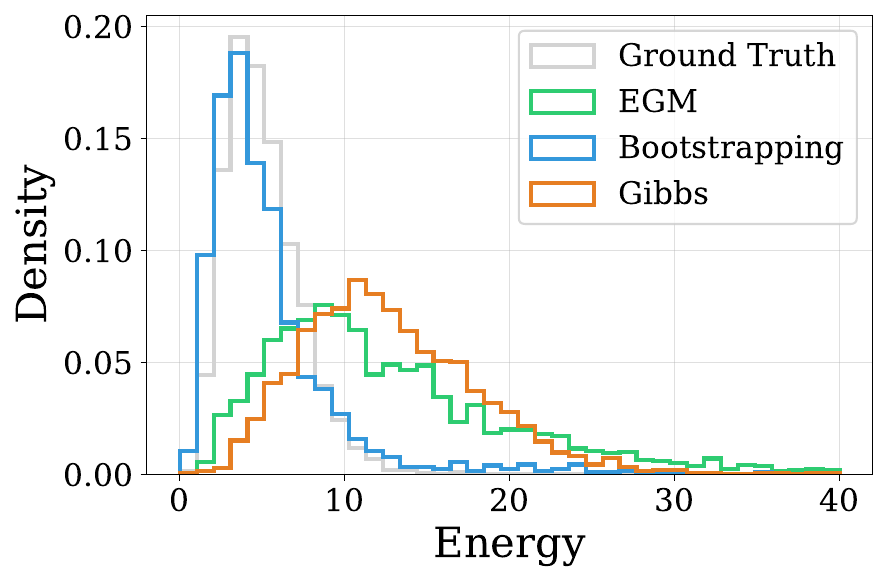}
    \vspace{-1.4em}
    \caption{
        Energy histograms on the samples from multiple samplers vs. ground truth of JointMOG.
    }
    \label{fig:mog_energy_histogram}
    \vspace{-3.0em}
\end{wrapfigure}

Quantitative outcomes are summarized in \cref{tab:multimodal}. For GB-RBM, we also use $x$-$\mathcal{W}_2$ for the first two continuous dimensions. Qualitative results for GB-RBM and JointMoG in \cref{fig:multimodal_sample_plot} visually demonstrate EGM's capacity to capture all distinct modes accurately. In \cref{fig:mog_energy_histogram}, we observe that EGM (with bootstrapping) correctly models the true energy distribution in JointMoG compared to the Gibbs sampler. Bootstrapping consistently improves performance in capturing multiple modes of the distributions.

\section{Related Works}

\textbf{Diffusion samplers.} Diffusion samplers cast sampling as a stochastic control or denoising problem. Early work, such as the path integral sampler (PIS)~\citep{zhang2021path}, required on-policy SDE simulations per update, leading to high computational costs. Subsequent variants explored different probability paths~\citep {vargas2023denoising, berner2022optimal, vargas2023transport} and divergences~\citep{vargas2023transport, richter2023improved}, yet most still rely on solving or simulating SDEs to compute their objectives. To reduce this burden, off-policy methods like iDEM~\citep{akhound2024iterated}, iEFM~\citep{woo2024iterated}, and BNEM~\citep{ouyang2024bnem} estimate the marginal score (or flow) directly and eliminate inner-loop simulation; PINN-based approaches such as NETS~\citep{albergo2024nets} similarly train samplers without rollouts. While these advances improve efficiency in continuous spaces, discrete or jump process samplers remain underexplored. Recently, LEAPS~\citep{holderrieth2025leaps} introduced a proactive importance sampling scheme for discrete state spaces using locally equivariant networks. However, it requires many energy evaluations at inference since it uses escorted transport. 

\textbf{Generative flow networks (GFNs).} GFNs~\citep{bengio2021flow, lahlou2023theory} formulate sampling as a sequential decision process: a forward policy constructs an object step by step, and a backward policy enforces consistency so that trajectories terminate with probability proportional to a target reward (or unnormalized density). Training objectives like trajectory balance~\citep{malkin2022trajectory} and detailed balance~\citep{bengio2023gflownet} enable off-policy updates. Furthermore, \citet{falet2023delta} propose the sampler for a sparse graphical model with a local objective similar to our forward-looking parametrization. Though diffusion sampler is a special case of a continuous-state continuous-time GFNs~\citep{zhang2022unifying, berner2025discrete}, it remains an interesting question that training a CTMP on a general state space, as ours, can be unified under the GFNs framework. 

\textbf{Relation to iDEM and BNEM.} In EGM, choosing \(q_{1|t}(x_1|x)\propto p_{t|1}(x|x_1)\) yields simple weights $\tilde w(x, x_1)=\tilde p_1(x_1)=\exp(-\mathcal E(x_1))$. In the VE path on continuous space where target score identity~\citep{de2024target} is available, the integrand can be replaced with the target score instead of the conditional score. This yields the same estimator used in iDEM~\citep{akhound2024iterated} and NEM~\citep{ouyang2024bnem}. Yet, it remains an open question whether it's possible to derive the target score identity on a general state space. Recently, \citet{zhang2025target} introduce the target score identity on discrete states with neighborhood structure, which might improve the accuracy of our estimator on discrete space when combined.

\section{Conclusion}
We introduce energy-based generator matching (EGM), a training method for neural samplers on general state spaces that directly estimates the marginal generator of a continuous-time Markov process (CTMP). EGM is the first neural-sampler framework to handle discrete, continuous, and hybrid multimodal distributions via a principled generator-matching objective. We also propose a bootstrapping scheme using intermediate energy estimates to reduce variance in importance weights. We empirically validate EGM on both high-dimensional discrete systems and hybrid discrete–continuous domains. Our results show that EGM, especially with bootstrapping, performs competitively when sampling from complex distributions with multiple modes. This work opens new avenues for training expressive neural samplers beyond diffusion-based models. Future work may explore unbiased and low-variance estimation of the generator, learned proposal for better estimation, theoretical connections to physics-informed neural network (PINN) objectives, and unifying the training of general CTMP into the GFNs framework.

\begin{ack}
This work was partly supported by Institute for Information \& communications Technology Planning \& Evaluation(IITP) grant funded by the Korea government(MSIT) (RS-2019-II190075, Artificial Intelligence Graduate School Support Program(KAIST)), National Research Foundation of Korea(NRF) grant funded by the Ministry of Science and ICT(MSIT) (No. RS-2022-NR072184), GRDC(Global Research Development Center) Cooperative Hub Program through the National Research Foundation of Korea(NRF) grant funded by the Ministry of Science and ICT(MSIT) (No. RS-2024-00436165), and the Institute of Information \& Communications Technology Planning \& Evaluation(IITP) grant funded by the Korea government(MSIT) (RS-2025-02304967, AI Star Fellowship(KAIST)). Minsu Kim acknowledges funding from KAIST Jang Yeong Sil Fellowship. 
\end{ack}

\bibliography{bibliography}
\bibliographystyle{plainnat}

\newpage
\appendix
\section{Proofs}\label{appx:proof}

\subsection{Importance sampling for the generator estimation}
This section provides a detailed derivation of the importance sampling estimator for the generator and its parametrization presented in \cref{eq:EGM}. 

\textbf{Existence of densities.} Consider the measurable space $(S, \Sigma)$ with a $\Sigma$-measurable reference measure $\nu$ (e.g., counting measure if $S$ is discrete and Lebesgue measure if $S=\mathbb{R}^d$), as introduced in the \cref{sec:prelim}. Since we focus on a sampling problem, the target distribution $p_1$ admits a $\nu$-density $\frac{dp_1}{d\nu}:S \to \mathbb{R}_{\geq 0}$. Assume that the joint probability measure $p_{t, 1}$ is absolutely continuous with respect to the product measure $\nu \otimes \nu$. Then, all related probability measures $p_{t}, p_{t|1}, p_{1|t}$ admits corresponding $\nu$-densities, expressed as:
\begin{align}
    p_t(dx_t) &= \int p_{t,1}(dx_t, dx_1) \\
              &= \int p_{t, 1}(x_t, x_1)(\nu \otimes \nu)(dx_t, dx_1) \\
              &= \underbrace{\Big( \int p_{t,1}(x_t,x_1)\nu(dx_1)\Big)}_{p_t(x_t)}\nu(dx_t), \\
    p_{t,1}(dx_t, dx_1) &= \int p_{t,1}(x_t,x_1)(\nu \otimes \nu)(dx_t, dx_1) \\
                        &= \int \frac{p_{t,1}(x_t,x_1)}{p_t(x_t)}p_t(x_t)\nu(dx_t)\nu(dx_1) \\
                        &= \int \underbrace{\frac{p_{t,1}(x_t,x_1)}{p_t(x_t)}}_{=p_{1|t}(x_1|x_t)}\nu(dx_1)p_t(dx_t), \\
    p_{t,1}(dx_t, dx_1) &= \int p_{t,1}(x_t,x_1)(\nu \otimes \nu)(dx_t, dx_1) \\
                        &= \int \frac{p_{t,1}(x_t,x_1)}{p_1(x_1)}p_1(x_1)\nu(dx_1)\nu(dx_t) \\
                        &= \int \underbrace{\frac{p_{t,1}(x_t,x_1)}{p_1(x_1)}}_{=p_{t|1}(x_t|x_1)}\nu(dx_t)p_1(dx_1),
\end{align}
where $p_{t,1}(x_t,x_1)$ denotes the density of the joint probability measure.

\textbf{SNIS estimation of the generator.} We introduce a proposal distribution $q_{1|t}: \Sigma \times S \to \mathbb{R}_{\geq 0}$, satisfying absolute continuity conditions: $p_{1|t}(\cdot|x) \ll q_{1|t}(\cdot|x)$, $q_{1|t}(\cdot|x) \ll \nu$ and $\nu \ll q_{1|t}(\cdot|x)$ for all $x \in S$. Using the marginalization trick \cref{eq:marginal_generator} and the Radon-Nikodym theorem~\citep[Chapter~7]{lang2012real}, we have:
\begin{align}
    \mathcal{L}_tf(x) &= \mathbb{E}_{x_1 \sim p_{1|t}(\cdot|x)}[\mathcal{L}_{t|1}^{x_1}f(x)] \\
                    &= \mathbb{E}_{x_1 \sim q_{1|t}(\cdot|x)}\left[\frac{dp_{1|t}}{dq_{1|t}}(x_1|x)\mathcal{L}_{t|1}^{x_1}f(x)\right].
\end{align}
Since the Bayes' rule holds for the $\nu$-density, i.e., $p_{1|t}(x_1|x_t)=\frac{p_{t|1}(x_t|x_1)p_1(x_1)}{p_t(x_t)}$, we obtain:
\begin{align}
    \mathcal{L}_tf(x_t) &=\mathbb{E}_{x_1 \sim q_{1|t}(\cdot|x_t)}\left[\frac{dp_{1|t}}{dq_{1|t}}(x_1|x_t)\mathcal{L}_{t|1}^{x_1}f(x_t)\right] \\
    &= \mathbb{E}_{x_1 \sim q_{1|t}(\cdot|x_t)}\left[\frac{dp_{1|t}}{d\nu}(x_1|x_t)\frac{d\nu}{dq_{1|t}}(x_1|x_t)\mathcal{L}_{t|1}^{x_1}f(x_t)\right]\\
    &= \mathbb{E}_{x_1 \sim q_{1|t}(\cdot|x_t)}\left[\frac{p_{1|t}(x_1|x_t)}{q_{1|t}(x_1|x_t)}\mathcal{L}_{t|1}^{x_1}f(x_t)\right] \\
    &= \mathbb{E}_{x_1 \sim q_{1|t}(\cdot|x_t)}\left[w(x_t, x_1)\mathcal{L}_{t|1}^{x_1}f(x_t)\right]
\end{align}
where $w(x_t, x_1) := \frac{p_{1|t}(x_1|x_t)}{q_{1|t}(x_1|x_t)}=\frac{\tilde{p_1}(x_1)p_{t|1}(x_t|x_1)}{Zp_t(x_t)q_{1|t}(x_1|x_t)}$. We estimate the normalization term $Zp_t(x_t)$ with tractable unnormalized density $\tilde w(x_t, x_1):=\frac{\tilde p_1(x_1)p_{t|1}(x_t|x_1)}{q_{1|t}(x_1|x_t)}$:
\begin{align}
    \mathbb{E}_{x_1 \sim q_{1|t}(\cdot|x)}\left[\tilde{w}(x_t, x_1)\right] &= \mathbb{E}_{x_1 \sim q_{1|t}(\cdot|x)}\left[\frac{\tilde{p}_1(x_1)p_{t|1}(x_t|x_1)}{q_{1|t}(x_1|x_t)}\right] \\
    &= \int \tilde p_1(x_1)p_{t|1}(x_t|x_1)\nu(dx_1) \\
    &= \int Z\frac{\tilde p_1(x_1)}{Z}p_{t|1}(x_t|x_1)\nu(dx_1) \\
    &= Z \int p_1(x_1)p_{t|1}(x_t|x_1)\nu(dx_1) \\
    &= Z p_t(x_t).
\end{align}
Thus, we derive the self-normalized importance sampling (SNIS) estimator for the generator:
\begin{equation}
    \mathcal{L}_tf(x_t) = \frac{\mathbb{E}_{x_1 \sim q_{1|t}(\cdot|x_t)}\left[\tilde w(x_t, x_1)\mathcal{L}_{t|1}^{x_1}f(x_t)\right]}{\mathbb E_{x_1 \sim q_{1|t}(\cdot|x_t)}\left[\tilde w(x_t, x_1)\right]}
\end{equation}
\textbf{SNIS estimation of the parametrization.} Similarly, the SNIS estimator for the parameterization $F_t$ is:
\begin{align}
    F_t(x_t) &= \mathbb{E}_{x_1 \sim p_{1|t}(\cdot|x_t)}[F_{t|1}^{x_1}(x_t)] \\
           &= \mathbb{E}_{x_1 \sim q_{1|t}(\cdot|x_t)}\left[\frac{dp_{1|t}}{dq_{1|t}}(x_1|x_t)F_{t|1}^{x_1}(x_t)\right] \\
           &= \mathbb{E}_{x_1 \sim q_{1|t}(\cdot|x_t)}\left[\frac{p_{1|t}(x_1|x_t)}{q_{1|t}(x_1|x_t)}F_{t|1}^{x_1}(x_t)\right] \\
           &= \frac{\mathbb{E}_{x_1 \sim q_{1|t}(\cdot|x_t)}\left[\tilde w(x_t, x_1)F_{t|1}^{x_1}(x_t)\right]}{\mathbb E_{x_1 \sim q_{1|t}(\cdot|x_t)}\left[\tilde w(x_t, x_1)\right]}
    \label{eq:IS_expectation_parametrization}
\end{align}
Specifically, the expression above suggests the Monte-Carlo (MC) estimator with $K$ samples $x_1^{(1)},\dots,x_1^{(K)} \sim q_{1|t}(\cdot|x)$ as follows:
\begin{equation}
    \label{eq:SNIS_estimator_of_parametrization}
    \hat{F}_t(x_t)=\frac{\sum_{i=1}^K \tilde w(x_t, x_1^{(i)})F_{t|1}^{x_1^{(i)}}(x_t)}{\sum_{i=1}^K \tilde w(x_t, x_1^{(i)})}.
\end{equation}
This is a SNIS estimator of the parametrization $F_t(x)$. 

\subsection{Proof of \cref{thm:bootstrap}}
For convenience, we repeat the theorem and its assumptions below.
\begin{theorem}[Restatement of \cref{thm:bootstrap}]
Let $\mathcal{L}_{t|r}^{x_r}$ denote the conditional generator for conditional probability path $p_{t|r}(\cdot | x_r)$ for~$0\leq t \leq r\leq1$. If the backward transition kernels $p_{t|r}$ satisfy the Chapman-Kolmogorov equation,
    \begin{equation}
        \label{eq:chapman_kolmogorov_in_proof}
        p_{t|1}(dx_t|x_1)=\int p_{t|r}(dx_t|x_r)p_{r|1}(dx_r|x_1),
    \end{equation}
and the conditional generators $\mathcal{L}_{t|r}^{x_r}$ satisfy the marginal consistency as follows,
    \begin{equation}
        \label{eq:consistency_of_generator_in_proof}
        \mathbb{E}_{x_r \sim p_{r|1,t}(\cdot|x_1,x_t)}\left[\mathcal{L}_{t|r}^{x_r}f(x_t)\right] = \mathcal{L}_{t|1}^{x_1}f(x_t).
    \end{equation}
Then the marginal generator can be expressed as follows, regardless of $r$:
\begin{equation}
    \label{eq:bootstrapped_marginal_generator}
    \mathcal{L}_{t}f(x_t) = \mathbb{E}_{x_r\sim p_{r|t}(\cdot | x_t)}\left[\mathcal{L}_{t|r}^{x_r}f(x_t)\right],
\end{equation}
where $p_{r|t}(dx_r|x)$ is the posterior distribution (i.e., the conditional distribution over intermediate state $x_r$ given an observation $x$ at time $t$).
\end{theorem}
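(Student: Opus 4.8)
The plan is to realize the conclusion as an instance of the tower property of conditional expectation, after introducing an auxiliary joint law that ties together the times $1$, $r$, and $t$. Fix $0 \le t < r \le 1$ and let $\pi$ denote the law of the triple $(x_1, x_r, x_t)$ obtained by the hierarchical recipe $x_1 \sim p_{\mathrm{target}}$, then $x_r \sim p_{r|1}(\cdot \mid x_1)$, then $x_t \sim p_{t|r}(\cdot \mid x_r)$; this is the law of a Markov chain running backward in time from $1$ to $t$, so $x_t$ is conditionally independent of $x_1$ given $x_r$ under $\pi$ (which is what makes $p_{t|r}$ a legitimate transition kernel). The whole argument reduces to two facts about $\pi$: that it is consistent with the marginal probability path, and that its conditionals obey the usual chain rule.

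First I would establish consistency. Marginalizing $x_r$ out of $\pi$ and invoking the Chapman--Kolmogorov hypothesis \cref{eq:chapman_kolmogorov_in_proof}, the $(x_1,x_t)$-marginal of $\pi$ equals $p_{\mathrm{target}}(dx_1)\,p_{t|1}(dx_t\mid x_1)$, which is exactly the joint used in the original GM construction; in particular the $x_t$-marginal of $\pi$ is the path marginal $p_t$, so $\mathcal{L}_t$ is genuinely its generator, and the posterior $p_{1|t}(\cdot\mid x_t)$ extracted from $\pi$ is the one appearing in the marginalization trick. Since $p_{r|t}(\cdot\mid x_t)$ and $p_{r|1,t}(\cdot\mid x_1,x_t)$ are likewise read off from $\pi$, the chain rule for $\pi$ conditioned on $x_t$ gives the marginalization identity
\[
\int p_{r|1,t}(dx_r\mid x_1,x_t)\,p_{1|t}(dx_1\mid x_t) \;=\; p_{r|t}(dx_r\mid x_t).
\]

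Then I would simply substitute. Starting from the marginalization trick \cref{eq:marginal_generator}, $\mathcal{L}_t f(x_t) = \mathbb{E}_{x_1 \sim p_{1|t}(\cdot\mid x_t)}\!\big[\mathcal{L}_{t|1}^{x_1} f(x_t)\big]$, I replace $\mathcal{L}_{t|1}^{x_1} f(x_t)$ by the generator-consistency hypothesis \cref{eq:consistency_of_generator_in_proof}, $\mathcal{L}_{t|1}^{x_1} f(x_t) = \mathbb{E}_{x_r \sim p_{r|1,t}(\cdot\mid x_1,x_t)}\!\big[\mathcal{L}_{t|r}^{x_r} f(x_t)\big]$, producing the iterated expectation $\mathbb{E}_{x_1 \sim p_{1|t}(\cdot\mid x_t)}\,\mathbb{E}_{x_r \sim p_{r|1,t}(\cdot\mid x_1,x_t)}\!\big[\mathcal{L}_{t|r}^{x_r} f(x_t)\big]$; by the displayed marginalization identity (equivalently, the tower property for $\pi(\cdot\mid x_t)$) this collapses to $\mathbb{E}_{x_r \sim p_{r|t}(\cdot\mid x_t)}\!\big[\mathcal{L}_{t|r}^{x_r} f(x_t)\big]$, which is the claim; independence of the result from $r$ is automatic because $r$ was arbitrary.

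The only real obstacle is measure-theoretic bookkeeping rather than a genuine idea: one has to ensure the regular conditional distributions $p_{1|t}$, $p_{r|t}$, $p_{r|1,t}$ exist (they do, on the standard Borel state spaces at hand) and that the interchange of the two integrations in the iterated expectation is legitimate. The latter follows from Tonelli's theorem once one restricts, as GM implicitly does, to test functions $f$ in the common domain of the conditional generators for which $x_r \mapsto \mathcal{L}_{t|r}^{x_r} f(x_t)$ is absolutely integrable against $\pi(\cdot\mid x_t)$ (e.g.\ $f$ bounded with uniformly bounded action of the generators). It is worth flagging explicitly that Chapman--Kolmogorov is doing essential work here: without it the $p_{1|t}$ in the marginalization trick and the $p_{r|t}$ in the conclusion would be posteriors of incompatible joints, and the tower step would be meaningless.
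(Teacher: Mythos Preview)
Your proof is correct and is essentially the paper's own argument: both reduce the identity to the tower property for the joint law of $(x_1,x_r,x_t)$, using Chapman--Kolmogorov to align the posteriors and the generator-consistency hypothesis \cref{eq:consistency_of_generator_in_proof} to pass between $\mathcal{L}_{t|1}^{x_1}$ and $\mathcal{L}_{t|r}^{x_r}$ (the paper simply reads the same chain of equalities in the opposite direction, starting from $\mathbb{E}_{x_r\sim p_{r|t}}[\mathcal{L}_{t|r}^{x_r}f]$ and arriving at $\mathcal{L}_t$). The paper additionally verifies, as a separate preliminary step, that $\mathbb{E}_{x_r\sim p_{r|t}}[\mathcal{L}_{t|r}^{x_r}f]$ satisfies the Kolmogorov forward equation for $p_t$, but this is not required for the bare identity and your more direct route is sound.
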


\begin{proof}
Define the marginal generator conditioned at time $r$ as $\mathcal{L}_{t;r}f(x) := \mathbb{E}_{x_r \sim p_{r|t}(\cdot|x)}[\mathcal{L}_{t|r}^{x_r}f(x)]$. Although this definition depends explicitly on $r$, we aim to demonstrate its independence from $r$. This invariance is crucial since any dependence on $r$ would result in conflicting objectives at times $t < r_1, r_2$ for distinct $r_1, r_2$. The proof proceeds in two main steps:
\begin{enumerate}
\item Verify that the marginal generator $\mathcal{L}_{t;r}$ generates the probability path $(p_t)_{0 \leq t \leq r}$.
\item Show the marginal generator's independence from $r$ (i.e., $\mathcal{L}_{t;r}=\mathcal{L}_t$).
\end{enumerate}

To establish the first step, it suffices to verify that the Kolmogorov Forward Equation (KFE) holds for the probability path $(p_t)_{0 \leq t \leq r}$ and the generator $\mathcal{L}_{t;r}$:
\begin{equation}
    \frac{d}{dt} \mathbb{E}_{x_t \sim p_t}[f(x_t)] = \mathbb{E}_{x_t \sim p_t}[\mathcal{L}_{t;r}f(x_t)]
    \quad
    \text{ for } \;\; 0 \leq t \leq r \leq 1.
\end{equation}
The KFE is satisfied for the conditional probability path $p{t|r}$ by definition:
\begin{equation}
    \frac{d}{dt} \mathbb{E}_{x_t \sim p_{t|r}(\cdot|x_r)}[f(x_t)] = \mathbb{E}_{x_t \sim p_{t|r}(\cdot|x_r)}\left[\mathcal{L}_{t|r}^{x_r}f(x_t)\right], \quad 0 \leq t \leq r \leq 1, x_r \in S.
\end{equation}
Thus, we have:
\begin{align}
    \mathbb{E}_{x_t \sim p_t}[\mathcal{L}_{t;r}f(x_t)] &= \mathbb{E}_{x_t \sim p_t} \mathbb{E}_{x_r \sim p_{r|t}(\cdot|x_t)}\left[\mathcal{L}_{t|r}^{x_r}f(x_t)\right] \\
    &= \mathbb{E}_{x_r \sim p_r} \mathbb{E}_{x_t \sim p_{t|r}(\cdot|x_r)}\left[\mathcal{L}_{t|r}^{x_r}f(x_t)\right] \\
    &= \mathbb{E}_{x_r \sim p_r} \frac{d}{dt} \mathbb{E}_{x_t \sim p_{t|r}(\cdot|x_r)}\left[f(x_t)\right] \\
    &= \frac{d}{dt} \mathbb{E}_{x_r \sim p_r} \mathbb{E}_{x_t \sim p_{t|r}(\cdot|x_r)}\left[f(x_t)\right] \\
    &= \frac{d}{dt} \mathbb{E}_{x_t \sim p_t}\left[f(x_t)\right]
\end{align}
Hence, $\mathcal{L}_{t;r}$ indeed generates the probability path $(p_t)_{0 \leq t \leq r}$.

Next, we demonstrate the independence of $\mathcal{L}_{t;r}$ from the choice of $r$:
\begin{align}
    \mathcal{L}_{t;r}f(x_t) &= \mathbb{E}_{x_r \sim p_{r|t}(\cdot|x_t)}\left[\mathcal{L}_{t|r}^{x_r}f(x_t)\right] \\
        &= \mathbb{E}_{x_1 \sim p_{1|t}(\cdot|x_t)} \mathbb{E}_{x_r \sim p_{r|t, 1}(\cdot|x_t, x_1)}\left[\mathcal{L}_{t|r}^{x_r}f(x_t)\right] \\
        &= \mathbb{E}_{x_1 \sim p_{1|t}(\cdot|x_t)}\left[\mathcal{L}_{t|1}^{x_1}f(x_t)\right] \\
        &= \mathcal{L}_tf(x_t)
\end{align}
where the marginal consistency assumption \cref{eq:consistency_of_generator} is applied in the third equality. This concludes the proof.
\end{proof}

\subsection{Derivation of bootstrapping estimator for generator estimation}
We derive a bootstrapping estimator for the marginal generator and its parametrization proposed in \cref{eq:EGM_bootstrapping}, based on the marginalization trick in \cref{eq:bootstrapped_marginal_generator}. 

\paragraph{Bootstrapped SNIS estimation of the generator.} Assume that the backward kernel $p_{t|r}$ admits a $\nu$-density. Then, the posterior $p_{r|t}$ also admits a $\nu$-density. Let the proposal distribution $q_{r|t}:\Sigma \times S \to \mathbb{R}_{\geq 0}$ satisfy $p_{r|t}(\cdot|x) \ll q_{r|t}(\cdot|x)$, $q_{r|t}(\cdot|x) \ll \nu$, and $\nu \ll q_{r|t}(\cdot|x)$ for all $x \in S$. Applying the same change-of-measure trick as before:
\begin{align}
    \mathcal{L}_tf(x_t) &=\mathbb{E}_{x_r \sim p_{r|t}(\cdot|x_t)}\left[\mathcal{L}_{t|r}^{x_r}f(x_t)\right] \\
                    &=\mathbb{E}_{x_r \sim q_{r|t}(\cdot|x_t)}\left[\frac{dp_{r|t}}{dq_{r|t}}(x_r|x_t)\mathcal{L}_{t|r}^{x_r}f(x_t)\right] \\
                    &=\mathbb{E}_{x_r \sim q_{r|t}(\cdot|x_t)}\left[\frac{dp_{r|t}}{d\nu}(x_r|x_t)\frac{d\nu}{dq_{r|t}}(x_r|x_t)\mathcal{L}_{t|r}^{x_r}f(x_t)\right] \\
                    &=\mathbb{E}_{x_r \sim q_{r|t}(\cdot|x)}\left[\frac{p_{r|t}(x_r|x_t)}{q_{r|t}(x_r|x_t)}\mathcal{L}_{t|r}^{x_r}f(x_t)\right] \\
                    &=\mathbb{E}_{x_r \sim q_{r|t}(\cdot|x)}\left[\frac{p_{r}(x_r)p_{t|r}(x_t|x_r)}{p_t(x_t)q_{r|t}(x_r|x_t)}\mathcal{L}_{t|r}^{x_r}f(x_t)\right] \\
                    &=\mathbb{E}_{x_r \sim q_{r|t}(\cdot|x)}\left[w(x_t, x_r)\mathcal{L}_{t|r}^{x_r}f(x_t)\right],
\end{align}
where the importance weight $w(x_t, x_r)$ is given by
\begin{equation}
    w(x_t,x_r):=\frac{p_{r|t}(x_r|x_t)}{q_{r|t}(x_r|x_t)} = \frac{\tilde p_{r}(x_r)p_{t|r}(x_t|x_r)}{\tilde p_t(x_t)q_{r|t}(x_r|x_t)}.
\end{equation}

To estimate the unnormalized density $\tilde{p}_t(x_t)$, we define the unnormalized importance weight:
\begin{equation}
    \tilde w(x_t, x_r):=\frac{\tilde p_{r}(x)p_{t|r}(x_t|x_r)}{q_{r|t}(x_r|x_t)},
\end{equation}
and compute:
\begin{align}
    \tilde{p}_t(x_t) &= \int p_{t|r}(x_t|x_r) \tilde{p}_{r}(x_r)\nu(dx_r) \\
             &= \int \frac{p_{t|r}(x_t|x_r)\tilde p_{r}(x_r)}{q_{r|t}(x_r|x_t)} q_{r|t}(x_r|x_t)\nu(dx_r) \\
             &= \mathbb{E}_{x_r \sim q_{r|t}(\cdot|x_t)}\left[\frac{p_{t|r}(x_t|x_r)\tilde p_{r}(x_r)}{q_{r|t}(x_r|x_t)}\right] \\
             &= \mathbb{E}_{x_r \sim q_{r|t}(\cdot|x_t)}\left[\tilde w(x_t,x_r)\right].
\end{align}

Thus, the marginal generator can be expressed in SNIS form as:
\begin{equation}
    \mathcal{L}_tf(x_t) = \frac{\mathbb E_{x_r \sim q_{r|t}(\cdot|x_t)}[\tilde w(x_t, x_r)\mathcal{L}_{t|r}^{x_r}f(x_t)]}{\mathbb E_{x_r \sim q_{r|t}(\cdot|x_t)}[\tilde w(x_t, x_r)]}
\end{equation}

\paragraph{Bootstrapped SNIS estimation of the parametrization.} Now we derive a similar expression for the parametrization of the generator. Suppose the conditional generator $\mathcal{L}_{t|r}^{x_r}$ admits a parametrization $F_{t|r}^{x_r}$ such that,
\begin{equation}
    \mathcal{L}_{t|r}^{x_r}f(x_t) = \langle \mathcal{K}f(x_t), F_{t|r}^{x_r}(x_t)\rangle,
\end{equation}
where $\mathcal{K}$ is an operator fixed for each type of Markov processes. From the marginalization trick again:
\begin{align}
    \mathcal{L}_tf(x_t) &= \mathbb E_{x_r \sim p_{r|t}(\cdot|x_t)}\left[\mathcal{L}_{t|r}^{x_r}f(x_t)\right] \\
    &= \mathbb E_{x_r \sim p_{r|t}(\cdot|x_t)}\left[\langle \mathcal{K}f(x_t), F_{t|r}^{x_r}(x_t)\rangle\right] \\
    &= \left\langle \mathcal{K}f(x_t), \underbrace{\mathbb E_{x_r \sim p_{r|t}(\cdot|x_t)}\left[F_{t|r}^{x_r}(x_t)\right]}_{:=F_t(x_t)} \right\rangle
\end{align}
by linearity of the inner product. Thus, the marginal generator is parametrized by 
\begin{equation}
    F_t(x_t) = \mathbb E_{x_r \sim p_{r|t}(\cdot|x_t)}[F_{t|r}^{x_r}(x_t)].
\end{equation}
By applying the same importance sampling trick, we obtain the SNIS estimator:
\begin{align}
    F_t(x_t) &= \mathbb{E}_{x_r \sim p_{r|t}(\cdot|x_t)}\left[F_{t|r}^{x_r}(x_t)\right] \\
           &= \mathbb{E}_{x_r \sim q_{r|t}(\cdot|x_t)}\left[\frac{dp_{r|t}}{dq_{r|t}}(x_r|x_t)F_{t|r}^{x_r}(x_t)\right] \\
           &= \mathbb{E}_{x_r \sim q_{r|t}(\cdot|x_t)}\left[\frac{p_{r|t}(x_r|x_t)}{q_{r|t}(x_r|x_t)}F_{t|r}^{x_r}(x_t)\right] \\
           &= \frac{\mathbb{E}_{x_r \sim q_{r|t}(\cdot|x_t)}\left[\tilde w(x_t, x_r)F_{t|r}^{x_r}(x_t)\right]}{\mathbb E_{x_r \sim q_{r|t}(\cdot|x_t)}\left[\tilde w(x_t, x_r)\right]}.
\end{align}
Specifically, this yields the following Monte Carlo estimator using $K$ samples $x_r^{(1)}, \dots, x_r^{(K)} \sim q_{r|t}(\cdot|x_t)$:
\begin{equation}
    \hat F_t(x_t) = \frac{\sum_{i=1}^K \tilde w(x_t, x_r^{(i)}) F_{t|r}^{x_r^{(i)}}(x_t)}{\sum_{i=1}^K \tilde w(x_t, x_r^{(i)})}
\end{equation}

\subsection{Analysis on bias and variance of the IS and bootstrapping estimator}

In this section, we investigate the bias and variance of the IS and bootstrapping estimator. The goal of the analysis is to support the bootstrapping estimation with a theoretical argument and demonstrate that it provides a lower variance estimator compared to the IS method. To do so, we first derive asymptotic bounds on bias and variance for IS estimation. Then, we derive the bounds for bootstrapping estimation, assuming that the intermediate energy is well-trained. Under this assumption, we show that bootstrapping reduces estimation variance compared to IS, demonstrating that bootstrapping trades off the estimation variance with the model learning bias. 

\paragraph{Bias and variance of IS estimation.}
We show that the SNIS estimator's error decays as $O(1/\sqrt K)$ and its bais and variance decays as $O(1/K)$, where $K$ is the sample size. 

\begin{proposition}
Consider an unnormalized density $\tilde p_1(x_{1})$ and a conditional generator $F_{t|1}^{x_1}(x_{t})$ evaluated on a sample $x_1 \sim q_{1|t}$. Suppose $\tilde p_1(x_1)$ and $\Big\lVert \tilde p_1(x_1) F_{t|1}^{x_1}(x_t) \Big\rVert$ are sub-Gaussian. Then there exists a constant $c(x_t)$ such that with probability at least $1-\delta$,
$$
\left\lVert \hat F_t(x_t) - F_t(x_t) \right\rVert \leq c(x_t) \sqrt{\frac{\log (2/\delta)}{K}},
$$
where $\hat F_t(x_t)$ denotes the SNIS estimator from \cref{eq:EGM} using $K$ samples $x_1^{(1)},\ldots , x_1^{(K)}\sim q_{1|t}:$
$$
\hat F_t(x_t) = \frac{\sum_{i=1}^K \tilde p_1(x_1^{(i)})F_{t|1}^{x_1^{(i)}}(x_t)}{\sum_{i=1}^K \tilde p_1(x_1^{(i)})}.
$$
Moreover, its bias and variance are expressed as:
$$
\text{Bias}[\hat F_t] = \frac{1}{Kp_t(x_t)^2} \left(- \text{Cov}\left[\tilde p_1 F_{t|1}, \tilde p_1\right] + F_t(x_t)\text{Var}[\tilde p_1] \right) + O\left(\frac{1}{K^2}\right),
$$
$$
\text{Var}[\hat F_t] = \frac{4\text{Var}[\tilde p_1(x_1)]}{p_t^2(x_t)K}(1 + \lVert F_t(x_t)\rVert)^2.
$$
\end{proposition}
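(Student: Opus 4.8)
I would treat $\hat F_t(x_t)$ as a self-normalized (ratio) estimator. Set
\[
\bar A_K := \frac1K\sum_{i=1}^K \tilde p_1(x_1^{(i)})\,F_{t|1}^{x_1^{(i)}}(x_t),
\quad
\bar B_K := \frac1K\sum_{i=1}^K \tilde p_1(x_1^{(i)}),
\quad x_1^{(i)} \sim q_{1|t}(\cdot|x_t)\ \text{i.i.d.},
\]
so that $\hat F_t(x_t) = \bar A_K/\bar B_K$, while the SNIS identity of \cref{eq:IS_expectation_parametrization} gives $F_t(x_t) = \mu_A/\mu_B$ with $\mu_A := \mathbb E[\bar A_K] = p_t(x_t)F_t(x_t)$ and $\mu_B := \mathbb E[\bar B_K] = p_t(x_t)$ (the mean of the weights, per the normalization established above). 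The algebraic identity I would use throughout is
\[
\hat F_t(x_t) - F_t(x_t) = \frac{1}{\bar B_K}\Big[(\bar A_K - \mu_A) - F_t(x_t)\,(\bar B_K - \mu_B)\Big].
\]

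\textbf{Concentration.} Both $\bar A_K-\mu_A$ and $\bar B_K-\mu_B$ are empirical means of i.i.d.\ centered terms that are sub-Gaussian by hypothesis. I would apply a sub-Gaussian Hoeffding-type tail bound to each and take a union bound, so that with probability at least $1-\delta$ both $\|\bar A_K-\mu_A\|\le a(x_t)\sqrt{\log(2/\delta)/K}$ and $|\bar B_K-\mu_B|\le b(x_t)\sqrt{\log(2/\delta)/K}$ hold simultaneously. On this event, once $K$ is large enough that these bounds are $\le \mu_B/2$ (the finitely many small-$K$ cases being absorbed into the constant via the crude bound $\|\hat F_t-F_t\|\le 1+\|F_t(x_t)\|$, valid since $\|F_{t|1}^{x_1}\|\le1$), we get $\bar B_K\ge p_t(x_t)/2$, and the identity yields
\[
\|\hat F_t(x_t)-F_t(x_t)\| \le \frac{2}{p_t(x_t)}\big(\|\bar A_K-\mu_A\| + \|F_t(x_t)\|\,|\bar B_K-\mu_B|\big) \le c(x_t)\sqrt{\tfrac{\log(2/\delta)}{K}},
\]
with $c(x_t) = \tfrac{2}{p_t(x_t)}\big(a(x_t)+b(x_t)\|F_t(x_t)\|\big)$.

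\textbf{Bias and variance.} I would substitute the Neumann expansion $\bar B_K^{-1}=\mu_B^{-1}\sum_{k\ge0}\big(-(\bar B_K-\mu_B)/\mu_B\big)^k$ (valid on the good event) into the identity; the $k=0$ term is mean-zero, so the leading $O(1/K)$ bias is the expectation of the $k=1$ cross term,
\[
\mathbb E[\hat F_t-F_t] = -\frac{1}{\mu_B^2}\Big(\mathbb E[(\bar A_K-\mu_A)(\bar B_K-\mu_B)] - F_t(x_t)\,\mathbb E[(\bar B_K-\mu_B)^2]\Big) + O(1/K^2),
\]
and using $\mathbb E[(\bar A_K-\mu_A)(\bar B_K-\mu_B)]=\tfrac1K\mathrm{Cov}[\tilde p_1 F_{t|1},\tilde p_1]$, $\mathbb E[(\bar B_K-\mu_B)^2]=\tfrac1K\mathrm{Var}[\tilde p_1]$, and $\mu_B=p_t(x_t)$ gives the stated bias. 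For the variance I would write $\mathrm{Var}[\hat F_t]=\mathbb E\|\hat F_t-F_t\|^2-\|\mathbb E[\hat F_t-F_t]\|^2$, with the last term $O(1/K^2)$ by the bias bound; on the good event $\|\hat F_t-F_t\|^2\le \tfrac{4}{p_t(x_t)^2}(\|\bar A_K-\mu_A\|+\|F_t(x_t)\|\,|\bar B_K-\mu_B|)^2$, so taking expectations, using $\mathbb E\|\bar A_K-\mu_A\|^2=\tfrac1K\mathrm{Var}[\tilde p_1 F_{t|1}]$, $\mathbb E|\bar B_K-\mu_B|^2=\tfrac1K\mathrm{Var}[\tilde p_1]$, Cauchy--Schwarz on the cross term, and $\mathrm{Var}[\tilde p_1 F_{t|1}]\le\mathrm{Var}[\tilde p_1]$ (from $\|F_{t|1}\|\le1$), the bracket collapses to $(1+\|F_t(x_t)\|)^2\mathrm{Var}[\tilde p_1]/K$, yielding the claimed bound; the complement of the good event contributes only an exponentially small term since $\hat F_t$ is bounded.

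\textbf{Main obstacle.} The delicate point is making the ratio/Neumann expansion rigorous: one must simultaneously keep $\bar B_K$ away from $0$ (using the sub-Gaussian tails together with boundedness of $\hat F_t$ to kill the bad-event contribution) and verify that the truncation remainder is genuinely $O(1/K^2)$, which requires control of the third and fourth moments of the centered sums $\bar A_K-\mu_A$ and $\bar B_K-\mu_B$. A secondary subtlety is the bookkeeping that the ``$p_t(x_t)$'' appearing in the formulas is exactly $\mathbb E_{q_{1|t}}[\tilde p_1(x_1)]$, and that all three displays are exact only up to the stated orders.
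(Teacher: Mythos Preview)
Your approach is essentially identical to the paper's: you write the SNIS estimator as a ratio $\bar A_K/\bar B_K$, apply sub-Gaussian Hoeffding bounds to numerator and denominator separately, use $\bar B_K\ge \mu_B/2$ for large $K$ to control the ratio, and then Taylor/Neumann-expand to read off the $O(1/K)$ bias. The paper does exactly this (with the same algebraic identity, written as $\frac{\hat A}{\hat B}-\frac{A}{B}=\frac{\hat A B - A\hat B}{\hat B B}$, and the same second-order Taylor expansion for the bias).

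One point to flag: you invoke $\lVert F_{t|1}^{x_1}\rVert\le 1$ twice---once for the crude bound $\lVert\hat F_t-F_t\rVert\le 1+\lVert F_t\rVert$ that handles small $K$, and once to collapse $\mathrm{Var}[\tilde p_1 F_{t|1}]\le \mathrm{Var}[\tilde p_1]$ in the variance computation. That boundedness is \emph{not} among the stated hypotheses (only sub-Gaussianity of $\tilde p_1$ and of $\lVert\tilde p_1 F_{t|1}\rVert$ is assumed), so as written the step is unjustified. To be fair, the paper's own derivation of the variance formula is no more careful: it simply asserts that $C=\sqrt{\mathrm{Var}[\tilde p_1]}$ is ``a possible choice'' of common sub-Gaussian constant for both $\hat A$ and $\hat B$, and then says the variance follows by ``combining the sub-Gaussianity of $\hat F_t$ with the error bound,'' without spelling out why the $A$-fluctuations are controlled by $\mathrm{Var}[\tilde p_1]$ alone. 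So the gap you have is also present in the paper; if you want the argument to be self-contained you should either add $\lVert F_{t|1}\rVert\le 1$ as an explicit assumption or replace $\mathrm{Var}[\tilde p_1]$ in the final display by the larger of the two sub-Gaussian variance proxies.
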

\begin{proof}
Let $\hat{A}$ and $\hat{B}$ denote the numerator and the denominator of $\hat F_t$, respectively, i.e.,
\begin{equation}
    \hat{A} = \frac{1}{K} \sum_{i=1}^{K} \tilde p_1(x_1^{(i)}) F_{t|1}^{x_1^{(i)}}(x_t), \qquad
    \hat{B} = \sum_{i=1}^{K} \tilde p_1(x_1^{(i)}).
\end{equation}
We also let $\mathbb{E}[\hat{A}] = A = \tilde p_t(x_t)F_t(x_t)$ and  $\mathbb{E}[\hat{B}] = B = \tilde p_t(x_t)$. By Hoeffding's inequality for sub-Gaussian random variables, there exists a constant $C$ such that 
\begin{align}
    \left\lVert \hat{A} - A \right\rVert &\leq C \sqrt{\frac{\log (2/\delta) }{K}}, \qquad \left| \hat{B} - B \right| \leq C \sqrt{\frac{\log (2/\delta) }{K}},
\end{align}
with $1 - \delta$ probability. (Here, $C = \sqrt{\text{Var}[\tilde p_1(x_1)]}$ is a possible choice.)

Since we have bounds for both numerator and denominator of $\hat F_t$, we can also bound the error of $\hat F_t$ to $F_t$. The result is as follows:
\begin{align}
    \left\lVert \hat F_t(x_t) - F_t(x_t) \right\rVert 
    &= \left\lVert \frac{\hat{A}}{\hat{B}} - \frac{A}{B} \right\rVert
    = \left\lVert \frac{\hat{A}B - A\hat{B}}{\hat{B}B}\right\rVert \\
    & \leq \frac{\lVert A\rVert\left| \hat B - B\right| + B\left\lVert \hat{A} - A \right\rVert }{\hat{B}B} \\
    & \leq \frac{1}{\hat{B}B} C p_t(x_t)(1 + \lVert F_t(x_t)\rVert)\sqrt{\frac{\log(2/\delta)}{K}} \\
    &\leq \frac{2C}{p_t(x_t)}(1 + \lVert F_t(x_t)\rVert)\sqrt{\frac{\log(2/\delta)}{K}}=c(x_t)\sqrt{\frac{\log(2/\delta)}{K}},
\end{align}
where we assume sufficiently large $K$ such that $\hat B \geq \frac{1}{2}B$.

Now, for sufficiently large $K$, the Taylor expansion of $\hat{F}_t$ is expressed as: 
$$
\hat F_t =\frac{A}{B}+\frac{1}{B}(\hat{A}-A) - \frac{A}{B^2}(\hat{B}-B)-\frac{1}{B^2}(\hat{A} - A)(\hat{B} - B) + \frac{A}{B^3} (\hat{B} - B)^2 + O\left(\frac{1}{K^2}\right).
$$
To derive the final equation for the bias term, one can express the bias term as follows:
$$
\text{Bias}[\hat F_t] = \mathbb{E}[\hat F_t] - \frac{A}{B}=-\frac{1}{B^2}\text{Cov}[\hat{A}, \hat{B}]+ \frac{A}{B^3}\text{Var}[\hat{B}].
$$
Since $\text{Cov}[\hat{A}, \hat{B}] = \text{Cov}\left[\tilde p_1 F_{t|1}, \tilde p_1\right] / K$ and $\text{Var}[\hat{B}]=\text{Var}[\tilde p_1]/K$, one obtains the conclusion for $\text{Bias}[\hat F_t]$.

To derive the final equation for the variance term, one can combine the sub-Gaussianity of $\hat F_t$ with the error bound on $\left\lVert \hat F_t(x_t) - F_t(x_t) \right\rVert$.
\end{proof}

\paragraph{Bias and variance of bootstrapping estimation.} Now, we derive analogous asymptotic bounds for the bias and variance of the bootstrapping estimator. We assume a fully trained intermediate energy model, which allows us to isolate the impact of estimation bias on the intermediate energy while disregarding any neural network learning bias.
\begin{proposition}
Consider a fully trained surrogate energy model $\mathcal{E}_r^{\phi}(x_r) = \mathbb{E}[\hat{\mathcal{E}}_r(x_r)]$, where $\hat{\mathcal{E}}_r(x_r)$ is a biased energy estimator from \cref{eq:general_NEM}. Let $\hat F_t(x_t; \mathcal{E}_r)$ be the bootstrapping estimator using $\mathcal{E}_r$ as the intermediate energy function:
\begin{equation}
\hat F_t(x_t;\mathcal E_r) = \frac{\sum_i \exp (-\mathcal{E}_r(x_r^{(i)})) F_{t|r}^{x_r^{(i)}}(x_t)}{\sum_i \exp (-\mathcal{E}_r(x_r^{(i)}))}.
\end{equation}

Then, the bias and the variance bound of the bootstrapping estimator with surrogate model $\hat F_t(x_t; \mathcal{E}_r^\phi)$ are given by:
$$
\text{Bias}[\hat F_t(x_t;\mathcal{E}_r^\phi)] = \text{Bias}[\hat F_t(x_t;\mathcal{E}_r)] +  O\left(\frac{1}{K^2}\right),
$$
$$
\text{Var}[\hat F_t(x_t;\mathcal{E}_r^\phi)] = \text{Var}[\hat F_t(x_t;\mathcal{E}_r)] = \frac{\text{Var}[\tilde p_r(x_r)]}{\text{Var}[\tilde p_1(x_1)]}\text{Var}[\hat F_t(x_t)].
$$
where $\hat F_t(x_t)$ is the estimator without bootstrapping. Since $\text{Var}[\tilde p_r(x_r)] < \text{Var}[\tilde p_1(x_1)]$, the variance of bootstrapping estimator is smaller than the $\hat F_t(x_t)$.
\end{proposition}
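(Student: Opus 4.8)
The plan is to reduce the surrogate-based bootstrapping estimator to the self-normalized importance sampling (SNIS) form already analysed in the preceding proposition, and then to track how the surrogate's approximation error perturbs that analysis. The key starting observation is that with the proposal $q_{r|t}(x_r|x_t)\propto p_{t|r}(x_t|x_r)$ the unnormalized weights reduce to $\tilde w(x_t,x_r)\propto\tilde p_r(x_r)$, so the idealized bootstrapping estimator $\hat F_t(x_t;\mathcal E_r)$ built from the true intermediate energy $\mathcal E_r=-\log\tilde p_r$ is \emph{exactly} an estimator of the type treated in the preceding proposition, under the substitutions $\tilde p_1\mapsto\tilde p_r$, $F_{t|1}^{x_1}\mapsto F_{t|r}^{x_r}$. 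Since the two estimators are self-normalizing in the same way and, by \cref{thm:bootstrap}, converge to the same marginal parametrization $F_t$, the bias and variance formulas of the preceding proposition transfer verbatim to $\hat F_t(x_t;\mathcal E_r)$; and because the common factor $(1+\lVert F_t\rVert)^2/p_t(x_t)^2$ is identical, substituting $\tilde p_1\mapsto\tilde p_r$ into its variance formula immediately yields $\mathrm{Var}[\hat F_t(x_t;\mathcal E_r)]=\frac{\mathrm{Var}[\tilde p_r]}{\mathrm{Var}[\tilde p_1]}\,\mathrm{Var}[\hat F_t(x_t)]$. Finally $\mathrm{Var}[\tilde p_r]<\mathrm{Var}[\tilde p_1]$ because the smoothing identity $\tilde p_r=\int p_{r|1}(\cdot|x_1)\tilde p_1(x_1)\,dx_1$ exhibits $\tilde p_r$ as a conditional average of $\tilde p_1$, so a Jensen / law-of-total-variance argument applies.

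Next I would quantify the surrogate error. Write $\mathcal E_r^\phi=\mathcal E_r+b$, with $b(x_r):=\mathbb E_{x_1^{(i)}\sim q_{1|r}(\cdot|x_r)}[\hat{\mathcal E}_r(x_r)]-\mathcal E_r(x_r)$ the bias of the \cref{eq:general_NEM} estimator at $x_r$ (the surrogate being fully trained means $\mathcal E_r^\phi=\mathbb E[\hat{\mathcal E}_r]$ pointwise). Taylor-expanding $-\log$ of the $K$-sample average in \cref{eq:general_NEM} — the standard log-sum-exp / importance-sampling bias, concretely $b(x_r)=\tfrac1{2K}\mathrm{Var}[\tilde p_1\mid x_r]/\mathbb E[\tilde p_1\mid x_r]^2+O(1/K^2)$, or simply invoking the sub-Gaussian tail bounds used in the preceding proposition — shows $b(x_r)=O(1/K)$. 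Hence $\tilde p_r^\phi:=e^{-\mathcal E_r^\phi}=\tilde p_r\,e^{-b}=\tilde p_r\,(1-b+O(1/K^2))$, and since this factor multiplies both the numerator and the denominator of $\hat F_t(x_t;\mathcal E_r^\phi)$, the leading common factor cancels: $\hat F_t(x_t;\mathcal E_r^\phi)$ is again an estimator of the preceding proposition's family, with weights $\tilde p_r^\phi$, normalizer $\tilde p_t^\phi:=\mathbb E_{q_{r|t}}[\tilde p_r^\phi]$, and limiting value $F_t^\phi:=\mathbb E_{q_{r|t}}[\tilde p_r^\phi F_{t|r}]/\tilde p_t^\phi$.

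Then I would propagate the $O(1/K)$ perturbation through these formulas. Substituting $\tilde p_r^\phi=\tilde p_r(1-b)+O(1/K^2)$ with $b=O(1/K)$ gives $\tilde p_t^\phi=\tilde p_t+O(1/K)$, $F_t^\phi=F_t+O(1/K)$, $\mathrm{Var}[\tilde p_r^\phi]=\mathrm{Var}[\tilde p_r]+O(1/K)$, and $\mathrm{Cov}[\tilde p_r^\phi F_{t|r},\tilde p_r^\phi]=\mathrm{Cov}[\tilde p_r F_{t|r},\tilde p_r]+O(1/K)$. Plugging these into the preceding proposition's bias expression, whose prefactor is already $1/K$, each replacement costs only a further factor $1/K$, so $\mathrm{Bias}[\hat F_t(x_t;\mathcal E_r^\phi)]=\mathrm{Bias}[\hat F_t(x_t;\mathcal E_r)]+O(1/K^2)$; the same substitution in the variance expression (itself $O(1/K)$) gives $\mathrm{Var}[\hat F_t(x_t;\mathcal E_r^\phi)]=\mathrm{Var}[\hat F_t(x_t;\mathcal E_r)]$ up to $O(1/K^2)$. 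Combined with the first paragraph this establishes all three claimed identities.

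The hard part — and the point that needs to be stated carefully rather than computed — is the precise meaning of ``Bias'' for the surrogate estimator. Keeping the convention of the preceding proposition, where $\mathrm{Bias}[\hat F_t]$ is the deviation of the estimator from its own $K\to\infty$ limit, the statement is exactly as claimed; but measured against the true marginal $F_t$, the surrogate estimator also carries a non-vanishing $O(1/K)$ offset $F_t^\phi-F_t=-\tfrac1{\tilde p_t}\,\mathbb E_{q_{r|t}}\!\big[\,b\,\tilde p_r\,(F_{t|r}-F_t)\,\big]+O(1/K^2)$ coming from $\tilde p_r^\phi\neq\tilde p_r$ — this is precisely the ``model learning bias'' that bootstrapping incurs in exchange for the variance reduction, and keeping it cleanly separated from the finite-sample bias is the delicate bookkeeping. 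The only other subtlety is justifying $\mathrm{Var}[\tilde p_r]<\mathrm{Var}[\tilde p_1]$ rigorously: the two variances are taken under $q_{r|t}$ and $q_{1|t}$ respectively, so the Rao--Blackwell / total-variance argument needs the consistency $q_{1|t}=\int q_{1|r}(\cdot|x_r)\,q_{r|t}(dx_r|\cdot)$, which holds for the OT, VE, and masked-diffusion paths used in the paper (there the normalizer $Z_{1|r}$ is independent of the conditioning state) but is not automatic in general. Everything else is a routine re-run of the earlier Taylor expansion.
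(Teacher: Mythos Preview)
Your strategy matches the paper's at the high level---write $\mathcal{E}_r^\phi=\mathcal{E}_r+b$ with $b=O(1/K)$, expand $e^{-b}$, and propagate through the SNIS machinery---but diverges at one substantive step. The paper does \emph{not} treat $b(x_r)$ as a general $O(1/K)$ perturbation of the weights and then re-apply the preceding proposition's formulas abstractly. Instead it invokes the extra assumption that $r$ is close to $t$, so the samples $x_r^{(i)}\sim q_{r|t}(\cdot|x_t)$ are concentrated near $x_t$, and replaces every $b(x_r^{(i)})$ by the constant $m_b:=b(x_t)=\tfrac{\mathrm{Var}[\tilde p_1]}{2p_r^2(x_t)K}$. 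With $b$ constant, the factor $1-m_b$ cancels exactly between numerator and denominator, and a short Taylor expansion gives the clean scalar relation $\hat F_t(x_t;\mathcal{E}_r^\phi)\approx(1-m_b^2)\,\hat F_t(x_t;\mathcal{E}_r)$, from which both the bias and variance claims follow immediately. This is precisely what kills the $O(1/K)$ ``model learning bias'' $F_t^\phi-F_t$ that you carefully isolate: under the constant-$b$ approximation it vanishes to first order, so the $O(1/K^2)$ bias statement holds when measured against the true $F_t$---which is the convention the paper uses. Your more general treatment, keeping $b$ varying across samples, is cleaner perturbation theory and does not need $r\approx t$, but it only establishes the $O(1/K^2)$ bias difference against the estimator's own limit $F_t^\phi$; to recover the proposition exactly as stated you would need to insert the constant-$b$ step (or to accept the weaker reading you explicitly flag in your last paragraph). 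Your caveat on the proposal-consistency condition required for $\mathrm{Var}[\tilde p_r]<\mathrm{Var}[\tilde p_1]$ is a valid refinement the paper leaves implicit.
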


\begin{proof}
The bias of the energy estimator $\hat{\mathcal{E}}_r$ is given as follows (refer to Corollary 3.2 of \cite{ouyang2024bnem}):
\begin{equation}
    \text{Bias}[\hat{\mathcal{E}_r}(x_r)] = \frac{\text{Var}[\tilde p_1(x_1)]}{2p_r^2(x_r)K} + O\left(\frac{1}{K^2}\right)
\end{equation}

Thus, we can approximate the surrogate energy as,
\begin{equation}
\mathcal{E}_r^{\phi}(x_r) = \mathbb{E}[\hat{\mathcal{E}}_r(x_r)]=\mathcal{E}_r(x_r)+\underbrace{\frac{\text{Var}[\tilde p_1(x_1)]}{2p_r^2(x_r)K}}_{:=b(x_r)}.
\end{equation}
Plugging the above equation into the $\hat{F}_t(x_t;\mathcal{E}_r^\phi)$, we get, 
\begin{equation}
\hat{F}_t(x_t;\mathcal{E}_r^\phi) = \frac{\sum_i \exp (-\mathcal{E}^{\phi}_r(x_r^{(i)})) F_{t|r}^{x_r^{(i)}}(x)}{\sum_i \exp (-\mathcal{E}^{\phi}_r(x_r^{(i)}))} = \frac{\sum_{i} \exp(-\mathcal{E}_r(x_r^{(i)}) - b(x_r^{(i)}))F_{t|r}^{x_r^{(i)}}(x)}{\sum_{i} \exp(-\mathcal{E}_r(x_r^{(i)}) - b(x_r^{(i)}))}.
\end{equation}

Here, $b(x_r^{(i)})$ is close to 0 and concentrated to:
\begin{equation}
m_b= b(x_t)=\frac{\text{Var}[\tilde p_1(x_1)]}{2p_r^2(x_t)K},
\end{equation}
when $t$ is close to $r$ and $K$ is sufficiently large. To keep notation concise, let $w_i$ be the self-normalized importance weight with true energy $\mathcal{E}_r(x_r)$ and $F_{t|r}^{(i)}$ be conditional generator with the sample $x_r^{(i)}$:
\begin{equation}
w_i = \frac{\exp(-\mathcal{E}_r(x_r^{(i)}))}{\sum_j \exp(-\mathcal{E}_r(x_r^{(j)}))}, \quad F_{t|r}^{(i)} = F_{t|r}^{x_r^{(i)}}(x).
\end{equation}

Then, by applying first-order Taylor expansion to the $\hat F_t(x_t;\mathcal{E}_r^\phi)$ and approximation $b(x_r^{(i)}) \approx m_b$, we obtain:
\begin{align}
\hat F_t(x_t;\mathcal{E}_r^\phi) &= \frac{\sum_{i} w_i\exp(- b(x_r^{(i)}))F_{t|r}^{(i)}}{\sum_{i}w_i\exp(-b(x_r^{(i)}))} \\
&\approx \frac{\sum_{i} w_i (1-b(x_r^{(i)}))F_{t|r}^{(i)}}{\sum_{i} w_i (1-b(x_r^{(i)}))} \\
&= \frac{\sum_i w_iF_{t|r}^{(i)} - \sum_i  w_ib(x_r^{(i)})F_{t|r}^{(i)}}{1-\sum_{i} w_i b(x_r^{(i)})} \\
&\approx \left(\sum_i w_iF_{t|r}^{(i)} - \sum_i  w_ib(x_r^{(i)})F_{t|r}^{(i)}\right)\left(1+\sum_{i} w_i b(x_r^{(i)})\right)
\\
&\approx \left(\sum_i w_iF_{t|r}^{(i)} - m_b\sum_i w_iF_{t|r}^{(i)}\right)(1+m_b)
\\
& \approx (1 - m_b^2)\frac{\sum_{i} \exp(-\mathcal{E}_r(x_r^{(i)}))F_{t|r}^{x_r^{(i)}}(x)}{\sum_{i} \exp(-\mathcal{E}_r(x_r^{(i)})))} \\
&= (1 - m_b^2) \hat{F}_t(x_t;\mathcal{E}_r)
\end{align}

Therefore, the bias of the bootstrapping estimator $\hat F_t(x_t;\mathcal{E}_r^\phi)$ is:

\begin{align}
\text{Bias}[\hat F_t(x_t;\mathcal{E}_r^\phi)] &= \mathbb{E}[\hat F_t(x_t;\mathcal{E}_r^\phi)] - F_t(x_t) \\
&= (1-m_b^2)\mathbb{E}[\hat F_t(x_t;\mathcal{E}_r)]-F_t(x_t) \\
&= (1-m_b^2)(F_t + \text{Bias}[\hat F_t(x_t;\mathcal{E}_r)]) -F_t(x_t) \\
&= (1-m_b^2)\text{Bias}[\hat F_t(x_t;\mathcal{E}_r)] - m_b^2 F_t(x_t) \\
&= \text{Bias}[\hat F_t(x_t;\mathcal{E}_r)] - \frac{v_{1r}^2(x_t)}{4p_r^4(x_t)K^2} \left(F_t(x_t) + \text{Bias}[\hat F_t(x_t;\mathcal{E}_r)]\right) \\
&= \text{Bias}[\hat F_t(x_t;\mathcal{E}_r)] +  O\left(\frac{1}{K^2}\right)
\end{align}

Similarly, the variance of the bootstrapping estimator is:
\begin{equation}
\text{Var}[\hat F_t(x_t;\mathcal{E}_r^\phi)] = (1-m_b^2)^2 \text{Var}[\hat F_t(x_t;\mathcal{E}_r)] \approx \text{Var}[\hat F_t(x_t;\mathcal{E}_r)]
\end{equation}
since $m_b < 1$ for sufficiently large $K$.

The variance of the IS estimator $\hat F_t(x_t)$ without bootstrapping is given by: 
\begin{equation}
\text{Var}[\hat F_t(x_t)]=\frac{4\text{Var}[\tilde p_1(x_r)]}{p_t^2(x_t)K}(1 + \lVert F_t(x_t)\rVert)^2,
\end{equation}

Also, the variance of the bootstrapping estimator $\hat F_t(x_t;\mathcal{E}_r)$ with true energy is given by: 
\begin{equation}
\text{Var}[\hat F_t(x_t;\mathcal{E}_r)]=\frac{4\text{Var}[\tilde p_r(x_r)]}{p_t^2(x_t)K}(1 + \lVert F_t(x_t)\rVert)^2,
\end{equation}

Consequently,
\begin{equation}
\text{Var}[\hat F_t(x_t;\mathcal{E}_r^\phi)] \approx \text{Var}[\hat F_t(x_t;\mathcal{E}_r)] = \frac{\text{Var}[\tilde p_r(x_r)]}{\text{Var}[\tilde p_1(x_1)]}\text{Var}[\hat F_t(x_t)].
\end{equation}

Because $\text{Var}[\tilde p_r(x_r)] \ll \text{Var}[\tilde p_1(x_1)]$, the variance of the bootstrapping estimator is smaller than the $\hat{F}_t(x_t)$. Consequently, with an fully trained energy model, we reduce the variance of the SNIS generator estimation. 
\end{proof}

\newpage
\section{Generator estimation in the multimodal spaces}
\label{appx:multimodal_derivation}
Our estimator can also be applied to the mixed state spaces \(S = X \times Y\) within the generator matching framework.  
Let \(\{\tilde{p}_{t|1}(\cdot|x_1)\}_{0\leq t \leq 1}\) and $\{\bar{p}_{t|1}(\cdot|y_1)\}_{0\leq t \leq1}$ denote the conditional probability paths on the \(X\) and \(Y\), respectively, and let \(\tilde{\mathcal{L}}_{t|1}^{x_1}\) and \(\bar{\mathcal{L}}_{t|1}^{y_1}\) denote the corresponding conditional generators for \(x_1 \in X\) and \(y_1 \in Y\). Assume these generators are parameterized by \(\tilde{F}_{t|1}^{x_1}:[0,1] \times S \to V_1\) and \(\bar{F}_{t|1}^{y_1}: [0,1] \times S \to V_2\), respectively. For the joint space \(S = X \times Y\), we consider the factorized conditional path:
\[
  p_{t|1}(dx_t,dy_t | x_1,y_1)
  := \tilde{p}_{t|1}(dx_t | x_1)\;\bar{p}_{t|1}(dy_t | y_1),
\]
where \(x_t,x_1\in X\) and \(y_t,y_1\in Y\). 

According to Proposition~5 in \citet{holderrieth2024generator}, the conditional generator associated with \(p_{t|1}\) admits the following parameterization:
\[
  F_{t|1}^{x_1,y_1}(x_t,y_t)
  = \left(\tilde{F}_{t|1}^{x_1}(x_t),\,\bar{F}_{t|1}^{y_1}(y_t)\right),
\]
where the sum, scalar product, and inner product are naturally defined over the tuple $(\cdot,\cdot) \in V_1 \times V_2$. Thus, the importance sampling estimator for the parameterized generator can be written as:
\begin{align}
  F_t(x_t,y_t)
  &=
  \mathbb{E}_{x_1,y_1\sim p_{1|t}(\cdot| x_t,y_t)}
  \bigl[F_{t|1}^{x_1,y_1}(x_t,y_t)\bigr],\\
  &=
  \mathbb{E}_{x_1,y_1\sim q_{1|t}(\cdot| x_t,y_t)}
  \left[
    \frac{\mathrm{d}p_{1|t}}{\mathrm{d}q_{1|t}}
    (x_1,y_1 | x_t,y_t)
    \left(\tilde{F}_{t|1}^{x_1}(x_t),\,\bar{F}_{t|1}^{y_1}(y_t)\right)
  \right].
\end{align}
As in the uni-modality case, this leads to a self-normalized importance sampling estimator, which can be directly extended to the bootstrapping setting. This demonstrates the generality and flexibility of our framework in handling multi-modal spaces.

\newpage
\section{Example of EGM with application to flow and masked diffusion}
\label{appx:detail_of_EGM}
\subsection{Generator of flow and jump model}
In this section, we provide the definition of flow and discrete jump models, their generators and parametrizations. For the case of diffusion processes or more rigorous derivations, we refer the reader to \citet{holderrieth2024generator}. The discrete jump model is often referred to as a continuous-time Markov chain (CTMC).

\textbf{Flow model.} Let the state space be $S = \mathbb{R}^d$, and let $u_t: \mathbb{R}^d \times [0,1] \to \mathbb{R}^d$ be a time-dependent vector field. The flow $X_t$ is defined by the following ordinary differential equation:
\begin{equation}
    \frac{dX_t}{dt} = u_t(X_t), \quad X_0 \sim p_0.
\end{equation}

By definition of the generator, the generator of the flow model is given by
\begin{align}
    \mathcal{L}_tf(x) &=\lim_{h \to 0} \frac{\mathbb{E}[f(X_{t+h})|X_t=x]-f(x)}{h} \\
    &=\lim_{h \to 0} \frac{\mathbb{E}[f(X_t + hu_t(X_t)+o(h))|X_t=x]-f(x)}{h} \\
    &=\lim_{h \to 0} \frac{\mathbb{E}[f(X_t) + h\nabla f(x)^T u_t(X_t)+o(h)|X_t=x]-f(x)}{h} \\
    &= \nabla f(x)^Tu_t(X_t),
\end{align}
where we use a first-order Taylor expansion. Hence, the generator of the flow model admits the following linear parametrization:
\begin{equation}
    \mathcal{L}_tf(x)=\langle \mathcal{K}f(x), u_t(x)\rangle ,\quad
    \mathcal{K}f(x)=\nabla f(x),
\end{equation}
i.e., the generator is parameterized by the ODE vector field $u_t$, and EGM aims to learn $u_t$ via its conditional counterpart $u_{t|1}$.

\textbf{Discrete jump model.} Let the state space $S$ be discrete with $|S| < \infty$, and define the time-dependent transition rate matrix $Q_t: S \times S \times [0,1] \to \mathbb{R}$ such that $Q_t(x,x) = -\sum_{y \neq x} Q_t(y,x)$ and $Q_t(y,x) \geq 0$ for all $y \neq x$. The CTMC is defined by the transition rule:
\begin{equation}
    X_{t+h} \sim p_{t+h|t}(\cdot|X_t) =\delta_{X_t}(\cdot) + hQ_t(\cdot,X_t).
\end{equation}

We derive the generator informally; see \citet{davis1984piecewise} for a formal treatment:
\begin{align}
    \mathcal{L}_tf(x) &= \lim_{h \to 0} \frac{\mathbb{E}[f(X_{t+h})|X_t=x]-f(x)}{h} \\
        &=\lim_{h \to 0} \frac{\mathbb{E}[f(X_{t+h})-f(X_t)|X_t=x, \text{Jump in }[t, t+h)]\mathbb{P}(\text{Jump in }[t, t+h))}{h}  \\ 
        &+ \lim_{h \to 0} \underbrace{\frac{\mathbb{E}[f(X_{t+h})-f(X_t)|X_t=x, \text{No jump in }[t, t+h)]\mathbb{P}(\text{No jump in }[t, t+h))}{h}}_{=0} \\
        &=\lim_{h \to 0} \frac{\sum_{y\neq x} (f(y)-f(x))(\frac{Q_t(y,x)h}{-Q_t(x,x)h})(-Q_t(x,x)h)}{h} \\
        &= \sum_{y\neq x} (f(y)-f(x))Q_t(y,x) = \sum_{y \in S} f(y)Q_t(y, x)
\end{align}
Therefore, the generator of the CTMC can be linearly parameterized as:
\begin{equation}
    \mathcal{L}_tf(x)=\langle \mathcal{K}f(x), Q_t(\cdot,x)\rangle, \quad
    \mathcal{K}f(x) = (f(y)-f(x))_{y\in S}, \quad
    \langle a, b\rangle := \sum_{y \in S} a_yb_y,
\end{equation}
i.e., the generator is parameterized by the transition rate matrix $Q_t(\cdot, x)$, and EGM aims to learn $Q_t$ via its conditional form $Q_{t|1}$.

\textbf{Remark on linear parametrization.} Under mild regularity conditions (e.g., Feller processes), \citet{holderrieth2024generator} shows that Markov processes on both discrete and continuous state spaces can be universally expressed via linear parameterizations:
\begin{enumerate}
    \item \textbf{Discrete state space} ($|S| < \infty$): The generator is parameterized by the transition rate matrix $Q_t$, corresponding to a CTMC.
    \item \textbf{Euclidean space} ($S = \mathbb{R}^d$): The generator is parameterized as a combination of flow, diffusion, and jump components.
\end{enumerate}
This implies that, like GM, EGM is capable of modeling a wide range of Markov processes on both discrete and Euclidean spaces.

\subsection{Application to the conditional OT flow model}
This section details the application of the EGM framework to flow models defined via the conditional optimal transport (CondOT) path.

\textbf{Definition of the CondOT path.} The conditional OT probability path is defined as:
\begin{equation}
X_t = tX_1 + (1 - t)X_0,
\end{equation}
where $X_1 \sim p_1$, $X_0 \sim p_0 = \mathcal{N}(0, I)$, and $X_0, X_1$ are independent. It linearly interpolates between a Gaussian prior and the target distribution. By construction, the conditional distribution is given by:
\begin{equation}
p_{t|1}(x_t|x_1) = \mathcal{N}(x_t; tx_1, (1 - t)^2 I).
\end{equation}

\textbf{EGM on the CondOT path.} First, consider a naive implementation of EGM with a simple proposal distribution defined as:
\begin{align}
q_{1|t}(x_1|x_t) &\propto p_{t|1}(x_t|x_1) = \mathcal{N}(x_t; tx_1, (1 - t)^2 I) \\
&\propto \exp\left(-\frac{\lVert x_t - t x_1 \rVert_2^2}{2(1 - t)^2}\right) \\
&= \exp\left(-\frac{\lVert x_1 - \frac{x_t}{t} \rVert_2^2}{2 \frac{(1 - t)^2}{t^2}}\right),
\end{align}
which implies that
\begin{equation}
q_{1|t}(x_1|x_t) = \mathcal{N}\left(x_1; \frac{x_t}{t}, \frac{(1 - t)^2}{t^2} I\right).
\end{equation}

This choice yields a simple importance weight of the form $\tilde{w}(x_t, x_1) = \tilde{p}_1(x_1) / Z_{1|t}(x_t)$.

Using the identity from \cref{eq:SNIS_estimator_of_parametrization}, the estimated vector field $u_t(x_t)$ becomes:
\begin{align}
u_t(x_t)
    &= \frac{
        \sum_i
            \frac{\tilde{p}_1(x_1^{(i)})}{Z_{1|t}(x_t)} u_{t|1}^{x_1^{(i)}}(x_t)
    }{
        \sum_i \frac{\tilde{p}_1(x_1^{(i)})}{Z_{1|t}(x_t)}
    } \\
    &= \frac{
        \sum_i
            \tilde{p}_1(x_1^{(i)}) u_{t|1}^{x_1^{(i)}}(x_t)
    }{
        \sum_i \tilde{p}_1(x_1^{(i)})
    },
\end{align}
where $x_1^{(i)} \sim q_{1|t}(\cdot|x_t)$. This is precisely the same estimator used in \citet{woo2024iterated} for the flow-based sampler.

\textbf{Assumption check for bootstrapping.} Next, we derive the bootstrapping estimator. We construct the backward transition kernel $p_{t|r}$ satisfying the marginal consistency \cref{eq:chapman_kolmogorov_in_proof}:
\begin{equation}
    p_{t|r}(x_t|x_r)=\mathcal{N}(x_t;\frac{t}{r}x_r, \sigma_t I), \quad
    \sigma_t = (1-t)^2-\frac{t^2}{r^2}(1-r)^2.
\end{equation}
We verify the consistency via:
\begin{equation}
\int p_{t|r}(x_t|x_r)p_{r|1}(x_r|x_1)dx_r = p_{t|1}(x_t|x_1).
\end{equation}

Using reparameterization tricks $X_t = \frac{t}{r}X_r + \sqrt\sigma_{t}\epsilon_{t}$, $X_r = r X_1 + (1-r)\epsilon_r$, $\epsilon_{t}\perp\epsilon_r$, we have:
\begin{align}
    X_t &= \frac{t}{r}(r X_1 + (1-r)\epsilon_r) + \sqrt\sigma_{t}\epsilon_{t} \\
        &= tX_1 + \frac{t}{r}(1-r)\epsilon_r + \sqrt\sigma_{t}\epsilon_{t} \\
        & \overset{d}= tX_1 + (1-t) \epsilon'_t, \quad \epsilon'_t \sim \mathcal{N}(0, I),
\end{align}
where $\overset{d}=$ denotes that two random variables have same distribution. Thus, marginal consistency holds.

The conditional vector field $u_{t|r}$ is defined as:
\begin{equation}
    u_{t|r}(x_t|x_r) = \frac{1}{r}x_r + \frac{\dot{\sigma}_t}{2\sigma_t} (x_t-\frac{t}{r}x_r).
\end{equation}
This vector field arises naturally from differentiation of the reparameterization:
\begin{align}
    X_t = \frac{t}{r}X_r + \sqrt \sigma_t X_0 \quad \implies \dot X_t &= \frac{1}{r}X_r + \dot{\sqrt{\sigma_t}}X_0 \\
        &= \frac{1}{r}X_r + \frac{\dot{\sqrt{\sigma_t}}}{\sqrt{\sigma_t}}\Big(X_t - \frac{t}{r}X_r\Big) \\
        &= \frac{1}{r}X_r + \frac{\dot{\sigma_t}}{2\sigma_t}\Big(X_t - \frac{t}{r}X_r\Big) .
\end{align}

Now, verify that the conditional vector field $u_{t|r}$ satisfies the marginal consistency \cref{eq:consistency_of_generator_in_proof}:
\begin{equation}
    \mathbb{E}_{x_r \sim p_{r|1,t}(\cdot|x_1,x_t)}[u_{t|r}(x_t|x_r)] = u_{t|1}(x_t|x_1).
\end{equation}

With $p_{r|1,t}(x_r|x_1,x_t)=\frac{p_{t|r}(x_t|x_r)p_{r|1}(x_r|x_1)}{p_{t|1}(x_t|x_1)}$ being Gaussian, its mean is explicitly:
\begin{equation}
\mu_{r|1,t}(x_1,x_t) = \frac{t(1-r)^2}{r(1-t)^2}x_t + \frac{r\sigma_t}{(1-t)^2}x_1.
\end{equation}

Direct calculation confirms consistency:
\begin{align}
    \mathbb{E}_{x_r \sim p_{r|1,t}(\cdot|x_1,x_t)}[u_{t|r}(x_t|x_r)] &= \mathbb{E}_{x_r \sim p_{r|1,t}(\cdot|x_1,x_t)}\left[\frac{1}{r}x_r+\frac{\dot{\sigma_t}}{2\sigma_t}\left(x_t-\frac{t}{r}x_r\right)\right] \\
    &= \frac{1}{r}\mathbb{E}_{x_r \sim p_{r|1,t}(\cdot|x_1,x_t)}[x_r]+\frac{\dot{\sigma_t}}{2\sigma_t}\left(x_t-\frac{t}{r}\mathbb{E}_{x_r \sim p_{r|1,t}(\cdot|x_1,x_t)}[x_r]\right) \\
    &= \frac{1}{r}\mu_{r|1,t}(x_1,x_t)+\frac{\dot{\sigma_t}}{2\sigma_t}\left(x_t-\frac{t}{r}\mu_{r|1,t}(x_1,x_t)\right).
\end{align}
The first term $\frac{1}{r}\mu_{r|1,t}$ reduces to,
\begin{align}
    \frac{1}{r}\mu_{r|1,t}(x_1,x_t) &= \frac{t(1-r)^2}{r^2(1-t)^2}x_t + \frac{\sigma_t}{(1-t)^2}x_1 \\
    &= \frac{t(1-r)^2x_t + r^2\sigma_tx_1}{r^2(1-t)^2} \\
    &= \frac{t(1-r)^2}{r^2(1-t)^2}(x_t-tx_1)+x_1,
\end{align}
where we used $\sigma_t = (1-t)^2 - \frac{t^2}{r^2}(1-r)^2$ in the third equality.

The part of second term $x_t-\frac{t}{r}\mu_{r|1,t}(x_1,x_t)$ reduces to,
\begin{align}
    x_t-\frac{t}{r}\mu_{r|1,t}(x_1,x_t) 
    &= x_t - \frac{t^2(1-r)^2}{r^2(1-t)^2}x_t - \frac{t\sigma_t}{(1-t)^2}x_1 \\
    &= \frac{r^2(1-t)^2-t^2(1-r)^2}{r^2(1-t)^2}x_t - \frac{t\sigma_t}{(1-t)^2}x_1 \\
    &= \frac{\sigma_t}{(1-t)^2}x_t - \frac{t\sigma_t}{(1-t)^2}x_1,
\end{align}
where we used $\sigma_t = (1-t)^2 - \frac{t^2}{r^2}(1-r)^2$ in the third equality.

Put it all together, we conclude that,
\begin{align}
    \mathbb{E}_{x_r \sim p_{r|1,t}(\cdot|x_1,x_t)}[u_{t|r}(x_t|x_r)] &=  \frac{1}{r}\mu_{r|1,t}(x_1,x_t)+\frac{\dot{\sigma_t}}{2\sigma_t}(x_t-\frac{t}{r}\mu_{r|1,t}(x_1,x_t)) \\
    &= \frac{t(1-r)^2}{r^2(1-t)^2}(x_t-tx_1)+x_1 + \frac{\dot{\sigma_t}}{2\sigma_t}
        \left( \frac{\sigma_t}{(1-t)^2}x_t - \frac{t\sigma_t}{(1-t)^2}x_1\right)
    \\
    &= \frac{t(1-r)^2}{r^2(1-t)^2}(x_t-tx_1)+x_1 + \frac{\dot{\sigma_t}}{2(1-t)^2}(x_t-tx_1)
    \\
    &= \left(\frac{t(1-r)^2}{r^2} + \frac{\dot{\sigma_t}}{2}\right)\frac{x_t-tx_1}{(1-t)^2}+x_1
    \\
    &= (1-t)\frac{x_t-tx_1}{(1-t)^2}+x_1
    \\
    &= \frac{x_1 - x_t}{1-t} \\
    &=u_{t|1}(x_t|x_1),
\end{align}
which implies the proposed transition kernel $p_{t|r}(x_t|x_r)$ and conditional vector field $u_{t|r}(x_t|x_r)$ satisfies the assumption of our \cref{thm:bootstrap}.

\textbf{Bootstrapped estimator for the CondOT.} Lastly, we define the bootstrapping estimator for the CondOT flow model using proposal as follows:
\begin{align}
    q_{r|t}(x_r|x_t) &\propto p_{t|r}(x_t|x_r) = \mathcal{N}(x_t; \frac{t}{r}x_r, \sigma_tI) \\
    &\propto \exp\left(-\frac{\lVert x_t- \frac{t}{r}x_r\rVert_2^2}{2\sigma_t}\right) \\
    &\propto \exp\left(-\frac{\lVert x_r- \frac{r}{t}x_t\rVert_2^2}{2\frac{r^2}{t^2}\sigma_t}\right),
\end{align}
which implies that
\begin{equation}
    q_{r|t}(x_r|x_t) = \mathcal N\left({x_r; \frac{r}{t}x_t, \frac{r^2}{t^2}\sigma_tI}\right).
\end{equation}
The bootstrapping estimator is then given by:
\begin{equation}
    \hat{u}_{t}(x_t)=\frac{\sum_{i=1}^K \tilde{w}(x_t,x_r^{(i)})u_{t|r}(x_t|x_r)}{\sum_{i=1}^K \tilde{w}(x_t,x_r^{(i)})}, \quad
    \tilde w(x_t, x_r) = \tilde p_r(x_r) = \exp (-\mathcal{E}_r^\phi(x_r)),
\end{equation}
where samples $x_r^{(1)},\dots, x_r^{(K)} \sim q_{r|t}(\cdot|x_t)$ and $\mathcal{E}_r^\phi(x_r)$ is learned energy estimator.

\subsection{Application to the masked diffusion model}
This section describes how the EGM framework can be applied to discrete jump models using the masked diffusion path. 

\textbf{Definition of masked diffusion path.} We define the masked diffusion path as follows:
\begin{equation}
    p_{t|r}(x_t|x_r) = \frac{\kappa_t}{\kappa_r}\delta_{x_r}(x_t) + \left(1-\frac{\kappa_t}{\kappa_r}\right) \delta_{M}(x_t).
\end{equation}
where $\kappa_t: [0,1] \to \mathbb{R}_{>0}$ is an increasing function satisfying $\kappa_0 =0, \kappa_1 = 1$, $M$ is the mask token, and $\delta_x$ is the Dirac-delta distribution centered at $x$. Next, we derive the conditional transition rate matrix generating the conditional probability path $p_{t|r}(\cdot|x_r)$. Starting from the Kolmogorov forward equation, we have:
\begin{align}
    \frac{d}{dt}p_{t|r}(y_t|x_r) &= \frac{\dot{\kappa_t}}{\kappa_r}(\delta_{x_r}(y_t)-\delta_{M}(y_t)) \\
    &= \frac{\dot{\kappa_t}}{\kappa_r}\frac{1}{\kappa_r - \kappa_t}((\kappa_r - \kappa_t)\delta_{x_r}(y_t)-(\kappa_r - \kappa_t)\delta_{M}(y_t)) \\
    &= \frac{\dot{\kappa_t}}{\kappa_r}\frac{\kappa_r}{\kappa_r - \kappa_t}(\delta_{x_r}(y_t)-p_{t|r}(y_t|x_r)) \\
    &= \sum_{x_t} \frac{\dot{\kappa_t}}{\kappa_r - \kappa_t}(\delta_{x_r}(y_t)-\delta_{x_t}(y_t))p_{t|r}(x_t|x_r) \\ 
    &= \sum_{x_t} u_{t|r}(y_t,x_t|x_r) p_{t|r}(x_t|x_r),
\end{align}
thus obtaining $u_{t|r}(y_t,x_t|x_r)=\frac{\dot{\kappa_t}}{\kappa_r - \kappa_t}\left( \delta_{x_r}(y_t) - \delta_{x_t}(y_t) \right)$. 

\textbf{EGM on the masked diffusion path.} We first introduce a naive implementation of EGM using a simple proposal distribution defined as:
\begin{align}
    q_{1|t}(x_1|x_t) &\propto p_{t|1}(x_t|x_1) = \kappa_t\delta_{x_1}(x_t) + (1- \kappa_t)\delta_{M}(x_t) \\ 
\end{align}
which implies:
\begin{equation}
    q_{1|t}(x_1|x_t) = \begin{cases}
        \text{Unif}(x; S-M)  &(x=M)\\
        \delta_{x_t}(x_1) &(x \neq M)
    \end{cases}.
\end{equation}

This yields the simple importance weight $\tilde w(x_t, x_1) = \tilde p_1(x_1)/Z_{1|t}(x_t)$. Following \cref{eq:SNIS_estimator_of_parametrization}, the estimator for the transition matrix $u_t(y_t,x_t)$ becomes:
\begin{equation}
    \hat{u}_t(y_t,x_t)= \frac{\sum_{i=1}^K \tilde p_1(x_1)u_{t|1}(y_t,x_t|x_1^{(i)})}{\sum_{i=1}^K \tilde p_1(x_1^{(i)})}
\end{equation}
where samples $x_1^{(1)},\dots,x_1^{(K)} \sim q_{1|t}(\cdot|x_t)$. 

In practice, the state space $S=[N]^D$ factorizes along dimensions, where $D$ is sequence length and $[N]=\{1,\dots,N\}$. We thus factorize the masked diffusion path as follows:
\begin{equation}
    p_{t|1}(x_t|x_1)= \prod_{i=1}^D p_{t|1}^i(x_t^i|x_1^i), \quad
    p^i_{t|1}(x_t^i|x_1^i)=\kappa_t\delta_{x_1^i}(x_t^i) + (1- \kappa_t)\delta_{M}(x_t^i)
\end{equation}
where $x^i \in [N]$ denotes the $i$-th token of the sequence $x \in S$. The proposal $q_{1|t}$ and the transition rate matrix $u_t(y,x)$ factorize accordingly. The proposal factorizes as:
\begin{equation}
    q_{1|t}(x_1|x_t) = \prod_{i=1}^D q^i_{1|t}(x_1^i|x_t^i) \propto \prod_{i=1}^D p_{t|1}^i(x_t^i|x_1^i),
\end{equation}
where $q^i_{1|t}$ is a proposal defined over each dimensions. The transition matrix factorizes as:
\begin{equation}
    u_t(y,x) = \sum_{i=1}^D \delta(y^{-i}, x^{-i})u_t^i(y^i,x),
\end{equation}
where $x^{-i}$ denotes the $x$ without $i$-th token and $u_t^i$ is transition rate for each dimension. Hence, our neural network is trained to predict the $D \times N$ matrix $\text{NN}_\theta:(x_t, t)\mapsto \left(u_t^i(y_t^i,x_t)\right)_{1 \leq i \leq D, \;y_t^i \in [N]}\;$.

\textbf{Assumption check for bootstrapping.} The backward transition kernel $p_{t|r}$ of masked diffusion satisfies the marginal consistency since it defines the Markov process (noising process of masked diffusion). Thus, it is suffice to show that the conditional transition rate matrix $u_{t|r}(y_t,x_t|x_r)$ satisfies the marginal consistency \cref{eq:consistency_of_generator_in_proof}:
\begin{equation}
    \mathbb{E}_{x_r \sim p_{r|1,t}(\cdot|x_1, x_t)}[u_{t|r}(y_t, x_t|x_r)] = u_{t|1}(y_t,x_t|x_1).
\end{equation}
This condition can be confirmed via explicit calculations. Note that $p_{r|1,t}(x_r|x_1,x_t)=\frac{p_{t|r}(x_t|x_r)p_{r|1}(x_r|x_1)}{p_{t|1}(x_t|x_1)}$.
\begin{align}
    (\text{L.H.S.}) 
    &= \sum_{x_r} \frac{\dot{\kappa_t}}{\kappa_r - \kappa_t}\left( \delta_{x_r}(y_t) - \delta_{x_t}(y_t) \right) p_{r|1,t}(x_r|x_1,x_t)
    \\
    &= \sum_{x_r} \frac{\dot{\kappa_t}}{\kappa_r - \kappa_t}\left( \delta_{x_r}(y_t) - \delta_{x_t}(y_t) \right) \frac{p_{t|r}(x_t|x_r)p_{r|1}(x_r|x_1)}{p_{t|1}(x_t|x_1)}
    \\
    &= \frac{\dot{\kappa_t}}{(\kappa_r - \kappa_t){p_{t|1}(x_t|x_1)}} \sum_{x_r} \left( \delta_{x_r}(y_t) - \delta_{x_t}(y_t) \right) p_{t|r}(x_t|x_r)p_{r|1}(x_r|x_1)
    \\
    &= \frac{\dot{\kappa_t}}{(\kappa_r - \kappa_t){p_{t|1}(x_t|x_1)}} \Big(
        (\delta_{M}(y_t) - \delta_{x_t}(y_t)) p_{t|r}(x_t|M)p_{r|1}(M|x_1) \\
        &\quad + (\delta_{x_1}(y_t) - \delta_{x_t}(y_t)) p_{t|r}(x_t|x_1)p_{r|1}(x_1|x_1)
     \Big) 
    \\
    &= \frac{\dot{\kappa_t}}{(\kappa_r - \kappa_t){p_{t|1}(x_t|x_1)}} \Big(
        (\delta_{M}(y_t) - \delta_{x_t}(y_t)) \delta_{M}(x_t)(1-\kappa_r) \\
        &\quad + (\delta_{x_1}(y_t) - \delta_{x_t}(y_t)) p_{t|r}(x_t|x_1)\kappa_r
     \Big) 
    \\
    &= \begin{cases}
        0 & (x_t = x_1) \\
        \frac{\dot{\kappa_t}}{(\kappa_r - \kappa_t)(1-\kappa_t)}\left(\delta_{x_1}(y_t)-\delta_{x_t}(y_t)\right)\left(1-\frac{\kappa_t}{\kappa_r}\right)\kappa_r & (x_t = M) 
    \end{cases} \\
    &= \frac{\dot{\kappa_t}}{1-\kappa_t}(\delta_{x_1}(y_t) - \delta_{x_t}(y_t)) \\
    &= u_{t|1}(y_t, x_t|x_1)
\end{align}
where we used the fact that $p_{r|1}(x_r|x_1) > 0$ for only $x_r = M$ or $x_r = x_1$ in the fourth equality. Hence, the proposed transition kernel $p_{t|r}$ and conditional transition rate matrix $u_{t|r}$ satisfies the assumption of our \cref{thm:bootstrap}.

\textbf{Bootstrapped estimator for masked diffusion.} Lastly, we define the bootstrapping estimator for the transition rate matrix of masked diffusion model. We use the following proposal:
\begin{align}
    q_{r|t}(x_r|x_t) &\propto p_{t|r}(x_t|x_r) = \frac{\kappa_t}{\kappa_r}\delta_{x_r}(x_t)+\left( 1- \frac{\kappa_t}{\kappa_r} \right)\delta_{M}(x_t) \\
    &= \begin{cases}
        \frac{\kappa_t}{\kappa_r} & (x_t \neq M, x_r =x_t) \\
        0 & (x_t \neq M, x_r \neq x_t) \\
        1-\frac{\kappa_t}{\kappa_r} & (x_t = M, x_r\neq M) \\
         1 & (x_t = M, x_r = M) 
    \end{cases}
\end{align}
which implies,
\begin{equation}
    q_{r|t}(x_r|x_t) = \begin{cases}
        \delta_{x_t}(x_r) & (x_t \neq M) \\
        \text{Cat}(1-\frac{\kappa_t}{\kappa_r}, \dots, 1-\frac{\kappa_t}{\kappa_r}, 1) & (x_t = M) \\
    \end{cases}
\end{equation}
where the mask token is the last token $M=N$ and $\text{Cat}$ is the categorical distribution with unnormalized weight. 

The bootstrapping estimator is given by:
\begin{equation}
    \hat{u}_t(y_t,x_t) = \frac{\sum_{i=1}^K \tilde w(x_t, x_r^{(i)})u_{t|r}(y_t,x_t|x_r^{(i)})}{\sum_{i=1}^K \tilde w(x_t, x_r^{(i)})}, \quad
    \tilde w(x_t, x_r) = \tilde p_r(x_r) = \exp (-\mathcal{E}_r^\phi(x_r)),
\end{equation}
where samples $x_r^{(1)},\dots, x_r^{(K)} \sim q_{r|t}(\cdot|x_t)$ and $\mathcal{E}_r^\phi(x_r)$ is learned energy estimator.

\textbf{Learning energy with generalized NEM objective.} We train the $\mathcal{E}_r^\phi$ with the following estimator for the intermediate energy:
\begin{equation}
    \mathcal{E}_r(x_r) = -\log \mathbb{E}_{x_1 \sim q_{1|r}(\cdot|x_r)}[\exp(-\mathcal{E}_1(x_1))] - \log Z_{1|r}(x_r).
\end{equation}
For masked diffusion, the partition function $Z_{1|r}(x_r)$ explicitly depends on $x_r$, is given by:
\begin{align}
    Z_{1|r}(x_r) &= \sum_{x_1} p_{r|1}(x_r|x_1) \\
        &= \sum_{x_1} \kappa_r \delta_{x_1}(x_r) + (1-\kappa_r)\delta_M(x_r) \\
        &= \begin{cases}
            (N-1)(1-\kappa_t) & (x_r = M) \\
            \kappa_r & (x_r \neq M)
        \end{cases}
\end{align}
where $N$ is the number of token in the state space $S=[N]^D$.

\newpage
\section{Additional details on the experiments}
\label{appx:experiment_details}

In this section, we provide detailed descriptions of the experimental tasks, evaluation metrics, and experimental setups used throughout this work. The code is available at \href{https://github.com/dongyeop3813/EGM}{here}.

\subsection{Task details}
\textbf{Discrete Ising model.} We consider the Ising model defined on a 2D grid $\{-1, 1\}^{L \times L}$ with size $L$. The energy function $\mathcal{E}: \{-1, 1\}^{L \times L} \to \mathbb{R}$ is given by:
\begin{equation}
\mathcal{E}(x)= \beta\left(-J\sum_{\langle i ,j \rangle}x_ix_j + \mu \sum_i x_i\right),
\end{equation}
where $\langle i, j\rangle$ denotes pairs of neighboring spins, $J$ is the interaction strength, $\mu$ is the magnetic moment, and $\beta$ is the inverse temperature. We employ periodic boundary conditions and specifically focus on the ferromagnetic setting ($J > 0$) without external fields ($\mu=0$), reducing the energy function to:
\begin{equation}
\mathcal{E}(x)= -\beta J\sum_{\langle i ,j \rangle}x_ix_j.
\end{equation}
We fix the interaction strength at $J=1.0$ and examine various temperatures through $\beta$.

For evaluation, approximate ground truth samples are generated using an extended Gibbs sampling run consisting of 10k burn-in steps, thinning every 10 steps, and 4 parallel chains, collecting 300k samples in total.

\textbf{GB-RBM.} The Gaussian-Bernoulli Restricted Boltzmann Machine (GB-RBM) task involves two continuous visible units following Gaussian distributions and three binary hidden units following Bernoulli distributions, with the energy function:
\begin{equation}
\mathcal{E}(x_1,x_2)=\Sigma^{-1}\lVert x_1 - a\rVert_2^2 - \langle b, x_2\rangle - \Sigma^{-1} x_1^T Wx_2,
\end{equation}
where $x_1, a \in \mathbb{R}^2$, $x_2, b \in \{0, 1\}^3$, $\Sigma \in \mathbb{R}$, and $W\in \mathbb{R}^{2 \times 3}$. Parameters are selected to induce multiple modes, specifically six modes in continuous dimensions (see \cref{fig:rbm_gt_sample_projection}). We set:
\begin{equation}
a = [0, 0], \quad b=[-5, -5, -5], \quad \Sigma = 2, \quad
W= \begin{pmatrix}
10 & 0 & 10 \\
0 & 10 & 0 \\
\end{pmatrix}.
\end{equation}

Approximately ground truth samples are generated using Gibbs sampling with 10k burn-in steps, 100-step thinning intervals, and 100 parallel chains, collecting 100k samples.

\textbf{JointDW4.} JointDW4 exemplifies the molecular sequence-structure co-generation task, extending the classical four-particle double-well (DW4) benchmark with particle-type-dependent interactions. This setup includes 4 particles in 2D space, each assigned discrete types, yielding a 12-dimensional (8 continuous, 4 discrete) energy function:
\begin{equation}
\mathcal{E}_{\text{JointDW4}}(x, t)=\frac{1}{2\tau}\sum_{i,j} a(t_i, t_j)(d_{ij}-d_0)+b(t_i,t_j)(d_{ij}-d_0)^2+c(t_i,t_j)(d_{ij}-d_0)^4,
\end{equation}
where $d_{ij}=\lVert x_i - x_j \rVert_2$ is a Euclidean distance between the particle $i,j$ and $t_i \in \{1, 2\}$ is the type of particle $i$. The parameters are set as follows:
\begin{align}
a(\cdot,\cdot)= 0, \quad \tau = 1, \quad d_0=2
\quad,
b=\begin{pmatrix}
-3.0 & -2.5 \\
-2.5 & -2.8 \\
\end{pmatrix}, \quad
c=\begin{pmatrix}
0.8 & 0.4 \\
0.4 & 0.6 \\
\end{pmatrix},
\end{align}
where $b(t_i, t_j)=b_{t_i t_j}$ and $c(t_i, t_j)=c_{t_it_j}$.

Ground truth samples are similarly obtained from Gibbs sampling, running 10k burn-in steps, thinning every 50 steps, across 100 parallel chains, collecting 100k samples in total.

\textbf{JointMoG.} The JointMoG extends a Gaussian mixture benchmark commonly used for evaluating diffusion samplers. It includes one continuous dimension $x \in \mathbb{R}$ and one binary dimension $b \in \{-1, 1\}$:
\begin{equation}
\mathcal{E}_{\text{2D-JointMoG}}(x, b) = \frac{1}{2 \sigma^2} \lVert x- b\rVert_2^2,
\end{equation}
with standard deviation $\sigma$. We scale this model to 20 dimensions (10 continuous, 10 discrete) for benchmarking:
\begin{equation}
\mathcal{E}_{\text{JointMoG}}(x, b) = \sum_i \frac{1}{2\sigma^2} \lVert x_i - b_i\rVert_2^2,
\end{equation}
with $\sigma=0.3$ to create clearly separated modes. Exact sampling is possible by first sampling discrete variables uniformly and subsequently sampling continuous variables from corresponding Gaussians, providing exact evaluation samples.

\begin{figure}[t]
    \centering
    \begin{subfigure}[t]{0.49\textwidth}
        \includegraphics[width=\linewidth, height=\linewidth]{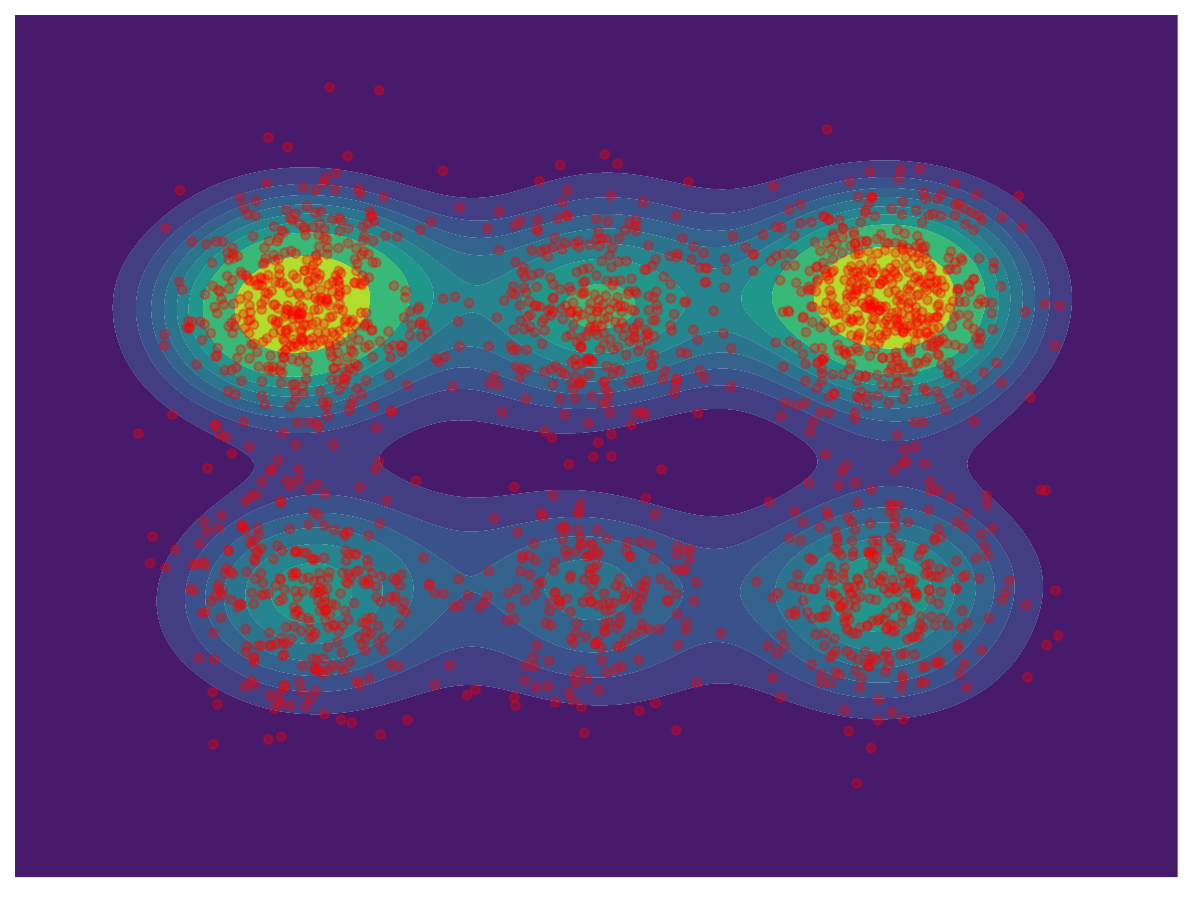}
    \end{subfigure}
    \begin{subfigure}[t]{0.49\textwidth}
        \includegraphics[width=\linewidth, height=\linewidth]{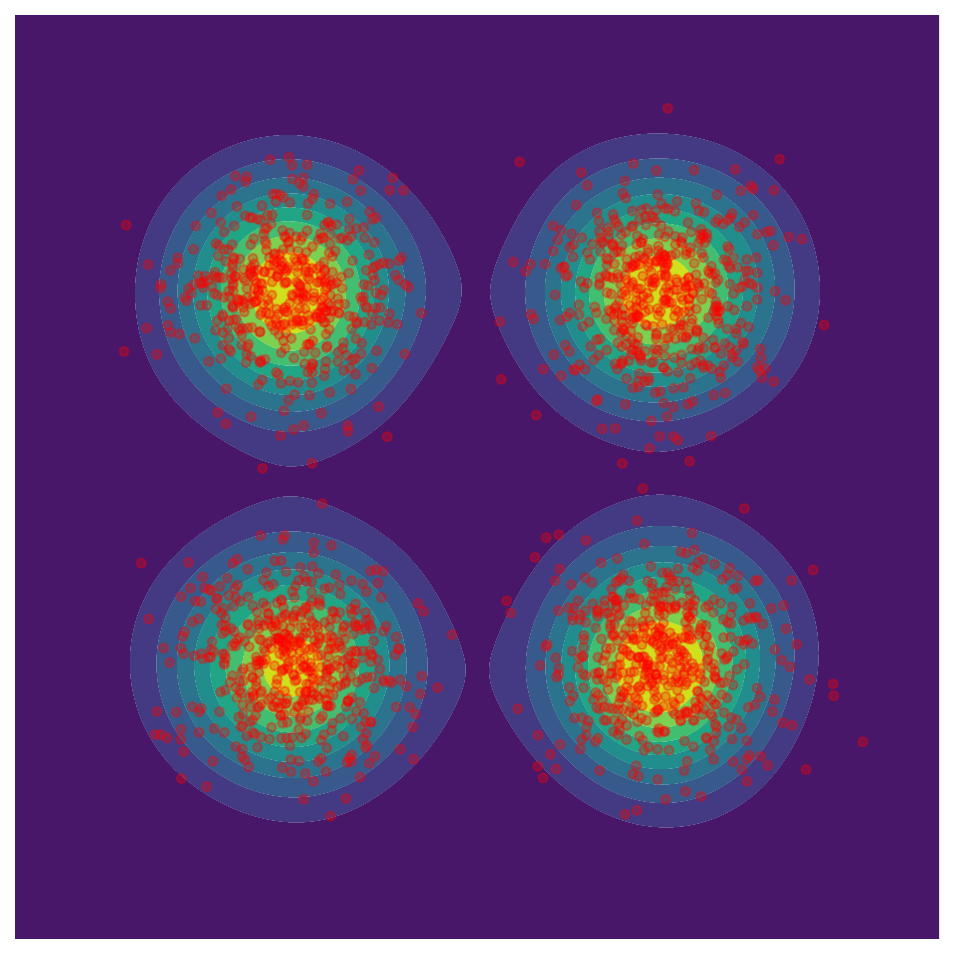}
    \end{subfigure}
    \caption{Ground truth sample plot of GB-RBM (left) and JointMoG (right). Samples are projected onto the first two continuous dimensions.}
    \label{fig:rbm_gt_sample_projection}
\end{figure}

\subsection{Metrics}
Evaluation metrics in our experiments primarily utilize Wasserstein distances, computed via the Python Optimal Transport (POT) library~\citep{flamary2021pot} using exact linear programming. Specifically, we measure the distances between 2000 empirical samples generated by our samplers and 2000 ground truth samples uniformly selected from extensive Gibbs sampling or exact sampling processes.

The Wasserstein distance of order $p$ between two probability measures $\mu$ and $\nu$ is defined as:
\begin{equation}
\mathcal{W}_p(\mu, \nu) = \left( \inf_{\pi \in \prod(\mu, \nu)} \int d(x,y)^p \mathrm{d}\pi(x,y) \right)^{1/p},
\end{equation}
where $\prod(\mu, \nu) = \{ \pi \in \mathcal{P}(X \times X) \mid \pi(A \times X)= \mu(A), \pi(X \times B)=\nu(B) \}$ is the set of all couplings between $\mu$ and $\nu$, and $d(x,y)$ denotes the metric on the space.

\textbf{Energy 1-Wasserstein ($\mathcal{E}$-$\mathcal{W}_1$).} We use the Energy 1-Wasserstein distance as our primary evaluation metric. It measures the 1-Wasserstein distance between the empirical distributions of energy values computed from generated and ground truth samples. This metric is universally applicable across all sampling tasks and effectively captures discrepancies in the energy distributions regardless of the underlying state space and Markov processes involved.

\textbf{Magnetization 1-Wasserstein ($M$-$\mathcal{W}1$).} For the discrete Ising model, we additionally employ the magnetization 1-Wasserstein distance. Magnetization for a given spin configuration $x \in \{-1, 1\}^{L \times L}$ is defined as the average spin:
\begin{equation}
M(x)=\frac{1}{L^2}\sum_{i} x_i.
\end{equation}
This metric assesses the discrepancy in magnetization distributions between generated and ground truth samples. Particularly in low-temperature scenarios (e.g., $\beta=0.4$), the system exhibits distinct modes around extreme magnetization values, making this metric especially sensitive to capturing difficulties in multimodal sampling.

\textbf{Sample 2-Wasserstein ($x$-$\mathcal{W}_2$).} Specifically used for the GB-RBM task, the sample 2-Wasserstein distance evaluates discrepancies between the empirical distributions of generated and ground truth samples projected onto the first two continuous dimensions. A high sample 2-Wasserstein distance coupled with low energy 1-Wasserstein may indicate mode collapse within certain low-energy modes. Due to interpretability concerns (e.g., a poor sampler generating trivial solutions might misleadingly score well), we do not employ this metric for tasks beyond GB-RBM.

\subsection{Experimental setup}
\begin{table}[t]
\caption{
    The best hyper-parameters combination for EGM and Bootstrapping (BS). Flow LR stands for the learning rate for the learned intermediate estimator.
}
\centering
\resizebox{0.75\textwidth}{!}{%
\begin{tabular}{@{}lccccccc@{}}
\toprule
Tasks & Method & Hidden dim.\ & \# of layers & Flow LR & $\epsilon$ \\
\midrule
\multirow{2}{*}{Ising $5\times5,\ \beta=0.2$}
    & EGM & 256 & 3 & - & - \\
    & BS  & 256 & 3 & $2 \times 10^{-3}$ & $0.05$ \\
\addlinespace
\multirow{2}{*}{Ising $5\times5,\ \beta=0.4$}
    & EGM & 256 & 3 & - & - \\
    & BS  & 256 & 3 & $10^{-3}$ & $0.05$ \\
\addlinespace
\multirow{2}{*}{Ising $10\times10,\ \beta=0.2$}
    & EGM & 1024 & 3 & - & - \\
    & BS  & 512 & 3 & $10^{-3}$ & 0.05 \\
\addlinespace
\multirow{2}{*}{Ising $10\times10,\ \beta=0.4$}
    & EGM & 256 & 3 & - & - \\
    & BS  & 2048 & 3 & $10^{-3}$ & 0.05 \\
\addlinespace
\multirow{2}{*}{GB-RBM}
    & EGM & 128 & 6 & - & - \\
    & BS  & 128 & 6 & $10^{-3}$ & 0.01 \\
\addlinespace
\multirow{2}{*}{JointDW4}
    & EGM & 128 & 6 & - & - \\
    & BS  & 128 & 6 & $10^{-3}$ & 0.01 \\
\addlinespace
\multirow{2}{*}{JointMoG}
    & EGM & 128 & 6 & - & - \\
    & BS  & 128 & 6 & $10^{-3}$ & 0.01 \\
\bottomrule
\end{tabular}%
}
\vspace{-1.5em}
\label{tab:hyperparameters}
\end{table}

We performed a grid search to determine the optimal hyperparameters for each experimental task and method, evaluating each configuration using three random seeds.

As a baseline, we report the performance of a traditional Gibbs sampler~\citep{geman1984stochastic}. Specifically, we ran Gibbs sampling with four parallel chains, each performing 6000 steps, collecting a total of 24,000 samples. For evaluation purposes, we uniformly subsampled 2000 samples from this set.

Across experiments, we employed 2000 Monte Carlo samples for estimations and a training batch size of 300. Both EGM and bootstrapping utilized 100 outer-loop iterations, with each iteration collecting 2000 samples into a buffer with a maximum size of 10k. The inner-loop iterations were set to 100 for EGM and 1000 for bootstrapping. We adopted a linear masking schedule ($\kappa_t = t$), a linear conditional OT schedule ($\alpha_t = t$), and an exponential variance exploding (VE) schedule ($\sigma_t = \sigma_{\text{max}}(\frac{\sigma_{\text{min}}}{\sigma_{\text{max}}})^t$). All samplers were trained using the AdamW optimizer with an initial learning rate of $10^{-3}$, applying a cosine learning rate schedule with $\eta_{\text{min}} =10^{-5}$. Training was conducted on an NVIDIA-3090 GPU (24GB VRAM).

For bootstrapping, the intermediate energy model $\mathcal{E}^{\phi}$ was trained with a separately tuned learning rate. Bootstrapping step sizes of $\epsilon \in \{0.01, 0.05\}$ were evaluated, and an exponential moving average (EMA) was applied to stabilize estimates from $\mathcal{E}^{\phi}$.

In multi-modal tasks, we applied a weighted loss combining discrete transition rate matrix prediction errors and continuous drift prediction errors: $L_{\text{EGM}} = \lambda_{\text{disc}} L_{\text{discrete}} + \lambda_{\text{conti}} L_{\text{continuous}}$, with fixed coefficients $\lambda_{\text{disc}}= 5.0$ and $\lambda_{\text{conti}} = 1.0$.

Additional task-specific details are provided below, and optimal hyperparameters are summarized in \cref{tab:hyperparameters}.

\textbf{Discrete Ising model.} We employed a 3-layer MLP with sinusoidal time embeddings for both the intermediate energy function and the transition rate matrix. Each discrete token representing spin values -1 or 1 was embedded in 4 dimensions. Following \citet{gat2024discrete}, the transition rate matrix $u_t(y,x)$ was parametrized using a probability denoiser $p_{1|t}(y|x)$ analogous to the $x_1$-prediction in the flow models. Hidden dimensions were explored within $\{256, 512\}$, with additional trials at $\{1024, 2048\}$ for the $10 \times 10$ Ising grid.

\textbf{GB-RBM.} We utilized a 6-layer residual MLP with 128 hidden units, a 4-dimensional discrete embedding, and a 64-dimensional continuous embedding. Discrete and continuous embeddings were concatenated and fed into the shared 6-layer MLP. Separate predictor networks subsequently estimated the continuous drift and discrete transition rate matrix. The conditional OT path performed best for both EGM and bootstrapping. We clipped the regression target $F_t$ at a maximum norm of 20 and the energy estimator $\hat{\mathcal{E}}_t$ at 100 to stabilize training.

\textbf{JointDW4.} The network architecture matched that used in GB-RBM. The conditional OT path again yielded optimal performance for both methods. Regression targets $F_t$ and energy estimates $\hat{\mathcal{E}}_t$ were clipped at maximum norms of 100 and 1000, respectively.

\textbf{JointMoG.} We maintained the same 6-layer residual MLP structure as GB-RBM. The VE path achieved superior performance for both methods, configured with $\sigma_{\text{max}}= 2.0$ and $\sigma_{\text{min}}= 0.01$. The regression targets and energy estimates were clipped to maximum norms of 100 and 1000, respectively.

\newpage
\section{Limitations and Discussion}
\label{appx:others}

We have presented an energy-driven training framework for continuous-time Markov processes (CTMPs). Our method introduces an energy-based importance sampling estimator for the marginal generator and proposes an additional bootstrapping scheme to reduce the variance of importance weights. By lowering this variance, we demonstrate that the bootstrapping approach significantly enhances the sampler's performance.

\textbf{Limitations of our work.} Despite the strengths of our approach, several limitations remain. First, we have not extensively evaluated the method on high-dimensional tasks due to limited computational resources. While our framework performs well on benchmarks of moderate scale, its scalability to complex high-dimensional domains—such as protein conformer generation—remains an open question.

Second, we observe that the training process can be unstable. We hypothesize that this instability stems from the simultaneous optimization of the CTMP and the energy model. This joint training often leads to degraded sampling performance. We apply exponential moving average updates to the energy model, which empirically stabilizes training. Nonetheless, further investigation is required to improve robustness.

Third, our estimator incurs bias due to self-normalized importance sampling and the potential mismatch between the proposal and the true posterior distributions. This bias may compromise the accuracy of generator estimation, particularly when the proposal diverges significantly from the posterior. Although the bootstrapping scheme helps reduce this mismatch, its effectiveness depends on the intermediate energy estimator's quality, which may introduce additional bias.

\textbf{Comparison to LEAPS.} We compare our method to LEAPS~\citep{holderrieth2025leaps}, a neural sampler designed for discrete spaces. Our framework is more general in that it applies to arbitrary state spaces and Markov processes, including both continuous and discrete cases, whereas LEAPS is limited to discrete domains. Even when instantiated with a discrete sampler, EGM and LEAPS differ fundamentally. EGM relies on the prescribed conditional probability paths that mix the target distribution, while LEAPS is built on the escorted transport with a temperature annealing. In continuous domains, it has been shown that geometric annealing paths can lead to optimal drifts with high Lipschitz constants~\citep{matelearning}, which limits sampler performance; whether a similar issue arises in discrete spaces remains an open question. 

Additionally, EGM does not utilize an MCMC kernel (analogous to Langevin preconditioning in continuous settings), whereas LEAPS explicitly relies on this mechanism. We believe exploring both directions—leveraging and omitting Langevin preconditioning or MCMC kernels—offers promising avenues for future research.

\newpage
\section{Additional results}
\label{appx:additional_results}

\subsection{Additional qualitative results}
We provide additional qualitative results for the experiments in \cref{sec:experiments}. In \cref{fig:rbm_dw4_energy_histograms}, we plot the energy histogram of the GB-RBM and JointDW4 compared to the ground truth sample. The Gibbs sampler baseline, EGM, and Bootstrapping match the ground truth energy histogram. However, the Gibbs sampler on GB-RBM suffers from mode collapse as demonstrated in \cref{fig:multimodal_sample_plot}.

\begin{figure}[h]
    \vspace{-0.7em}
    \centering
    \begin{subfigure}[t]{0.49\textwidth}
      \centering
      \includegraphics[width=\linewidth]{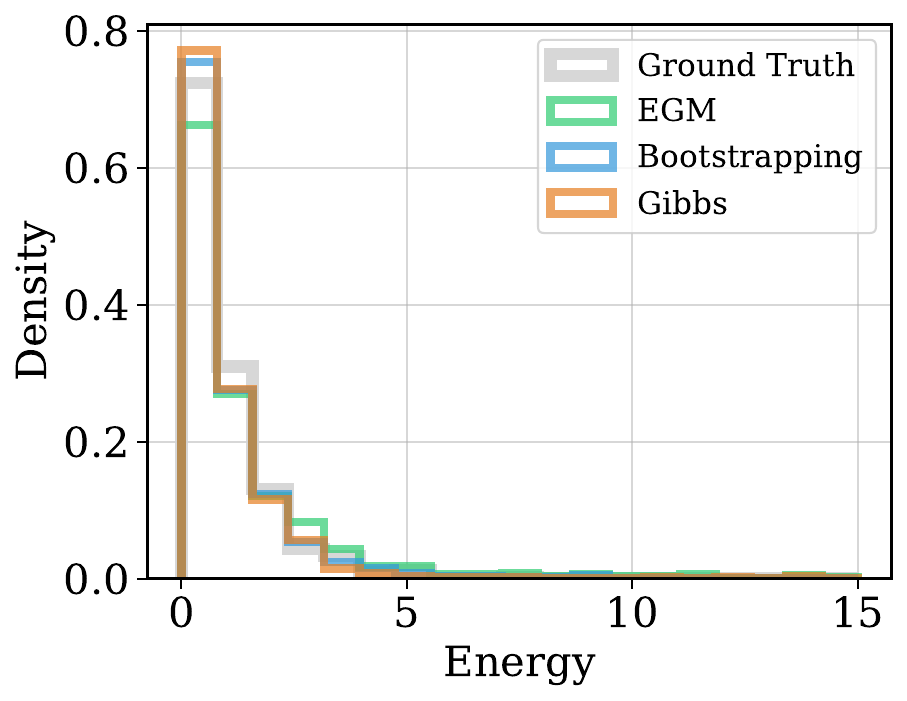}
    \end{subfigure}
    \begin{subfigure}[t]{0.49\textwidth}
      \centering
      \includegraphics[width=\linewidth]{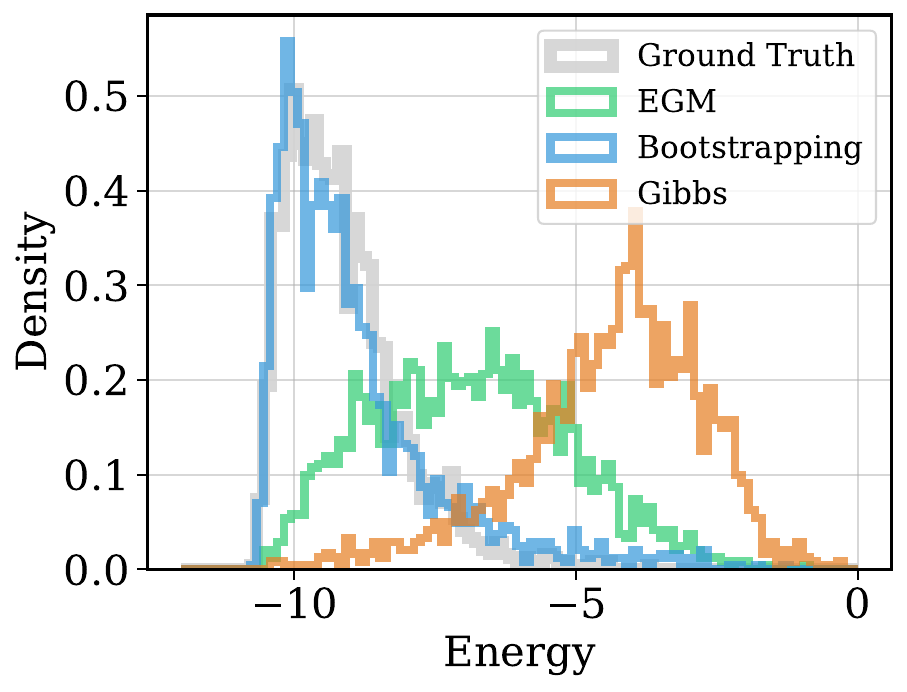}
    \end{subfigure}
    \caption{
        Energy histograms of various samplers on the GB-RBM (left) and JointDW4 (right).
    }
    \label{fig:rbm_dw4_energy_histograms}
\end{figure}

\subsection{Effective sample size of our MC estimator}
We present quantitative evidence of the bootstrapped estimator’s superiority over the naive EGM. Since the true marginal generator is intractable, we assess estimator quality via the effective sample size (ESS):
\begin{equation}
    ESS = \frac{\left(\sum_{i=1}^n \tilde w_i\right)^2}{\sum_{i=1}^n \tilde w_i^2} \frac{1}{n}
\end{equation}
where $\tilde w_i$ denotes the unnormalized importance weight with the $i$-th proposed sample and $n$ is the total number of MC samples. We report the \emph{normalized} ESS to indicate the fraction of effectively used samples. \cref{fig:ESS} shows the average normalized ESS over the course of training. The bootstrapped estimator maintains a significantly higher ESS during training, confirming its improved utilization of proposed samples compared to the naive EGM.

\begin{figure}[h]
    \centering
    \includegraphics[width=0.7\linewidth]{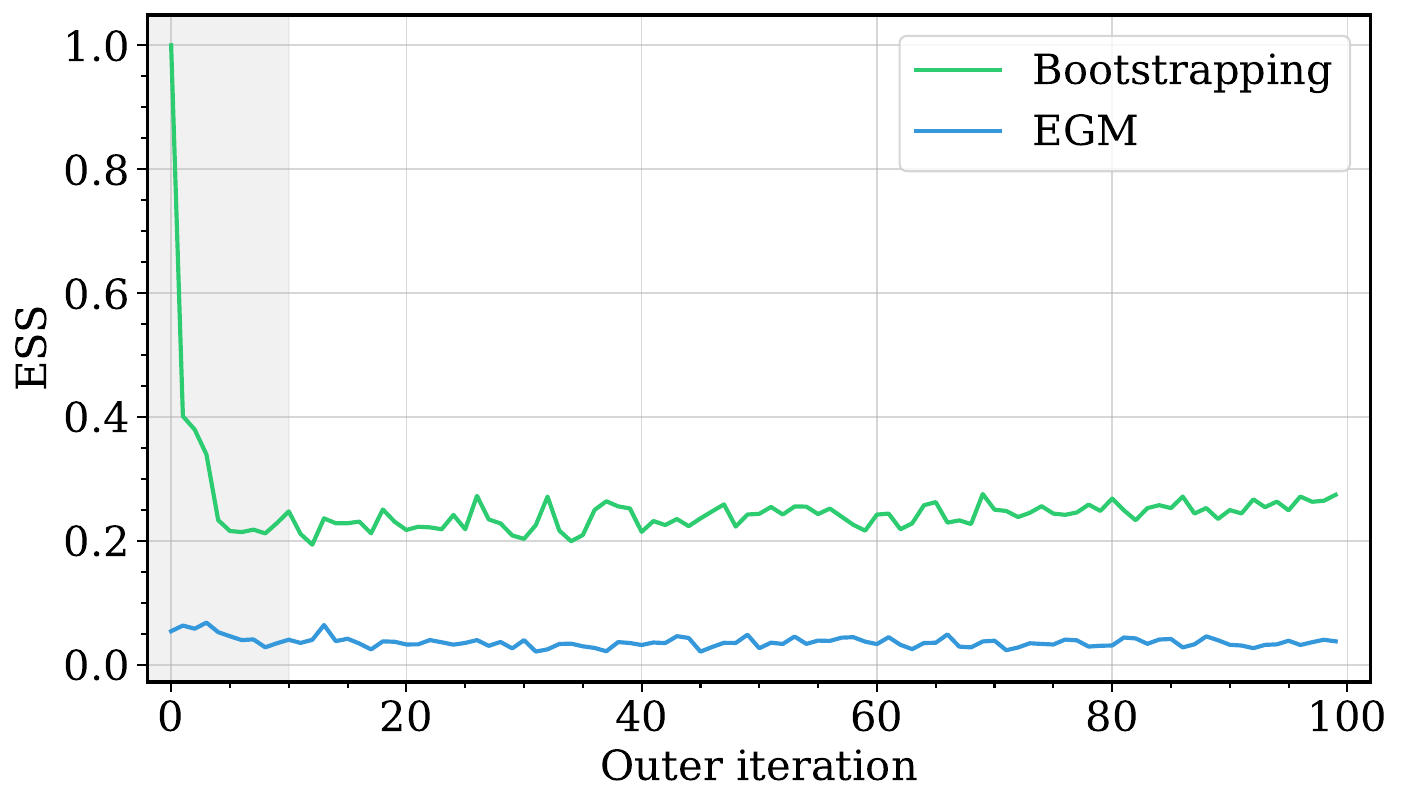}
    \caption{
        Effective sample size (ESS) of the Monte Carlo estimator during training on the Ising model ($10\times10$, $\beta=0.4$). ESS is evaluated at each regression point and then averaged across all points. In the early training phase (shaded region), the energy model is insufficiently trained, so ESS estimates are unreliable.
    }
    \label{fig:ESS}
    \vspace{-3.0em}
\end{figure}

\subsection{Additional quantitative results}

\paragraph{Comparison to additional baselines.}
Because our proposed method is applicable to \emph{any} state space and \emph{any} Markov process, it can also be directly applied to purely continuous settings, where most neural samplers have been developed. Although many of these existing methods are specialized for continuous domains, we include them as references to contextualize our results. For discrete settings, we additionally include specialized baselines to further demonstrate the performance of our approach. As emphasized earlier, prior neural samplers are typically designed for either continuous or discrete domains and often rely on domain-specific architectures to achieve efficiency, whereas our method imposes no such constraints.

For continuous tasks, we compare against iDEM and PIS on the ManyWell32 benchmark~\citep{sendera2024improved}. For discrete tasks, we compare against LEAPS on a $15 \times 15$ Ising model.

\begin{table}[h]
\centering
\begin{minipage}[t]{0.48\textwidth}
\caption{
Performance of \textit{EGM} compared to PIS and iDEM on ManyWell32. 
BS denotes bootstrapping.
}
\centering
\small
\resizebox{\textwidth}{!}{%
\begin{tabular}{@{}lccc@{}}
\toprule
Algorithm 
& $\mathcal{E}\text{-}\mathcal{W}_1$ $\downarrow$ 
& $x\text{-}\mathcal{W}_2$ $\downarrow$ 
& \# of modes \\
\midrule
\textbf{EGM + BS} & 3.30 & 7.07 & 4 \\
\textbf{PIS}      & 3.39 & 6.14 & 1 \\
\textbf{iDEM}     & 5.53 & 7.74 & 4 \\
\bottomrule
\end{tabular}
}
\label{tab:vs_conti_baselines}
\end{minipage}
\hspace{0.02\textwidth}
\begin{minipage}[t]{0.48\textwidth}
\caption{
Performance of \textit{EGM} compared to LEAPS on the Ising model ($15 \times 15$, $\beta=0.28$). 
BS denotes bootstrapping.
}
\centering
\small
\resizebox{0.9\textwidth}{!}{%
\begin{tabular}{@{}lcc@{}}
\toprule
Algorithm 
& $\mathcal{E}\text{-}\mathcal{W}_1$ $\downarrow$ 
& $M\text{-}\mathcal{W}_1$ $\downarrow$ \\
\midrule
\textbf{EGM}          & 0.89 & 0.06 \\
\textbf{EGM + BS}     & 0.65 & 0.04 \\
\textbf{LEAPS}        & 0.68 & 0.01 \\
\textbf{LEAPS + MCMC} & 0.49 & 0.01 \\
\bottomrule
\end{tabular}
}
\label{tab:vs_leaps}
\end{minipage}
\end{table}

The results show that \textit{EGM} matches or slightly outperforms existing diffusion-based samplers (iDEM, PIS) in continuous settings. In the discrete case, LEAPS performs comparably to \textit{EGM+BS} (bootstrapping). Notably, LEAPS combined with MCMC achieves slightly better performance than \textit{EGM+BS}, albeit at the cost of significantly more energy evaluations (45B vs.\ 85B).

\paragraph{Computational complexity analysis.}
For reference, Table~\ref{tab:complexity} summarizes the computational complexity of \textit{EGM} compared to a classical Gibbs sampler. We report the number of energy evaluations, wall-clock time per training epoch, and GPU memory usage on a $10 \times 10$ Ising model with $\beta = 0.4$. All experiments are conducted on an NVIDIA RTX 3090 GPU.

\begin{table}[!ht]
    \caption{Computational complexity of \textit{EGM} compared to the Gibbs sampler.}
    \label{tab:complexity}
    \centering
    \begin{tabular}{lccc}
        \toprule
        Method & \# Energy evals & Wall-clock time (1 epoch) & Memory footprint \\
        \midrule
        EGM & 200M & 0.16 s & 12 GB \\
        EGM + BS & 200M & 0.038 s & 12 GB \\
        Parallel Gibbs & 280K & -- & 1 GB \\
        \bottomrule
    \end{tabular}
\end{table}

\paragraph{Effect of bootstrapping time step.}
We also study how performance varies with the time gap $\epsilon = r - t > 0$ between two time steps $r$ and $t$. Table~\ref{tab:bootstrap-gap} reports the results on a $5 \times 5$ Ising model with $\beta = 0.4$.

\begin{table}[!ht]
    \caption{Effect of the bootstrapping time gap $\epsilon$ on performance.}
    \label{tab:bootstrap-gap}
    \centering
    \begin{tabular}{lcc}
        \toprule
        $\epsilon$ & $\mathcal{E}\text{-}\mathcal{W}_1$ $\downarrow$ & $M\text{-}\mathcal{W}_1$ $\downarrow$ \\
        \midrule
        0.01 & 2.84 & 0.32 \\
        0.02 & 0.96 & 0.09 \\
        0.05 & 0.84 & 0.08 \\
        0.10 & 0.58 & 0.08 \\
        0.20 & 1.90 & 0.09 \\
        0.50 & 3.73 & 0.20 \\
        \bottomrule
    \end{tabular}
\end{table}

As $\epsilon$ decreases, bootstrapping generally improves performance. However, if $\epsilon$ becomes too small, the scale of the conditional generator $F_{t|r}$ can grow rapidly, causing large fluctuations in loss magnitude across time steps. This leads to unstable neural network optimization due to inconsistent gradient scales. Consequently, choosing a \emph{moderately small} $\epsilon$ is critical for stable training.

\paragraph{EMA and training stability.}
Finally, we examine the effect of applying an exponential moving average (EMA) to the parameters of the energy model. EMA is a standard stabilization technique in reinforcement learning, used to mitigate the moving-target problem when regressing on a learned value function. A similar effect is observed here: applying EMA significantly stabilizes training when the generator is conditioned on a learned energy model. Table~\ref{tab:ema} shows results with and without EMA on a $10 \times 10$ Ising model with $\beta = 0.4$. All other hyperparameters are fixed.

\begin{table}[!ht]
    \caption{Effect of EMA on training stability.}
    \label{tab:ema}
    \centering
    \begin{tabular}{lcc}
        \toprule
         & $\mathcal{E}\text{-}\mathcal{W}_1$ $\downarrow$ & $M\text{-}\mathcal{W}_1$ $\downarrow$ \\
        \midrule
        With EMA & 2.51\std{0.16} & 0.24\std{0.01} \\
        Without EMA & 10.08\std{8.65} & 0.45\std{0.13} \\
        \bottomrule
    \end{tabular}
\end{table}

Using EMA substantially reduces the variance of both energy-based and magnetization metrics, confirming its effectiveness in improving training stability.

\end{document}